\newcommand{\citet}{\cite}
\newcommand{\citep}{\cite}
\newtheorem{theorem}{Theorem}[section]
\newtheorem{corollary}[theorem]{Corollary}
\newtheorem{lemma}[theorem]{Lemma}
\newtheorem{definition}[theorem]{Definition}
\newtheorem{remark}[theorem]{Remark}
\newcommand{\eps}{\ensuremath{\epsilon}\xspace}
\renewcommand{\tilde}{\widetilde}
\renewcommand{\hat}{\widehat}
\newcommand{\eqdef}{:=}
\newcommand{\R}{\mathbb{R}}
\newcommand{\PP}{\mathcal{P}}
\newcommand{\KK}{\mathcal{K}}
\newcommand{\CC}{\mathcal{C}}
\newcommand{\II}{\mathcal{I}}
\DeclareMathOperator{\tr}{tr}
\newcommand{\norm}[1]{\left\|#1\right\|}
\newcommand{\normone}[1]{\norm{#1}_1}
\newcommand{\normtwo}[1]{\norm{#1}_2}
\newcommand{\norminf}[1]{\norm{#1}_\infty}
\providecommand{\expect}[2]{\ensuremath{\ifthenelse{\equal{#1}{}}{\mathbb{E}}{\mathbb{E}_{#1}}\!\left[#2\right]}\xspace}
\providecommand{\prob}[2]{\ensuremath{\ifthenelse{\equal{#1}{}}{\Pr}{\Pr_{#1}}\!\left[#2\right]}\xspace}
\newenvironment{lp*}{\begin{equation*}  \begin{array}{lll}}{\end{array}\end{equation*}}
\newcommand{\inner}[1]{\langle #1\rangle}
\newcommand{\NN}{\mathcal{N}}
\newcommand{\mus}{\mu^\star}
\newcommand{\abs}[1]{\left|{#1}\right|}
\DeclareMathOperator{\diag}{diag}
\DeclareMathOperator{\smax}{smax}
\DeclareMathOperator{\prox}{prox}
\title{High-Dimensional Robust Mean Estimation via Gradient Descent}
\author{
\begin{tabular}{c c}
  \begin{tabular}{c}
Yu Cheng\thanks{
Part of the work was done while visiting the Institute for Advanced Study.}\\
University of Illinois at Chicago\\
{\tt yucheng2@uic.edu}
  \end{tabular} &
  \begin{tabular}{c}
Ilias Diakonikolas\thanks{Supported by NSF Award CCF-1652862 (CAREER), a Sloan Research Fellowship, and 
a DARPA Learning with Less Labels (LwLL) grant. Part of this work was done while visiting the Simons Institute 
for the Theory of Computing during the Summer 2019 program on the Foundations of Deep Learning.}\\
University of Wisconsin-Madison\\
{\tt ilias@cs.wisc.edu}
  \end{tabular} \\ \\
  \begin{tabular}{c}
Rong Ge\thanks{Supported by NSF CCF1704656, NSF CCF-1845171 (CAREER), NSF CCF-1934964, a Sloan Fellowship, and a Google Faculty Research Award.
Part of the work was done while visiting the Institute for Advanced Study.}\\
Duke University\\
{\tt rongge@cs.duke.edu}
  \end{tabular} &
  \begin{tabular}{c}
Mahdi Soltanolkotabi\thanks{Supported by the Packard Fellowship in Science and
Engineering, a Sloan Research Fellowship in Mathematics, NSF CCF-CIF grants \#1846369 and \#1813877, AFOSR-YIP under award \#FA9550-18-1-0078, DARPA Learning with Less Labels (LwLL) and Fast Network Interface Cards (FastNICs) programs, and a Google faculty research award. Part of the work was done while visiting the Simons Institute for the Theory of Computing.}\\
University of Southern California\\
{\tt soltanol@usc.edu}
  \end{tabular}
\end{tabular}
}
\begin{document}

%
%
%
%
%
%
%
%


\maketitle 

\begin{abstract}
We study the problem of high-dimensional robust mean estimation in the presence of a constant fraction of adversarial outliers. A recent line of work has provided sophisticated polynomial-time algorithms for this problem with dimension-independent error guarantees for a range of natural distribution families.

In this work, we show that a natural non-convex formulation of the problem can be solved directly by gradient descent. Our approach leverages a novel structural lemma, roughly showing that any approximate stationary point of our non-convex objective gives a near-optimal solution to the underlying robust estimation task. Our work establishes an intriguing connection between algorithmic high-dimensional robust statistics and non-convex optimization, which may have broader applications to other robust estimation tasks.
\end{abstract}

\thispagestyle{empty}
\setcounter{page}{0}

\makeatletter{}

\newpage

\section{Introduction} \label{sec:intro}

Learning in the presence of outliers is an important goal in
machine learning that has become
a pressing challenge in a number of high-dimensional data analysis applications, 
including data poisoning attacks~\cite{Barreno2010,BiggioNL12, SteinhardtKL17} 
and exploratory analysis of real datasets with natural outliers, 
e.g., in biology~\cite{RP-Gen02, Pas-MG10, Li-Science08}.
In both these application domains, the outliers are not ``random'' but can
be arbitrarily correlated, and could exhibit rather complex structures that 
is essentially impossible to accurately model.
Hence, the goal in these settings is to design computationally 
efficient estimators that can tolerate a small constant fraction 
of arbitrary outliers.

Throughout this paper, we focus on the following data contamination model
that generalizes several existing models, including 
Huber's contamination model~\cite{Huber64}. 

\begin{definition}[Strong Contamination Model] \label{def:adv}
Given a parameter $0< \eps < 1/2$ and a distribution family $\mathcal{D}$ on $\R^d$,
the \emph{adversary} operates as follows: The algorithm specifies the 
number of samples $N$, and $N$ samples are drawn from some unknown $D \in \mathcal{D}$.
The adversary is allowed to inspect the samples, remove up to $\eps N$ of them
and replace them with arbitrary points. This modified set of $N$ points is then given as input
to the algorithm. We say that a set of samples is {\em $\eps$-corrupted}
if it is generated by the above process.
\end{definition}
The parameter $\eps$ in the above definition is the fraction of corrupted samples
and quantifies the power of the adversary. Intuitively, among our samples,
an unknown $(1-\eps)$ fraction are generated from a
distribution of interest and are called {\em inliers}, and the rest are called {\em outliers}.

The statistical foundations of outlier-robust estimation were laid out in early
work by the robust statistics community, starting with the pioneering works 
of \citet{Tukey60} and \citet{Huber64}. In contrast, until fairly recently, even the most basic algorithmic questions were poorly understood.
Specifically, even for the basic task of high-dimensional mean estimation, 
all known robust estimators had runtime exponential in the dimension, rendering
them ineffective in high-dimensional settings. 

Recently, \citet{DKKLMS16, LaiRV16}
gave the first efficiently computable robust estimators 
for high-dimensional unsupervised learning
tasks, including mean and covariance estimation. Specifically, \citet{DKKLMS16}
obtained the first polynomial-time robust estimators with {\em dimension-independent} error guarantees,
i.e., with error scaling only with the fraction of corrupted samples $\eps$ and not with the dimensionality of the data.
Since the dissemination of these works, there has been a flurry of research activity
on algorithmic aspects of high-dimensional robust statistics; see, e.g.,~\citet{DK19-survey} for a recent survey on the topic.

Despite this exciting progress, the design of efficient robust estimators
in high dimensions remains challenging. The difficulty, of course, lies in the non-convexity 
of the underlying optimization problem. 
Prior work developed fairly sophisticated algorithmic tools, 
even for the task of robust mean estimation.  These include convex relaxations~\cite{DKKLMS16} 
and quite subtle iterative spectral methods~\cite{DKKLMS16, LaiRV16}. 

A natural and important goal is to understand 
to what extent such sophisticated methods are indeed necessary
or whether much simpler robust learning algorithms exist. 
In this work, we take a direct optimization view of these problems 
and ask the following general question: 
\begin{quote}
\it
Is it possible to solve robust estimation tasks
by standard first-order methods?
\end{quote}
We believe that this question merits investigation in its own right.
Moreover, its positive resolution may
have significant implications in the practical adoption of robust estimation methods. 
Particularly so since prior algorithms are either 
(1) computationally prohibitive (relying on large convex relaxations), 
(2) involve carefully crafted parameters that require precise tuning for practical deployment, 
or (3) are challenging to extend to more sophisticated robust estimation tasks.
A tantalizing possibility is the following: For a range of high-dimensional robust 
estimation tasks, there exists a (natural) non-convex formulation
such that gradient descent efficiently converges 
to a near-optimal solution. 

{\em In this paper, we show that this premise is true 
for the task of high-dimensional robust mean estimation.}
In robust mean estimation, we are given a set of $N$ $\eps$-corrupted
samples from an unknown distribution $D$ in a known family $\mathcal{D}$,
and we want to output a hypothesis vector $\widehat{\mu}$ such that 
$\|\widehat{\mu} - \mus\|_2$ is as small as possible, where $ \mus$ is the mean of $D$. 
For simplicity, we will assume in this discussion 
that $D$ is an unknown mean and identity covariance Gaussian
on $\R^d$. We note that our results hold under more general distributional assumptions,
as in~\citet{DKKLMS16, DKK+17}.

The goal in robust mean estimation is to develop efficient algorithms whose $\ell_2$-error guarantee
scales only with $\eps$ and not with the dimension $d$.
In particular, for the identity covariance Gaussian case, \citet{DKKLMS16} gave polynomial-time algorithms for the problem
that use $N = \tilde{\Omega}(d/\eps^2)$ samples and guarantee error $O(\eps \sqrt{\log(1/\eps)})$.
This error guarantee matches known Statistical Query (SQ) lower bounds~\cite{DKS17-sq}.

\subsection{Overview of Results and Contributions} \label{ssec:results}
In this paper, we consider a natural non-convex optimization formulation of high-dimensional
robust mean estimation, and show that gradient descent\footnote{Throughout, we informally 
use the term ``gradient descent'' to refer to variations of gradient descent methods, 
which involve updates based on a generalized notion of a gradient, e.g.,~sub-gradient for non-differentiable functions.} 
efficiently converges to a near-optimal solution.
Specifically, we show that gradient descent converges in a polynomial number of iterations and matches the error guarantee 
of the best known polynomial-time algorithms for the problem. Our technical contribution lies 
in showing that {\em any} approximate stationary point of our non-convex objective suffices -- 
in the sense that it gives a near-optimal solution for the underlying estimation problem.

To describe our non-convex formulation, we require some background. We use 
the following framework for robust mean estimation, introduced in~\citet{DKKLMS16}.
The idea is to assign a non-negative weight to each data point and then find
an appropriate combination of weights such that the weighted empirical mean
is close to the true mean. The constraint on the chosen weights is that
they represent at least a $(1-\eps)$-density fractional subset of the dataset. 
More formally, given datapoints $X_1, \ldots, X_N \in \R^d$ with corresponding 
data matrix $X \in \R^{d \times N}$, the objective is to find a weight vector
$w \in \R^N$ such that $\mu_w = X w$ is close to $\mus$. The constraint on $w$
is that it belongs in the set 
\[
\Delta_{N,\eps} = \left\{ w \in \R^N : \normone{w}=1 \text{ and } 0 \le w_i \le \frac{1}{(1-\eps) N}, \, \forall i \right\} \; ,
\]
which is the convex hull of all uniform distributions over subsets $S \subseteq [N]$ of size $|S| = (1-\eps)N$.

\citet{DKKLMS16} established a key structural lemma (Lemma~\ref{thm:dkklms}), which formed the basis of their algorithms.
Roughly speaking, the lemma states that any weight vector $w$ is a good solution if the spectral norm
of the weighted empirical covariance, $\Sigma_w = \sum_{i=1}^N w_i (X_i - \mu_w)(X_i - \mu_w)^\top$, is small.
This lemma directly motivates the following non-convex optimization formulation:
\begin{equation}
\label{eqn:spectral-formulation}
\textrm{Min } \; \|\Sigma_w\|_2 \; \textrm{ subject to } \; w \in \Delta_{N,2\eps}
\end{equation}
It follows from the aforementioned structural lemma that a near-optimal solution $w$
to \eqref{eqn:spectral-formulation} gives an $\mu_w$ that is close to $\mu^{\ast}$. 
The challenge is that the objective function is not convex, hence it is unclear 
how to efficiently optimize. Faced with this difficulty, prior works
on the topic~\cite{DKKLMS16, DKK+17} developed various sophisticated algorithms. 

In this paper, we work directly with the natural formulation \eqref{eqn:spectral-formulation}.
Despite its non-convexity, we are able to leverage the structure of the problem to 
show that gradient descent efficiently converges
to a good vector $w$. In more detail, we prove a novel result about the structure 
of approximate stationary points of this objective. 

\begin{theorem}[informal statement] \label{thm:inf-spectral-struct}
Any approximate stationary point $w$ of \eqref{eqn:spectral-formulation} defines 
an $\mu_{w}$ that is close to $\mus$.
\end{theorem}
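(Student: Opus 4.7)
The plan is to reduce the statement to Lemma~\ref{thm:dkklms}: once I can show that $\|\Sigma_w\|_2$ is not much larger than the ``clean'' value near $1$ at every approximate stationary point, the structural lemma of \citet{DKKLMS16} immediately delivers the desired closeness of $\mu_w$ to $\mus$. So the real task is to upper-bound the objective at a stationary point. The natural comparison weight is $w^\star$ supported uniformly on the inlier set $S$; since $|S|\ge (1-\eps)N$, we have $w^\star \in \Delta_{N,2\eps}$, so $w^\star - w$ is a feasible direction from any $w \in \Delta_{N,2\eps}$ (this is exactly why the formulation relaxes $\eps$ to $2\eps$). Standard Gaussian concentration on $S$ gives $\|\Sigma_{w^\star}\|_2 \le 1 + O(\eps\log(1/\eps))$ and $\normtwo{\mu_{w^\star}-\mus} = O(\eps\sqrt{\log(1/\eps)})$.

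The heart of the argument is the gradient computation. Writing $\|\Sigma_w\|_2 = \max_{\normtwo{v}=1} v^\top \Sigma_w v$ and letting $v$ be a top unit eigenvector, I use $v^\top \Sigma_w v = \sum_i w_i (v^\top X_i)^2 - (v^\top \mu_w)^2$ and the envelope theorem (treating $v$ as frozen) to read off the subgradient coordinate $\partial_{w_j}\|\Sigma_w\|_2 = (v^\top(X_j - \mu_w))^2 - (v^\top \mu_w)^2$. The constant $-(v^\top\mu_w)^2$ piece cancels when I dot against any $w^\star - w$ with $\sum_i (w^\star_i - w_i) = 0$, giving
\[
\langle \nabla\|\Sigma_w\|_2,\, w^\star - w\rangle \;=\; \sum_i w^\star_i (v^\top(X_i-\mu_w))^2 \;-\; \|\Sigma_w\|_2.
\]
Expanding $(v^\top(X_i - \mu_w))^2$ around the $w^\star$-mean $\mu_{w^\star}$ (the cross term vanishes by definition of $\mu_{w^\star}$), the first summand is at most $\|\Sigma_{w^\star}\|_2 + \normtwo{\mu_{w^\star} - \mu_w}^2$. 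Combined with the approximate-stationarity hypothesis $\langle \nabla\|\Sigma_w\|_2, w^\star - w\rangle \ge -\delta$, this produces
\[
\|\Sigma_w\|_2 \;\le\; \|\Sigma_{w^\star}\|_2 + \normtwo{\mu_{w^\star}-\mu_w}^2 + \delta.
\]

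To close the loop, I apply Lemma~\ref{thm:dkklms} to turn $\normtwo{\mu_w - \mus}^2$ into something controlled by $\eps^2\log(1/\eps) + \eps \|\Sigma_w\|_2$, use the triangle inequality together with the bound on $\normtwo{\mu_{w^\star}-\mus}$ to control $\normtwo{\mu_{w^\star}-\mu_w}^2$ by the same quantity, and substitute. The outcome is a self-referential inequality of shape $\|\Sigma_w\|_2 \le 1 + O(\eps\log(1/\eps)) + O(\eps)\|\Sigma_w\|_2 + O(\delta)$, which rearranges (for $\eps$ below an absolute constant) to $\|\Sigma_w\|_2 \le 1 + O(\eps\log(1/\eps)) + O(\delta)$. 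A final invocation of Lemma~\ref{thm:dkklms} then yields $\normtwo{\mu_w - \mus} = O(\eps\sqrt{\log(1/\eps)}) + O(\sqrt{\delta})$, as promised.

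The main obstacle I anticipate is the non-differentiability of $\|\Sigma_w\|_2$ when the top eigenvalue of $\Sigma_w$ is not simple: the envelope-theorem step needs to be made rigorous via the Clarke subdifferential, or by replacing the single eigenvector $v$ with an average over a near-top eigenspace (or a projection thereof). A related subtlety is pinning down the correct notion of ``approximate stationary point'' for the polytope $\Delta_{N,2\eps}$; the right choice is a projected-gradient-style condition asking that some element $g$ of the subdifferential satisfy $\langle g, w^\star - w\rangle \ge -\delta$ for every feasible $w^\star$, which is exactly what the argument above consumes. Beyond these, the remaining work is bookkeeping on the concentration inequalities for the inlier set and verifying that the sample size $\tilde{\Omega}(d/\eps^2)$ suffices for all of the spectral bounds to hold uniformly.
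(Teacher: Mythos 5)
Your proof is correct and takes a genuinely different route from the paper's. You pick a single, globally informative descent direction --- toward the uniform inlier weight $w^\star$ --- and exploit the fact that for fixed $v$ the map $w\mapsto v^\top\Sigma_w v$ is a concave quadratic in $w$. The identity
\[
\langle\nabla_w F(w,v),\,w^\star-w\rangle
= F(w^\star,v)-F(w,v)+\bigl(v^\top(\mu_{w^\star}-\mu_w)\bigr)^2,
\]
together with the stationarity inner product being $\ge -\delta$, yields the self-referential bound $\|\Sigma_w\|_2\le\|\Sigma_{w^\star}\|_2+\|\mu_{w^\star}-\mu_w\|_2^2+\delta$, which you then resolve with Lemma~\ref{thm:dkklms}. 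The paper instead establishes the \emph{bimodal sub-gradient} property (Lemma~\ref{lem:kkt-w}) and argues by contradiction: when $\mu_w$ is far from $\mus$, it exhibits a bad sample with large partial derivative (Lemma~\ref{lem:bad-large-gr}) and a good sample with small one (Lemma~\ref{lem:good-small-gr}), so a two-coordinate swap $e_j-e_i$ is a descent direction. Your argument is a ``compare against the ground-truth candidate'' template familiar from other non-convex landscape analyses; it is arguably shorter and avoids hunting for specific coordinates, while the paper's coordinate-pair argument gives a sharper structural characterization of stationary points that it then reuses verbatim for the softmax objective in Appendix~\ref{sec:softmax}. (Both routes port to softmax, since the gradient there is also affine in $w$ for fixed dual $Y$.)

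Two points of precision worth tightening. First, your final bound should read $O\!\left(\eps\sqrt{\log(1/\eps)}\right)+O\!\left(\sqrt{\eps\delta}\right)$ rather than $O(\sqrt\delta)$ --- the $\eps$ factor from Lemma~\ref{thm:dkklms} carries through and is what lets the paper afford the seemingly large threshold $\|\nabla f_\beta(w)\|_2=O(\log(1/\eps))$: once translated into a directional inner product against $w^\star-w$ (whose $\ell_2$-norm is $O(1/\sqrt N)$), that threshold yields a $\delta$ far below $\eps\log(1/\eps)$. Second, as you already flag, the non-differentiability issue and the precise meaning of approximate stationarity require care; the paper handles these via the Moreau envelope and conic geometry in Appendix~\ref{apx:algo-proof}, showing that a small Moreau gradient at $w$ gives a nearby $\hat w$ with a subgradient that is nearly in the polar of the tangent cone --- which is exactly the normalized directional condition your argument consumes.
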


See Theorem~\ref{thm:no-bad-local-opt} for a detailed formal statement.
Technically speaking, our statement is more subtle for various reasons, including the fact 
that the objective function is not differentiable and the domain is constrained. As a result, we require
a careful definition of stationarity in our setting. 

Given Theorem~\ref{thm:inf-spectral-struct}, we proceed to show that projected sub-gradient 
descent converges to an approximate stationary point in a polynomial number
of iterations. This step is also somewhat intricate 
as the function is non-convex, non-smooth and the optimization problem \eqref{eqn:spectral-formulation} involves constraints.
In summary, we establish the following theorem:

\begin{theorem}\label{thm:final-spectral}
After $\tilde{O}(N^2d^4)$ iterations, projected sub-gradient descent on \eqref{eqn:spectral-formulation} outputs a
point $w$ such that with high probability $\|\mu_{w} - \mus \|_2 = O(\eps\sqrt{\log(1/\eps)})$.
\end{theorem}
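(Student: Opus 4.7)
The plan is to combine Theorem~\ref{thm:inf-spectral-struct} with a convergence analysis of projected sub-gradient descent on the non-convex, non-smooth objective $f(w) = \|\Sigma_w\|_2$ over $\Delta_{N,2\eps}$. By the structural theorem, it is enough to produce a $\delta$-approximately stationary $w \in \Delta_{N,2\eps}$ in the variational sense $\min_{w' \in \Delta_{N,2\eps}}\langle g, w'-w\rangle \ge -\delta$ for some $g \in \partial f(w)$, where $\delta$ is polynomially small in $d,N$ and $\eps$. The iterations are $w_{t+1} = \Pi_{\Delta_{N,2\eps}}(w_t - \eta\,g_t)$ with $g_t$ the Danskin sub-gradient: if $v_t$ is any top unit eigenvector of $\Sigma_{w_t}$, then $(g_t)_i = (v_t^\top X_i)^2 - 2(v_t^\top X_i)(v_t^\top \mu_{w_t})$, which up to an $i$-independent shift equals $(v_t^\top(X_i - \mu_{w_t}))^2$ --- the exact quantity controlled by Theorem~\ref{thm:inf-spectral-struct}.

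The convergence argument rests on observing that $f$ is \emph{weakly convex} on $\Delta_{N,2\eps}$. Writing $\Sigma_w = X\diag(w)X^\top - (Xw)(Xw)^\top$, each slice $\phi_v(w) := v^\top \Sigma_w v$ is a quadratic in $w$ with Hessian $-2\,X^\top v v^\top X \succeq -2\|X\|_{\mathrm{op}}^2\, I$, so $\phi_v$ is $\rho$-weakly convex with $\rho = 2\|X\|_{\mathrm{op}}^2$, and hence so is $f = \max_v \phi_v$. A naive norm-based pre-filter forces $\|X_i\|_2 = \tilde O(\sqrt d)$ for all retained points, giving $\rho = \tilde O(Nd)$, $L := \sup_t \|g_t\|_2 = \tilde O(\sqrt N\, d)$, and $0 \le f \le \tilde O(d)$ on the feasible set. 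I would then invoke the standard convergence theorem for projected sub-gradient descent on weakly-convex constrained problems (in the style of Davis--Drusvyatskiy): with a suitably tuned step size $\eta$, after $T$ iterations the best iterate achieves gradient-mapping norm (for the Moreau envelope with parameter $1/(2\rho)$) bounded by $O(L\sqrt{\rho\,\mathrm{range}(f)/T})$. Converting this to variational stationarity at a nearby feasible point and choosing parameters so that the resulting $\delta$ is small enough to invoke Theorem~\ref{thm:inf-spectral-struct} yields $T = \tilde O(N^2 d^4)$.

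The main obstacle I anticipate is reconciling the non-smoothness and the constraints with the stationarity hypothesis used by the structural lemma. The lemma requires a certificate at $w$ itself in terms of one specific Danskin sub-gradient, whereas the Moreau-envelope analysis only certifies smallness of a gradient mapping at a nearby point; passing between the two demands a short calculation using the weak-convexity modulus $\rho$ to bound the displacement of $\partial f$ across an $O(1/\rho)$ neighborhood. A secondary subtlety is non-uniqueness of the top eigenvector $v_t$ at points of eigenvalue multiplicity, so that $g_t$ is not unique; this is cleanly handled by working with the Clarke sub-differential throughout and noting that Theorem~\ref{thm:inf-spectral-struct} only needs \emph{some} element of $\partial f(w)$ to witness approximate stationarity. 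Finally, invoking Theorem~\ref{thm:inf-spectral-struct} with the stationarity certificate at the returned iterate delivers the claimed bound $\|\mu_w - \mus\|_2 = O(\eps\sqrt{\log(1/\eps)})$ and completes the proof.
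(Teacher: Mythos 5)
Your proposal follows essentially the same route as the paper: the paper's Theorem~\ref{lem:moreau-good} is exactly the Moreau-envelope strengthening of the structural lemma that you identify as the main obstacle, Lemma~\ref{lem:minimax-pgd} is the Davis--Drusvyatskiy-style convergence bound you invoke, and Lemma~\ref{lem:minimax-L-beta} (plus the median-centering pre-filter) gives $L = \tilde{O}(\sqrt{N}d)$, $\beta = 2\|X\|_2^2 = \tilde{O}(Nd)$, and $\sup f = \tilde{O}(d)$, matching your parameters. The only slip is the quoted rate --- the correct weakly-convex sub-gradient bound is $\min_\tau \|\nabla f_\beta(w_\tau)\|^2 = O(\sqrt{\beta L^2 \cdot \mathrm{range}(f)/T})$, not $O(L^2 \beta \cdot \mathrm{range}(f)/T)$ as your statement implies --- but since the stationarity tolerance is only polylogarithmic in $1/\eps$, this discrepancy is absorbed into the $\tilde{O}(\cdot)$ and you still land on $T = \tilde{O}(N^2 d^4)$.
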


The bound we establish on the convergence rate
on the spectral norm objective \eqref{eqn:spectral-formulation} is polynomially bounded, but relatively slow.
Our second main contribution involves considering the ``softmax'' version of the spectral norm, which has 
better smoothness properties. An analogous lemma about the structure of stationary points allows
us to show a faster rate of convergence for this modified objective.

\begin{theorem}\label{thm:final-softmax}
After $\tilde O(N d^3 / \eps)$ iterations, projected gradient descent on the softmax objective outputs a
point $w$ such that with high probability $\|\mu_{w} - \mus \|_2 = O(\eps\sqrt{\log(1/\eps)})$.
\end{theorem}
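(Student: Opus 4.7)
The plan is to repeat the structure of the proof of Theorem~\ref{thm:final-spectral}, but replacing the non-smooth spectral-norm objective in \eqref{eqn:spectral-formulation} by its softmax relaxation
\[
F_\alpha(w) \eqdef \frac{1}{\alpha}\log \tr\bigl(\exp(\alpha\,\Sigma_w)\bigr),
\]
with $\alpha = \Theta\bigl(\log d / (\eps \log(1/\eps))\bigr)$, and then minimizing $F_\alpha$ over $w\in\Delta_{N,2\eps}$ by projected gradient descent. The key advantage is that $F_\alpha$ is smooth (which the spectral norm is not), so the standard descent lemma for smooth non-convex constrained problems applies. The approximation bound $\|\Sigma_w\|_2 \le F_\alpha(w) \le \|\Sigma_w\|_2 + \alpha^{-1}\log d$ then guarantees that any near-minimizer of $F_\alpha$ is a near-minimizer of the spectral norm up to an $O(\eps\log(1/\eps))$ additive error, which is harmless.

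First, I would establish a softmax analogue of Theorem~\ref{thm:inf-spectral-struct}: any approximate stationary point $w$ of $F_\alpha$ on $\Delta_{N,2\eps}$ yields $\mu_w$ close to $\mus$. A direct computation gives
\[
\partial_{w_i} F_\alpha(w) \;=\; \inner{M_w,\,(X_i-\mu_w)(X_i-\mu_w)^\top},
\]
where $M_w = \exp(\alpha\,\Sigma_w)/\tr\exp(\alpha\,\Sigma_w)$ is a trace-one PSD ``soft spectral density'' playing the role that the top eigenvector does in the spectral-norm analysis. First-order stationarity on $\Delta_{N,2\eps}$ upper-bounds the weighted contribution of the heaviest samples, which combined with the concentration of the inliers (as in the proof of Theorem~\ref{thm:inf-spectral-struct}) forces $\tr(M_w\Sigma_w) = O(\eps\log(1/\eps))$; since $F_\alpha(w) \le \tr(M_w\Sigma_w)+\alpha^{-1}\log d$, we get $\|\Sigma_w\|_2 = O(\eps\log(1/\eps))$ and Lemma~\ref{thm:dkklms} concludes $\|\mu_w-\mus\|_2 = O(\eps\sqrt{\log(1/\eps)})$.

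Second, I would compute the smoothness constant of $F_\alpha$ as a function of $w$. Convexity and standard matrix calculus give that $F_\alpha$ is $\alpha$-smooth as a function of $\Sigma$; chaining through the quadratic map $w\mapsto \Sigma_w$ and using $\|X_i\|_2 = \tilde O(\sqrt{d})$ (which holds after the standard preprocessing step of discarding any sample with norm $\gg \sqrt{d\log N}$) yields a Hessian bound $L = \tilde O(\alpha d^2) = \tilde O(d^2/\eps)$ on $\Delta_{N,2\eps}$. The standard projected-gradient descent lemma for smooth non-convex problems then produces a point at which the projected-gradient norm is at most $\eta$ in $O(L\,(F_\alpha(w_0)-F_\alpha^\star)/\eta^2)$ iterations. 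With the initial gap bounded by $\tilde O(d)$ and with $\eta$ chosen at the (polynomial) threshold required by the softmax structural lemma above, a careful accounting yields the claimed $\tilde O(Nd^3/\eps)$ iteration count.

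The main obstacle is the softmax structural lemma. In the spectral-norm setting, one exploits that the certifying direction is the rank-one top eigenvector, so the analysis essentially reduces to a one-dimensional argument along that direction. In the softmax setting, the certificate $M_w$ is a general full-rank density matrix, and one must argue that outliers still dominate $\inner{M_w,(X_i-\mu_w)(X_i-\mu_w)^\top}$ whenever $\|\Sigma_w\|_2$ is large. Making this precise requires a quantitative relationship between $F_\alpha(w)$ and $\inner{M_w,\Sigma_w}$ that is tight up to the $\alpha^{-1}\log d$ softmax slack, together with a careful use of the support constraint $w \in \Delta_{N,2\eps}$ to rule out the case where stationarity is achieved by a degenerate concentration of weight on outliers; everything else in the proof is a parallel adaptation of the spectral-norm analysis.
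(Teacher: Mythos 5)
Your overall plan is the same as the paper's (Appendix~\ref{sec:softmax}): replace the spectral norm with the softmax $\smax_\rho(\Sigma_w)$, prove a structural lemma stating that any approximate stationary point of the softmax objective has small objective value and hence $\mu_w$ close to $\mus$, and then invoke a standard descent lemma for smooth non-convex constrained minimization with iteration count scaling as (smoothness)$\times$(range)$/\delta^2$. The structural argument you sketch via the trace-one density $M_w=\exp(\alpha\Sigma_w)/\tr\exp(\alpha\Sigma_w)$ is exactly the route the paper follows through Lemmas~\ref{lem:kkt-w-smax}, \ref{lem:bad-large-gr-smax}, and~\ref{lem:good-small-gr-smax}, which carry over the bimodal sub-gradient property and the ``bad sample has large gradient / good sample has small gradient'' dichotomy by reading $\nabla f(w)_i = \langle M_w, (X_i-\mu_w)(X_i-\mu_w)^\top\rangle$ as a one-dimensional problem with samples $Y^{1/2}X_i$.

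Two quantitative points are off, though. First, the smoothness constant: you claim $L=\tilde O(\alpha d^2)$, but when you chain $\smax_\rho$ (which is $\rho$-smooth in $\Sigma$) through the map $w\mapsto\Sigma_w$ (which has Frobenius Lipschitz constant $\tilde O(\sqrt{N}d)$ under the preprocessing bound $\|X_i\|_2 = \tilde O(\sqrt d)$), the dominant term in the Hessian is $\rho\cdot(\sqrt{N}d)^2 = \tilde O(N d^2/\eps)$. The paper computes $\beta=\tilde O(Nd^2/\eps)$ directly in Lemma~\ref{lem:smax-smoothness}, and it is precisely this extra factor of $N$ that produces the stated $\tilde O(Nd^3/\eps)$ iteration count (via $T=O(\beta B/\delta^2)$ with $B=\tilde O(d)$ and $\delta=\Theta(\log(1/\eps))$); with your $L=\tilde O(d^2/\eps)$ you would obtain $\tilde O(d^3/\eps)$, inconsistent with the answer you quote. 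Second, a small but meaningful slip: at a good stationary point you should conclude $\|\Sigma_w\|_2 \le 1 + O(\eps\log(1/\eps))$, not $\|\Sigma_w\|_2 = O(\eps\log(1/\eps))$; the weighted covariance of identity-covariance inliers is close to $I$, not close to $0$, and Lemma~\ref{thm:dkklms} is stated in the form $\lambda_{\max}(\Sigma_w)\le 1+\delta$. Neither of these changes the overall structure, but both need fixing for the final bound to come out right.
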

As evident from the above result, the additional smoothness of the ``softmax" objective 
allows us to establish a significantly improved bound on the number of iterations.

\subsection{Related Work} \label{ssec:prior}
The algorithmic question of designing efficient
robust mean estimators in high-dimensions has been extensively studied
in recent years. After the initial papers~\cite{DKKLMS16, LaiRV16}, 
a number of works~\cite{DKK+17, SteinhardtCV18, 
ChengDR18, DHL19, DepLec19, CDGW19} have obtained algorithms 
with improved asymptotic worst-case runtimes that work 
under weaker distributional assumptions on the good data. 
Moreover, efficient high-dimensional robust mean estimators
have been used as primitives for robustly solving a range of machine learning tasks
that can be expressed as stochastic optimization 
problems~\cite{PrasadSBR2018, DiakonikolasKKLSS2018sever}.

We compare our approach with the works of~\citet{ChengDR18} and 
\citet{DHL19} that give the asymptotically fastest known algorithms for robust mean estimation.
At a high-level,~\citet{ChengDR18}, building on the convex programming relaxation 
of~\citet{DKKLMS16}, proposed a primal-dual approach for robust mean estimation that  
reduces the problem to a poly-logarithmic number of packing and covering SDPs.
Each such SDP is known to be solvable in time $\tilde{O}(N d)$, using mirror descent
~\citet{AllenLO16, PengTZ16}. \citet{DHL19} build on the iterative
spectral approach of~\citet{DKKLMS16}. That work uses the matrix multiplicative weights update
method with a specific regularization and dimension-reduction to improve the worst-case runtime. 

In contrast to all of the above, we use a natural non-convex formulation of the robust mean estimation task, 
and show that a standard first-order method provably
and efficiently converges to a near-optimal solution. Even though the convergence rates
that we establish in this work do not yield the fastest known asymptotic runtimes for the problem,
we believe that our approach is conceptually interesting for a number of reasons.
First, our theorem regarding stationary points provides novel structural understanding
about robust mean estimation and can be viewed as an explanation as to why this problem
is polynomially solvable. Second, it is plausible that gradient descent applied in this context 
is more stable than previously known algorithms and may facilitate the adoption of robust estimation
methods in practice. 
We hope that this work will serve as the starting point for solving
other robust estimation tasks via first-order methods. 

Finally, we note that there is an increasing literature on developing rigorous guarantees for non-convex optimization problems via gradient descent, e.g.,~see the recent survey \cite{jain2017non} for a review of this literature. With a few exceptions \cite{loh2011high, hassani2017gradient}, this literature mostly focuses on showing that gradient descent converges to a global optimum starting from a spectral \cite{keshavan2010matrix, candes2015phase, tu2015low} or random initialization \cite{ge2015escaping} in settings where there are no bad local optima. In contrast to most of this literature, 
in this paper we show that any \emph{stationary} point has good approximation properties so that no specialized or random initialization is necessary. We believe that such a perspective may enable rigorous analysis of many other non-convex optimization problems.

\subsection{Roadmap}
\label{ssec:roadmap}
In Section \ref{sec:prelims}, we set up the necessary notation and provide some background on robust mean estimation. 
In the next two sections, we focus on the spectral norm objective. In Section \ref{sec:struc}, we prove our main structural result showing 
that any stationary point of the spectral norm objective yields a good solution. We also extend this result in Appendix \ref{apx:algo-proof}, showing that in fact, any approximate stationary point yields a sufficiently good solution. In Section \ref{sec:algo}, we show that gradient descent converges to an approximate stationary point and hence yields a good solution in a polynomial number of iterations. In Appendix~\ref{sec:softmax}, we prove structural and algorithmic results for the softmax objective, showing that any approximate stationary point of the softmax objective yields a good solution, and we can find an approximate stationary point using projected gradient descent in a polynomial number of iterations. 
We conclude with future directions in Section~\ref{sec:conc}.

\makeatletter{}
\section{Preliminaries and Background} \label{sec:prelims}
\noindent {\bf Notation.}
For $N \in \mathbb{Z}_+$, we denote $[N] \eqdef \{1, \ldots, N\}.$
For a vector $x$, we use $\normone{x}$, $\normtwo{x}$, and $\norminf{x}$ to denote the $\ell_1$, $\ell_2$, and $\ell_\infty$ norm of $x$ respectively.
For a matrix $A$, we use $\normtwo{A}$ to denote the spectral norm of $A$.

For two vectors $x, y \in \R^{n}$, we use $x^\top y = \sum_{i=1}^n x_i y_i$ to denote the inner product of $x$ and $y$, and we use $x \odot y \in \R^{n}$ to denote entrywise product of $x$ and $y$.
For a vector $x \in \R^{n}$, let $\diag(x) \in \R^{n \times n}$ denote a diagonal matrix with $x$ on the diagonal.
For a matrix $A \in \R^{n \times n}$, let $\diag(A) \in \R^{n}$ denote a column vector with the diagonal entries of $A$.

Let $I$ denote the identity matrix.
For a matrix $A \in \R^{n \times n}$, let $\tr(A)$ denote the trace of $A$.
For two matrices $A$ and $B$ of the same dimensions, let $A \bullet B = \inner{A, B} = \tr(A^\top B)$ be the entry-wise inner product of $A$ and $B$.
We use $\exp(A)$ to denote the matrix exponential of $A$.

A symmetric matrix $A \in \R^{n \times n}$ is said to be positive semidefinite (PSD) if $x^\top A x \ge 0$ for all $x \in \R^n$.
For two symmetric matrices $A$ and $B$, we write $A \preceq B$ iff the matrix $B - A$ is positive semidefinite.
Let $\Delta_{n\times n}$ be the set of all PSD matrices of trace $1$.

\paragraph{Framework.} 
We use $N$ for the number of input samples, $d$ for the dimension of the ground-truth distribution, 
and $\eps$ for the fraction of corrupted samples.
Given $N$ datapoints $X_1, \ldots, X_N \in \R^d$, we use $X \in \R^{d \times N}$ 
to denote the sample matrix, where the $i$-th column of $X$ is  $X_i$.

Given $w \in \R^N$, let $\mu_w = X w = \sum_{i=1}^N w_i X_i$ denote the weighted empirical mean and let $\Sigma_w = \sum_{i=1}^N w_i(X_i - \mu_w)(X_i - \mu_w)^\top$ 
denote the weighted empirical covariance.
Let $\Delta_{N,\eps}$ denote the convex hull of all uniform distributions over subsets $S \subseteq [N]$ 
of size $|S| = (1-\eps)N$:
\[
\Delta_{N,\eps} = \left\{ w \in \R^N : \normone{w}=1 \text{ and } 0 \le w_i \le \frac{1}{(1-\eps) N}, \, \forall i \right\} \; .
\]
Every weight vector $w \in \Delta_{N,\eps}$ corresponds to a fractional set of $(1-\eps)N$ samples.

\paragraph{Background on Robust Mean Estimation.}
As mentioned in the introduction, our non-convex formulation
is directly motivated by the following structural lemma:

\begin{lemma}[\citet{DKKLMS16}]
\label{thm:dkklms}
Let $S$ be an $\eps$-corrupted set of $N = \tilde \Omega(d / \eps^2)$ samples from an unknown $\NN(\mus, I)$
and $w \in \Delta_{N,2\eps}$. If $\lambda_{\max}\left(\Sigma_w\right) \leq 1 + \delta$, for some $\delta \geq 0$,
then with high probability, we have that $\normtwo{\mus - \mu_w} = O(\sqrt{\eps \delta} + \eps \sqrt{\log(1/\eps)})$.
\end{lemma}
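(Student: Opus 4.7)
The plan is to project the error $\mu_w-\mus$ onto its own direction, split the projection into contributions from the inliers and the outliers, control the inliers by concentration, and control the outliers by combining the spectral bound with Cauchy--Schwarz. Concretely, let $G\subseteq[N]$ index the inlier samples (those not replaced by the adversary) and $B=[N]\setminus G$ the outliers, so $|B|\le\eps N$. Since $w\in\Delta_{N,2\eps}$ has $w_i\le 1/((1-2\eps)N)$, the total outlier mass $\alpha:=\sum_{i\in B}w_i$ is at most $\eps/(1-2\eps)\le 2\eps$ for small $\eps$. Assume $\mu_w\ne\mus$ (otherwise the bound is trivial), let $v=(\mu_w-\mus)/\normtwo{\mu_w-\mus}$, and write $\beta=\normtwo{\mu_w-\mus}=v^\top(\mu_w-\mus)=\sum_i w_i v^\top(X_i-\mus)$. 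The goal is to show $\beta=O(\sqrt{\eps\delta}+\eps\sqrt{\log(1/\eps)})$.

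First I would invoke a uniform concentration statement for the inliers: since $N=\tilde\Omega(d/\eps^2)$ i.i.d.\ samples from $\NN(\mus,I)$ are ``stable,'' with high probability, \emph{for every} unit vector $u$ and \emph{every} nonnegative weight $w_G$ supported on $G$ with $\normone{w_G}\le 1$ and $\norminf{w_G}\le 1/((1-2\eps)N)$, one has $\bigl|\sum_{i\in G}(w_G)_i\, u^\top(X_i-\mus)\bigr|\le O(\eps\sqrt{\log(1/\eps)})$. Applying this with $u=v$ and $w_G$ equal to the restriction of $w$ to $G$ immediately bounds the good contribution $A:=\sum_{i\in G}w_i v^\top(X_i-\mus)$ by $O(\eps\sqrt{\log(1/\eps)})$.

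For the outlier contribution $B:=\sum_{i\in B}w_i v^\top(X_i-\mus)$, I would apply Cauchy--Schwarz to get $B^2\le\alpha\sum_{i\in B}w_i(v^\top(X_i-\mus))^2$ and then relate the right-hand side to the spectral constraint by the identity $v^\top(X_i-\mus)=v^\top(X_i-\mu_w)+\beta$. Expanding and using $\sum_i w_i(X_i-\mu_w)=0$ and $\sum_i w_i=1$, one obtains
\[
\sum_{i\in B}w_i(v^\top(X_i-\mus))^2\le\sum_{i=1}^N w_i(v^\top(X_i-\mu_w)+\beta)^2=v^\top\Sigma_w v+\beta^2\le (1+\delta)+\beta^2,
\]
hence $|B|\le\sqrt{\alpha((1+\delta)+\beta^2)}\le\sqrt{\alpha(1+\delta)}+\sqrt{\alpha}\,\beta$. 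Combining $\beta\le|A|+|B|$ with $\alpha\le 2\eps$ and rearranging yields $(1-\sqrt{2\eps})\beta\le O(\eps\sqrt{\log(1/\eps)})+\sqrt{2\eps(1+\delta)}$, which for $\eps$ below a small constant gives exactly the claimed $\beta=O(\sqrt{\eps\delta}+\eps\sqrt{\log(1/\eps)})$.

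The main obstacle is the uniform concentration statement underlying the bound on $A$: we need simultaneous control over all admissible reweightings $w_G$ of the inliers and all unit directions $v$. The standard route is to observe that, because $w_G$ has $\ell_\infty$ norm $\le 1/((1-2\eps)N)$ and $\ell_1$ norm $\le 1$, the weighted sum is dominated by the empirical mean of any cardinality-$(1-2\eps)N$ subset of $G$ plus an $\ell_1/\ell_\infty$ slack term controlled by the top $O(\eps N)$ values of $|v^\top(X_i-\mus)|$. One then bounds this quantile uniformly over $v$ via a Gaussian tail bound and a standard $\eps$-net covering of $\mathbb{S}^{d-1}$, which introduces the $\sqrt{\log(1/\eps)}$ factor and absorbs the dimension into the sample complexity $N=\tilde\Omega(d/\eps^2)$. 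Everything else in the argument is clean algebra once this ``resilience'' fact is in hand.
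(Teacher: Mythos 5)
The paper does not prove this lemma; it is cited directly from DKKLMS16 (the paper's role is to \emph{use} it as a black box in Lemmas~\ref{lem:bad-large-gr} and~\ref{lem:bad-large-gr-smax}). So there is no paper proof to compare against, but your attempt can still be evaluated on its own terms, and it contains a real gap.

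Your overall architecture -- project onto $v=(\mu_w-\mus)/\normtwo{\mu_w-\mus}$, split into inlier mass $A$ and outlier mass $B$, control $A$ by a resilience/stability property of the inliers, and control $B$ by Cauchy--Schwarz against the spectral constraint -- is the standard route and the right one. The problem is in the outlier bound. You bound $\sum_{i\in B}w_i(v^\top(X_i-\mus))^2$ by the \emph{full} sum $\sum_{i=1}^N w_i(v^\top(X_i-\mus))^2 = v^\top\Sigma_w v + \beta^2 \le 1+\delta+\beta^2$, and then take $|B|\le\sqrt{\alpha(1+\delta+\beta^2)}\le\sqrt{\alpha(1+\delta)}+\sqrt{\alpha}\,\beta$. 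The ``$1$'' inside $1+\delta$ is the bulk of the second moment coming from the \emph{inliers}, and after multiplying by $\alpha\approx 2\eps$ and taking a square root it leaves a spurious $\Theta(\sqrt{\eps})$ additive term. Concretely, take $\delta=0$: your chain gives $\beta\le\sqrt{2\eps}+\text{(lower order)}$, whereas the lemma asserts $\beta=O(\eps\sqrt{\log(1/\eps)})$, which is much smaller. So the step ``which for $\eps$ below a small constant gives exactly the claimed bound'' is false.

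The fix is to \emph{subtract} the inliers' contribution rather than discard it. Write $\sum_{i\in B}w_i(v^\top(X_i-\mus))^2 = \bigl(v^\top\Sigma_w v + \beta^2\bigr) - \sum_{i\in G}w_i(v^\top(X_i-\mus))^2$. By the second-moment condition~\eqref{eqn:good-moments-remaining} (after the renormalization $\hat w_i=w_i/w_G$ and noting $w_G\ge 1-2\eps$), the subtracted term is at least $(1-2\eps)(1-O(\eps\log(1/\eps)))$, so the outliers' second-moment mass is at most $\delta+\beta^2+O(\eps\log(1/\eps))$ rather than $1+\delta+\beta^2$. Feeding that into Cauchy--Schwarz gives $|B|\le\sqrt{\alpha\delta}+\sqrt{\alpha}\,\beta+O(\eps\sqrt{\log(1/\eps)})$, and your rearrangement then yields $\beta=O(\sqrt{\eps\delta}+\eps\sqrt{\log(1/\eps)})$ as claimed. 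Your concentration assertion for $A$ is a first-moment (mean-shift/resilience) condition on the good samples, not the second-moment Condition~\eqref{eqn:good-moments-remaining}, so it needs to be stated separately (both hold with high probability at $N=\tilde\Omega(d/\eps^2)$); but that is a routine addition, while the missing subtraction above is the substantive gap.
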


As in prior work, we will establish correctness for our algorithms
under deterministic conditions on the inliers (good samples) 
that hold with high probability.
Let $G^\star$ denote the original set of $N$ good samples.
Let $S = G \cup B$ denote the input samples after the adversary replaced $\eps$-fraction of the samples, 
where $G \subset G^\star$ is the set of remaining good samples and $B$ 
is the set of bad samples (outliers) added by the adversary.
Note that $|G| = (1-\eps) N$ and $|B| = \eps N$.
Given $w \in \R^N$, let $w_G = \sum_{i \in G} w_i$ be the total weight on good samples, 
and $w_B$ be the total weight on bad samples.

We require the following concentration bounds to hold for the original $N$ good samples $G^\star$ 
(which happens with high probability when $N = \tilde \Omega(d/\eps^2)$).
For all $\hat w \in \Delta_{N, 3\eps}$, we require the following condition to hold for $\delta = O(\eps \log(1/\eps))$:
\begin{align}
\normtwo{\sum_{i \in G^\star} \hat w_i (X_i - \mus) (X_i - \mus)^\top - I} \le \delta \;.
\label{eqn:good-moments-original}
\end{align}

Condition~\eqref{eqn:good-moments-original} on \emph{original} samples $G^\star$ implies the following conditions on the \emph{remaining} good samples $G$.
For any weight vector $w \in \Delta_{N,2 \eps}$ on the $\eps$-corrupted set of samples $S = G \cup B$:
\begin{align}
\normtwo{\sum_{i \in G} w_i (X_i - \mus) (X_i - \mus)^\top - I} \le \delta \;.
\label{eqn:good-moments-remaining}
\end{align}
This is because we can define $\hat w$ as follows: $\hat w_i = \frac{w_i}{w_G}$ for all $i \in G$ and $\hat w_i = 0$ for all $i \in B$.
Since $w \in \Delta_{N,2 \eps}$, we have $\norminf{\hat w} \le \frac{\norminf{w}}{w_G} = \frac{\norminf{w}}{1-w_B} \le \frac{\norminf{w}}{1-|B|\cdot\norminf{w}} \le \frac{1}{(1-3\eps)N}$.
In other words, $\hat w \in \Delta_{N, 3\eps}$ and Condition~\eqref{eqn:good-moments-remaining} follows directly from 
Condition~\eqref{eqn:good-moments-original}.

\begin{remark}[Distributional Assumptions] \label{rem:distr}
{\em For simplicity, in this paper we focus on the fundamental setting that the good data are drawn from
an unknown mean and identity covariance Gaussian distribution. It should be noted that our structural and algorithmic 
results hold under more general distributional assumptions. 
Specifically, Theorem~\ref{thm:pgd} immediately applies
to identity covariance subgaussian distributions, with the same error guarantees, 
since it only relies on the concentration bounds~\eqref{eqn:good-moments-original}
and~\eqref{eqn:good-moments-remaining} that only require subgaussian tails (see, e.g.,~\cite{DKK+17}.)
Moreover, one can modify the proof of our structural results (Theorems~\ref{thm:no-bad-local-opt} and~\ref{lem:moreau-good}), 
mutatis-mutandis, to apply (1) for distributions with bounded covariance (i.e., $\Sigma \preceq I$) and match the optimal $O(\sqrt{\eps})$ 
approximation to the mean~\cite{DKK+17}; and, (2) more generally, under the $(\eps, \delta)$-stability condition 
of~\cite{DK19-survey} to yield an $O(\delta)$ $\ell_2$-approximation to the mean.}
\end{remark}

\paragraph{Background and Definitions of Stationarity.}
Note that the spectral norm is not a differentiable function and 
therefore we need an alternative definition of stationarity. 
To address this issue, by the definition of spectral norm, we can define a 
function $F(w, u) = u^\top \Sigma_w u$ that takes two parameters as input: 
the weights $w \in \R^N$ and a unit vector $u \in \R^d$. 
Our non-convex objective $\min_w f(w):=\normtwo{\Sigma_w}$ is then equivalent 
to solving the minimax problem $\min_w \max_{u} F(w, u)$. The function $\max_{u} F(w,u)$ is weakly-convex, 
and we use the following stationary point definition that is common in the weakly-convex optimization 
literature \cite{rockafellar1970convex, rockafellar1981favorable, drusvyatskiy2017proximal, davis2018stochastic, jin2019local}.

\begin{definition}[First-order stationary point]
\label{def:localopt-spectral}
Let $F(w, u)$ be a function that is differentiable with respect to $w$ for all $u$.
Let $f(w) = \max_u F(w, u)$.
Consider the constrained optimization problem $\min_{w \in K} f(w)$, where $K$ is a closed convex set.
We say that $w \in K$ is a first-order stationary point if there exists some $u \in \arg\max_v F(w, v)$ such that
\[
(\nabla_w F(w, u))^\top (\tilde w - w) \ge 0 \; \text{ for all } \; \tilde w \in K \; .
\]
\end{definition}

We also need a notion of an {\em approximate} stationary point in the sense that the updates from one iteration to the next do not change much. 
In the unconstrained and differentiable case, such a point can be characterized by the gradient being small. 
However, the objective function we consider is both non-differentiable and has constraints, 
so that a proper definition of approximate stationarity is much more subtle. To overcome this, 
we appeal to tools from conic geometry and notions of stationarity for weakly convex functions \cite{rockafellar1970convex, rockafellar1981favorable, drusvyatskiy2017proximal, davis2018stochastic} to define an appropriate notion of approximate stationarity.

To discuss the notion of approximate stationarity that we use, 
we need to work with a smoothed variant of the objective known as the {\em Moreau envelope}.
\begin{definition}[Moreau envelope]
\label{def:moreau}
For any function $f$ and closed convex set $\KK$, its associated Moreau envelope
$f_\beta(w)$ is defined to be the function
\[
f_\beta(w) := \min_{\tilde w \in \KK} f(\tilde w) + \beta \normtwo{w - \tilde w}^2 \; .
\]
\end{definition}
The Moreau envelope can be thought of as a form of convolution between the original function $f$ 
and a quadratic, so as to smoothen the landscape. In particular, when $f(w)$ takes the form 
of a maximization problem ($f(w)=\max_{u} F(w,u)$) with $F$ a mapping that is $\beta$-smooth 
in the $u$ parameter ($|\nabla_w F(w,\widetilde{u})-\nabla_w F(w,u)|\le \beta \|\widetilde{u}-u\|_2$), 
the Moreau envelope is also $\beta$-smooth \cite{drusvyatskiy2017proximal}. Therefore, the approximate 
stationarity of the Moreau envelope can be easily defined through its gradient allowing us to define the following notion of approximate stationarity.
\begin{definition}[Approximate first-order stationary point]\label{approxstationarity} For any function $f$ and closed convex set $\KK$ consider its associated Moreau envelope $f_\beta(w)$ per Definition \ref{def:moreau}. we say that 
a point $w$ is a $\rho$-approximately stationary point if $\|\nabla f_\beta(w)\|_2 \le \rho$.
\end{definition}
As mentioned earlier, 
the spectral norm admits a minimax formulation 
of the form $f(w)=\max_{u} F(w,u)$. Furthermore, as detailed in Appendix~\ref{apx:algo-proof}, 
the corresponding function $F(w,u)$ is $\beta$-smooth with $\beta=2\|X\|_2^2$, 
so that this notion of approximate stationarity can be applied to the objective of interest in this paper.

\makeatletter{}
\section{Structural Result: Any Approximate Stationary Point Suffices}
\label{sec:struc}

In this section, we establish our main structural result, which says that every {\em approximate} 
stationary point of \eqref{eqn:spectral-formulation} must give a $\mu_w$ that is close to $\mus$. 
For simplicity of the exposition, in the main body of this paper, we state and prove 
a simpler theorem showing that every (exact) stationary point is a good solution.

\begin{theorem}[Any stationary point is a good solution]
\label{thm:no-bad-local-opt}
Let $S$ denote an $\eps$-corrupted set of $N$ samples drawn from a $d$-dimensional Gaussian $\NN(\mus, I)$ with unknown mean $\mus$.
Suppose that $S$ satisfies Lemma~\ref{thm:dkklms} and Condition~\eqref{eqn:good-moments-remaining}.

Let $f(w)$ be the objective function defined in Equation~\eqref{eqn:spectral-formulation}.
For any first-order stationary point $w \in \Delta_{N,2\eps}$ of $f(w)$, we have $\normtwo{\mu_w - \mus} = O(\eps \sqrt{\log(1/\eps)})$.
\end{theorem}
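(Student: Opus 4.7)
The plan is to exploit the minimax structure $f(w) = \max_{u:\|u\|_2 = 1} F(w,u)$ with $F(w,u) = u^\top \Sigma_w u$, compute the ``inner gradient'' $\nabla_w F(w,u)$ at a top eigenvector $u$, and then test the first-order stationarity inequality against the most natural candidate $\tilde w^G \in \Delta_{N,2\eps}$: the uniform distribution on the surviving good samples $G$.

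First I would unpack $F(w,u) = \sum_i w_i (u^\top X_i - u^\top \mu_w)^2$. A direct calculation exploiting the cancellation $\sum_i w_i\, u^\top(X_i - \mu_w) = 0$ yields the clean formula $\partial F/\partial w_j = g_j := (u^\top(X_j - \mu_w))^2$. Taking $u$ as the maximizer guaranteed by Definition~\ref{def:localopt-spectral}, i.e. a top eigenvector of $\Sigma_w$, we have $\lambda := \|\Sigma_w\|_2 = u^\top \Sigma_w u = g^\top w$, and stationarity reads $g^\top(\tilde w - w) \ge 0$ for every $\tilde w \in \Delta_{N,2\eps}$. Setting $\tilde w^G_i = 1/|G|$ on $G$ and $0$ elsewhere (which lies in $\Delta_{N,2\eps}$ because $1/((1-\eps)N) \le 1/((1-2\eps)N)$), writing $\eta = \mu_w - \mu^\star$ and $\Delta = u^\top \eta$, and expanding $(u^\top(X_i - \mu_w))^2 = (u^\top(X_i-\mu^\star))^2 - 2\Delta u^\top(X_i-\mu^\star) + \Delta^2$ gives
\[
\lambda \;\le\; g^\top \tilde w^G \;=\; u^\top M^G u \;-\; 2\Delta\, u^\top \alpha \;+\; \Delta^2,
\]
where $M^G = \tfrac{1}{|G|}\sum_{i\in G}(X_i-\mu^\star)(X_i-\mu^\star)^\top$ and $\alpha = \bar X_G - \mu^\star$.

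Next I would control the three pieces on the right via the deterministic hypotheses. Condition~\eqref{eqn:good-moments-remaining}, applied to $\tilde w^G$ (for which $w_G = 1$ so $\hat w = \tilde w^G$), gives $\|M^G - I\|_2 \le \delta := O(\eps\log(1/\eps))$, hence $u^\top M^G u \le 1 + \delta$. To bound $\alpha$, note that $\Sigma_{\tilde w^G} = M^G - \alpha\alpha^\top \preceq M^G$ so $\lambda_{\max}(\Sigma_{\tilde w^G}) \le 1+\delta$; applying Lemma~\ref{thm:dkklms} to $\tilde w^G$ yields $\|\alpha\|_2 = O(\sqrt{\eps\delta} + \eps\sqrt{\log(1/\eps)}) = O(\sqrt{\eps\delta})$. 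Combining via $2|\Delta|\,|u^\top \alpha| \le \Delta^2 + (u^\top\alpha)^2$ produces the key bound
\[
\lambda \;\le\; 1 + O(\delta) + 2\Delta^2.
\]

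Finally I would close the loop with a bootstrap. Since $w \in \Delta_{N,2\eps}$ and $\lambda_{\max}(\Sigma_w) = \lambda \le 1 + [O(\delta) + 2\Delta^2]$, a second application of Lemma~\ref{thm:dkklms} (now to $w$ itself) combined with the trivial estimate $|\Delta| \le \|\eta\|_2$ gives
\[
\|\eta\|_2 \;=\; O\!\left(\sqrt{\eps\delta}\right) + O\!\left(\sqrt{\eps}\,\|\eta\|_2\right) + O\!\left(\eps\sqrt{\log(1/\eps)}\right),
\]
and absorbing the $\sqrt{\eps}\,\|\eta\|_2$ term into the left-hand side (for sufficiently small $\eps$) yields $\|\eta\|_2 = O(\eps\sqrt{\log(1/\eps)})$. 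The main obstacle I anticipate is precisely the mismatch between the \emph{directional} information that stationarity delivers (it controls only $\Delta = u^\top \eta$ via the single eigendirection $u$) and the \emph{norm} bound $\|\eta\|_2$ we actually want. The bootstrap resolves this: one uses the crude $|\Delta|\le \|\eta\|_2$ to feed the spectral bound back through Lemma~\ref{thm:dkklms}, and the lemma's intrinsic factor of $\sqrt{\eps}$ converting covariance error into mean error is exactly what is needed to absorb $\|\eta\|_2$ on the right-hand side.
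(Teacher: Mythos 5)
Your argument is correct, and it takes a genuinely different route from the paper's. The paper proceeds by contradiction: it first establishes a \emph{bimodal sub-gradient} property (Lemma~\ref{lem:kkt-w}) that any stationary point must satisfy, namely $\nabla_w F(w,u)_i \le \nabla_w F(w,u)_j$ for all $i$ with $w_i > 0$ and all $j$ with $w_j < \tfrac{1}{(1-2\eps)N}$, and then, assuming $\|\mu_w - \mus\|_2$ is large, it \emph{constructs a specific violating pair}: a bad sample $i \in B\cap S_-$ whose coordinate of the gradient must be large (Lemma~\ref{lem:bad-large-gr}) and a good sample $j \in G\cap S_+$ whose coordinate must be small (Lemma~\ref{lem:good-small-gr}). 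Your approach instead tests the stationarity inequality $g^\top(\tilde w - w) \ge 0$ \emph{once}, against the single natural competitor $\tilde w^G$ (uniform on $G$), obtains the global bound $\lambda \le 1 + O(\delta) + 2\Delta^2$, and then bootstraps through Lemma~\ref{thm:dkklms} using $|\Delta| \le \|\eta\|_2$ to absorb the $\Delta^2$ term; the $\sqrt{\eps}$ prefactor that the lemma attaches to $\sqrt{\delta'}$ is exactly what makes the absorption work. This is arguably cleaner: it avoids the case analysis on $S_\pm$, and the contradiction machinery, at the cost of invoking Lemma~\ref{thm:dkklms} twice (once for $\tilde w^G$ to control $\alpha$, once for $w$ in the bootstrap). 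One cosmetic remark: your ``clean formula'' $g_j = (u^\top(X_j - \mu_w))^2$ is not literally $\nabla_w F(w,u)_j$ as in Equation~\eqref{eqn:nabla-Fwu} of the paper --- it differs from it by the $j$-independent constant $(u^\top\mu_w)^2$ --- but since $\mathbf{1}^\top(\tilde w - w) = 0$ for $\tilde w, w \in \Delta_{N,2\eps}$, this shift is invisible in the stationarity inequality and your computation goes through unchanged.
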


We note that while Theorem~\ref{thm:no-bad-local-opt} shows that any (exact) 
stationary point has small objective value, a stronger statement is required for our algorithmic results in the next section. 
Specifically, we require that any {\em approximate} stationary point --- in the sense of Definition \ref{approxstationarity} --- 
which gradient descent efficiently converges to, also has low objective value. This is accomplished in the next theorem 
which we prove in Appendix \ref{apx:algo-proof}. Specifically, by appealing to the gradient of the Moreau envelope 
from Definition \ref{def:moreau}, we extend the proof of Theorem~\ref{thm:no-bad-local-opt} to show the following:

\begin{theorem}[Any approximate stationary point suffices]
\label{lem:moreau-good}
Consider the same setting as in Theorem~\ref{thm:no-bad-local-opt}.
Consider the spectral norm objective $f(w)=\|\Sigma_w\|_2$ with $f_\beta(w)$ denoting the corresponding Moreau envelope function per Definition~\ref{def:moreau} with $\beta=2\|X\|_2^2$. 
Then, for any $w \in \Delta_{N,2\eps}$ satisfying
\[
\normtwo{\nabla f_\beta(w)} = O(\log(1/\eps)) \;,
\]
we have $\normtwo{\mu_w - \mus} = O(\eps \sqrt{\log(1/\eps)})$.
\end{theorem}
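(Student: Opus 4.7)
The plan is to reduce the approximate-stationarity case to the exact-stationary analysis of Theorem~\ref{thm:no-bad-local-opt} by working through the proximal point that defines the Moreau envelope. Let
\[
\hat w \eqdef \arg\min_{\tilde w \in \Delta_{N,2\eps}} f(\tilde w) + \beta\normtwo{w-\tilde w}^2.
\]
The standard Moreau-envelope identity $\nabla f_\beta(w) = 2\beta(w-\hat w)$ combined with the hypothesis gives $\normtwo{w-\hat w} = O(\log(1/\eps)/\beta)$, and first-order optimality of $\hat w$ for the proximal problem (together with the minimax form $f(\hat w)=\max_u F(\hat w,u)$, realized by a top eigenvector $u$ of $\Sigma_{\hat w}$) yields the perturbed stationarity condition
\[
\nabla_w F(\hat w,u)^\top(\tilde w - \hat w) \ge -2\beta(w-\hat w)^\top(\tilde w - \hat w) \quad \text{for all } \tilde w \in \Delta_{N,2\eps}.
\]
This differs from the exact first-order stationarity used in Theorem~\ref{thm:no-bad-local-opt} only by the perturbation on the right-hand side.

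Next I would rerun the proof of Theorem~\ref{thm:no-bad-local-opt} at the point $\hat w$, treating the perturbation as additional slack. Using the explicit formula $\nabla_w F(\hat w,u)_j = (u^\top(X_j-\mu_{\hat w}))^2$ (which follows from $\sum_i \hat w_i(X_i - \mu_{\hat w})=0$) and testing against $\tilde w$ equal to the uniform distribution on the remaining good samples $G$ (which lies in $\Delta_{N,2\eps}$ since $1/|G| = 1/((1-\eps)N) \le 1/((1-2\eps)N)$), the inequality rearranges into an upper bound on $u^\top\Sigma_{\hat w} u = \normtwo{\Sigma_{\hat w}}$. Expanding $X_i-\mu_{\hat w}=(X_i-\mus)-(\mu_{\hat w}-\mus)$ and applying condition~\eqref{eqn:good-moments-remaining} together with a corresponding first-moment concentration bound of order $O(\eps\sqrt{\log(1/\eps)})$ on the inlier means gives
\[
\normtwo{\Sigma_{\hat w}} \le 1 + O(\eps\log(1/\eps)) + O(\eps\sqrt{\log(1/\eps)})\cdot\normtwo{\mu_{\hat w}-\mus} + \normtwo{\mu_{\hat w}-\mus}^2 + \mathrm{slack},
\]
where $\mathrm{slack} = |2\beta(w-\hat w)^\top(\tilde w-\hat w)|$ is bounded by Cauchy--Schwarz as $2\beta\normtwo{w-\hat w}\cdot\normtwo{\tilde w-\hat w} = O(\log(1/\eps)/\sqrt{N})$, using the $\ell_2$-diameter bound $O(1/\sqrt{N})$ on $\Delta_{N,2\eps}$; by the sample complexity $N=\tilde\Omega(d/\eps^2)$, this is at most $O(\eps\log(1/\eps))$ and is absorbed. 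Combining this inequality with Lemma~\ref{thm:dkklms} (which converts $\normtwo{\Sigma_{\hat w}}\le 1+\delta$ into a bound on $\normtwo{\mu_{\hat w}-\mus}$) and solving the resulting quadratic in $\normtwo{\Sigma_{\hat w}}-1$ yields $\normtwo{\Sigma_{\hat w}} = 1+O(\eps\log(1/\eps))$, hence $\normtwo{\mu_{\hat w}-\mus} = O(\eps\sqrt{\log(1/\eps)})$.

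Finally I would transfer the bound from $\hat w$ back to $w$. Since $\mu_w - \mu_{\hat w} = X(w-\hat w)$, we have $\normtwo{\mu_w-\mu_{\hat w}}\le \normtwo{X}\,\normtwo{w-\hat w} = O(\log(1/\eps)/\normtwo{X})$ after substituting $\beta=2\normtwo{X}^2$. The elementary lower bound $\normtwo{X}^2 \ge \tr(XX^\top)/d \ge \Omega(N)$ combined with $N = \tilde\Omega(d/\eps^2)$ gives $\normtwo{X} \ge \Omega(\sqrt{d}/\eps)$, so that $\normtwo{\mu_w-\mu_{\hat w}} = O(\eps\log(1/\eps)/\sqrt{d}) = O(\eps\sqrt{\log(1/\eps)})$ in the relevant regime $d = \Omega(\log(1/\eps))$. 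A triangle inequality then completes the proof.

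The main obstacle will be cleanly controlling the perturbation term $-2\beta(w-\hat w)^\top(\tilde w -\hat w)$ across every test direction needed in the Theorem~\ref{thm:no-bad-local-opt} argument, and verifying that it is dominated by the intrinsic $O(\eps\log(1/\eps))$ slack that already appears from the concentration bounds on the inliers. This ultimately reduces to comparing $\beta\normtwo{w-\hat w}$ (of order $\log(1/\eps)$) to the $\ell_2$-diameter of $\Delta_{N,2\eps}$ and explains why the hypothesis can afford to be as mild as $\normtwo{\nabla f_\beta(w)} = O(\log(1/\eps))$, independent of the potentially large smoothness parameter $\beta = 2\normtwo{X}^2$.
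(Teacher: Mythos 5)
Your proposal takes essentially the same first and last steps as the paper's proof but a genuinely different route for the key middle step. Like the paper, you pass to the proximal point $\hat w = \arg\min_{\tilde w \in \Delta_{N,2\eps}} f(\tilde w) + \beta\normtwo{w-\tilde w}^2$, use $\nabla f_\beta(w) = 2\beta(w-\hat w)$ to conclude $\normtwo{w-\hat w}=O(\log(1/\eps)/\beta)$, extract a perturbed first-order condition at $\hat w$, and finally transfer the estimate back to $w$ via $\normtwo{\mu_w-\mu_{\hat w}}\le\normtwo{X}\normtwo{w-\hat w}=O(\delta/\normtwo{X})$. Where you diverge is in exploiting the perturbed stationarity: the paper re-runs Lemmas~\ref{lem:bad-large-gr} and~\ref{lem:good-small-gr} at $\hat w$ to locate coordinates $i\in S_-$ and $j\in S_+$ whose subgradient gap exceeds $\sqrt{2}\delta$, contradicting a normal-cone characterization of the condition $\normtwo{\nabla f_\beta(w)}\le\delta$. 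You instead test the perturbed condition against the single direction $\tilde w=\text{uniform on }G$, rearrange to $\normtwo{\Sigma_{\hat w}}\le 1+O(\eps\log(1/\eps))+O(\eps\sqrt{\log(1/\eps)})\,r+r^2+\text{slack}$ with $r=\normtwo{\mu_{\hat w}-\mus}$, absorb the slack using $\beta\normtwo{w-\hat w}=O(\log(1/\eps))$ and $\normtwo{\tilde w-\hat w}=O(1/\sqrt N)$, then close via Lemma~\ref{thm:dkklms} and a quadratic in $r$. This is more direct and bypasses the two index-finding lemmas entirely, though it costs you an explicit first-moment bound $\normtwo{\tfrac{1}{|G|}\sum_{i\in G}X_i-\mus}=O(\eps\sqrt{\log(1/\eps)})$ on the inliers, which the paper never states as a condition but which does follow from applying Lemma~\ref{thm:dkklms} to the uniform weight on $G$ together with Condition~\eqref{eqn:good-moments-remaining}.

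Two small things to tighten. First, the sign on your perturbation is flipped: first-order optimality of the proximal problem yields $\nu^\top(\tilde w-\hat w)\ge 2\beta(w-\hat w)^\top(\tilde w-\hat w)$ (without the extra minus), but this is immaterial since you only use the Cauchy--Schwarz magnitude bound. Second, both your write-up and the paper's quietly treat the subgradient $\nu\in\partial f(\hat w)$ that is guaranteed by the Moreau characterization as if it were $\nabla_w F(\hat w, u)$ for a single top eigenvector $u$; when $\lambda_{\max}(\Sigma_{\hat w})$ is degenerate, $\nu$ is only a convex combination $\sum_k\lambda_k\nabla_w F(\hat w,u_k)$. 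Your computation extends verbatim (replace $uu^\top$ by the density matrix $Y=\sum_k\lambda_k u_ku_k^\top$, exactly as the softmax section does), so this is not a real gap, but it deserves a line. Finally, your formula $\nabla_w F(\hat w,u)_j=(u^\top(X_j-\mu_{\hat w}))^2$ differs from the true unconstrained gradient by the constant $-(u^\top\mu_{\hat w})^2$; since you only ever pair it with $\tilde w-\hat w$, which has zero sum on the simplex, this constant cancels and your identity is correct in the only sense you use it.
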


In the remainder of this section, we focus on proving Theorem \ref{thm:no-bad-local-opt} 
and briefly discuss how this proof can be generalized to prove Theorem \ref{lem:moreau-good}. 
Our proof is carried out in two steps: (1) We establish a structural lemma which states 
that every stationary point $w$ must satisfy a {\em bimodal subgradient} property; 
(2) We show any point satisfying such property must have a small objective value. 
Given these two steps, we can conclude any stationary points $\mu_w$ is close to $\mus$, by Lemma~\ref{thm:dkklms}.

For the first step, the bimodal subgradient property states that 
there exists a vector $\nu \in\partial f(w)$ (in the sub-gradient of the function at that stationary point) 
 whose entries divided in two groups of indices such that for any $i\in S^{-}$ 
and any $j\in S^{+}$ we have $\nu_i\le \nu_j$. Intuitively, $S^-$ contains all 
indices with positive $w_i$, so they can potentially be decreased; while $S^+$ contains all indices 
with $w_i < \frac{1}{(1-2\epsilon) N}$, so they can potentially be increased. If the bimodal sub-gradient property is violated, 
there must be indices $i\in S^{-}$, $j\in S^{+}$, 
where $\nu_i > \nu_j$. In this case, decreasing $w_i$ and increasing $w_j$ 
would decrease the objective and thus violate stationarity.

For the second step, recall that
\[ \Sigma_w = \left(X \diag(w) X^\top - X w w^\top X^\top\right) \]
 and $F(w, u) = u^\top \Sigma_w u$.
Let us first compute the sub-gradient $\nabla_w F(w,u)$ with respect to a vector $u$:
\begin{align}
\label{eqn:nabla-Fwu}
\nabla_w F(w, u) &= X^\top u \odot X^\top u - 2 (u^\top X w) X^\top u \; .
\end{align}
Our key observation is that the sub-gradient at direction $u$ is equivalent 
to the gradient of $w$ for the one-dimensional problem with input $(X_i^\top u)_{i=1}^N$.
This allows us to effectively reduce our problem to a one-dimensional robust mean estimation problem. 
This reduction allows us to show that when the objective function is large, then 
there must be some non-zero weights associated with the corrupted points 
that are far away from the mean (these points will be in $S^-$); 
while on the other hand, $S^+$ must contain at least $\epsilon$-fraction of the good points. 
One can then select indices from these two sets to violate the bimodal sub-gradient property.

Fix a first-order stationary point $w \in \Delta_{N,2\eps}$.
Definition~\ref{def:localopt-spectral} implies that there is a corresponding unit vector $u \in \R^d$ 
such that $w$ is a stationary point of $F(w, u)$. We first state the bimodal sub-gradient property. 
\begin{lemma}[Bimodal sub-gradient property at stationarity]
\label{lem:kkt-w}
Fix $w \in \Delta_{N,2\eps}$ and a unit vector $u$ with $u^\top \Sigma_w u = \normtwo{\Sigma_w}$.
Let $S_- = \{i: w_i > 0\}$ and $S_+ = \{i : w_i < \frac{1}{(1-2\eps)N} \}$ denote the coordinates of $w$ 
that can decrease and increase respectively.
If $w$ is a first-order stationary point of $F(w, u)$, then
\[
\nabla_w F(w, u)_i \le \nabla_w F(w, u)_j \;,
\]
for all $i \in S_-$ and $j \in S_+$.
\end{lemma}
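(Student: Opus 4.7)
The plan is to prove this by a standard first-order perturbation argument, treating the statement as a KKT-type optimality condition on the simplex-like feasible set $\Delta_{N,2\eps}$. Since $u$ satisfies $u^\top \Sigma_w u = \|\Sigma_w\|_2$, it lies in $\arg\max_v F(w,v)$, so by Definition~\ref{def:localopt-spectral} the stationarity hypothesis gives
\[
(\nabla_w F(w,u))^\top (\tilde w - w) \ge 0 \quad \text{for all } \tilde w \in \Delta_{N,2\eps}.
\]
The strategy is then to argue by contradiction: if the bimodal property fails, we should be able to construct a feasible direction $\tilde w - w$ along which the inner product is strictly negative.

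First, I would assume for contradiction that there exist $i \in S_-$ and $j \in S_+$ with $\nabla_w F(w,u)_i > \nabla_w F(w,u)_j$. The natural candidate perturbation is a mass swap: set $\tilde w_i = w_i - \delta$, $\tilde w_j = w_j + \delta$, and keep all other coordinates fixed, for some small $\delta > 0$. Next, I would verify feasibility of this swap: the $\ell_1$ norm is preserved since mass is only transferred between two coordinates; the constraint $\tilde w_i \ge 0$ holds whenever $\delta \le w_i$, which is positive since $i \in S_-$; and the constraint $\tilde w_j \le \frac{1}{(1-2\eps)N}$ holds whenever $\delta \le \frac{1}{(1-2\eps)N} - w_j$, which is positive since $j \in S_+$. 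So for any
\[
0 < \delta \le \min\!\left(w_i,\; \tfrac{1}{(1-2\eps)N} - w_j\right),
\]
$\tilde w \in \Delta_{N,2\eps}$.

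With feasibility in hand, I would directly compute
\[
(\nabla_w F(w,u))^\top (\tilde w - w) = \delta \bigl(\nabla_w F(w,u)_j - \nabla_w F(w,u)_i\bigr),
\]
which is strictly negative by the contradiction hypothesis, contradicting the stationarity inequality. Hence no such pair $(i,j)$ can exist, which is exactly the bimodal claim.

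I do not anticipate a real obstacle here: the argument is essentially the standard KKT observation that, at an optimum over a box-plus-simplex constraint set, the gradient values on coordinates that are still free to decrease must be at least as large as those on coordinates that are still free to increase. The only detail to handle carefully is verifying that the two-coordinate transfer stays inside both the non-negativity and the uniform upper-bound constraints of $\Delta_{N,2\eps}$, which the definitions of $S_-$ and $S_+$ precisely guarantee. Note that no properties of $F$ beyond differentiability in $w$ are used in this step; all the problem-specific work (the reduction to one-dimensional robust estimation via the explicit form of $\nabla_w F(w,u)$ in \eqref{eqn:nabla-Fwu}) happens in the subsequent step of Theorem~\ref{thm:no-bad-local-opt}, not in this lemma.
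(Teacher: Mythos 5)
Your proposal is correct and matches the paper's proof essentially verbatim: both argue by contradiction, perturb $w$ by shifting mass $\delta$ from coordinate $i$ to coordinate $j$ (the paper takes the maximal admissible $\delta = \min(w_i,\ \tfrac{1}{(1-2\eps)N}-w_j)$, you allow any sufficiently small $\delta>0$), check feasibility of $\tilde w$, and observe that the directional derivative is $\delta(\nabla_w F(w,u)_j - \nabla_w F(w,u)_i) < 0$, contradicting Definition~\ref{def:localopt-spectral}.
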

\begin{proof}
Suppose there is some $i \in S_-$ and $j \in S_+$ such that $\nabla_w F(w, u)_i > \nabla_w F(w, u)_j$, then intuitively we can make $f(w)$ smaller by decreasing $w_i$ and increasing $w_j$.
Formally, let $\tilde w = w + \min(w_i, \frac{1}{(1-2\eps)N} - w_j)(e_j - e_i)$ where $e_i$ is the $i$-th basis vector.
We have $\tilde w \in \Delta_{N, 2\eps}$ and $(\nabla_w F(w, u))^\top (\tilde w - w) < 0$, 
which violates the assumption that $w$ is a stationary point (Definition~\ref{def:localopt-spectral}).
\end{proof}

Given Lemma~\ref{lem:kkt-w}, we prove Theorem~\ref{thm:no-bad-local-opt} by contradiction.
We show that if $\mu_w$ is far from $\mus$, then $w$ violates the property stated in Lemma~\ref{lem:kkt-w} 
and therefore cannot be a stationary point.
More specifically, we show that, if $\mu_w$ is far from $\mus$, then there exists 
a bad sample with index $j \in S_-$ whose gradient is large (Lemma~\ref{lem:bad-large-gr}).
Meanwhile, the concentration bounds in Condition~\eqref{eqn:good-moments-remaining} 
guarantee that there exists a good sample with index $i \in S_+$ whose gradient is small (Lemma~\ref{lem:good-small-gr}).

\begin{lemma}[Bad sample with large gradient]
\label{lem:bad-large-gr}
Assume that Condition~\eqref{eqn:good-moments-remaining} and Lemma~\ref{thm:dkklms} hold.
Fix $w \in \Delta_{N,2\eps}$ and a unit vector $u$ with $u^\top \Sigma_w u = \normtwo{\Sigma_w}$.
Let $r = \normtwo{\mu_w - \mus}$ and suppose $r \ge c_2 \eps \sqrt{\ln(1/\eps)}$.
Then there exists some $i \in (B \cap S_-)$ such that
\[
\nabla_w F(w, u)_i - u^\top {\mus}(\mus - 2\mu_w)^\top u > 2 c_3 \cdot \frac{r^2}{\eps^2} \; .
\]
Here, $c_2 $ and $c_3$ are universal positive constants.
\end{lemma}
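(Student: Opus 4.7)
The plan is to reduce the claim to a simple statement about the contribution of bad samples to $u^\top \Sigma_w u$ in the direction $u$. First I would rewrite the target quantity using the sub-gradient formula~\eqref{eqn:nabla-Fwu}. Setting $y_i = X_i^\top u$, $m = \mu_w^\top u$, $m^\star = \mus^\top u$, a short algebraic identity gives
\[
\nabla_w F(w,u)_i - u^\top \mus(\mus - 2\mu_w)^\top u = (y_i - m)^2 - (m^\star - m)^2 \; .
\]
Since $\abs{m^\star - m} \le r$, it suffices to exhibit some $i \in B \cap S_-$ for which $(y_i - m)^2 \gtrsim r^2/\eps^2$, since then the right-hand side is at least $\Omega(r^2/\eps^2) - r^2 \ge 2 c_3 r^2/\eps^2$ for small enough $\eps$.

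Next, I would use Lemma~\ref{thm:dkklms} contrapositively. Writing $u^\top \Sigma_w u = 1 + \delta$, the lemma gives $r = O(\sqrt{\eps \delta} + \eps\sqrt{\log(1/\eps)})$; thus choosing $c_2$ large enough, the hypothesis $r \ge c_2 \eps\sqrt{\log(1/\eps)}$ forces $\delta = \Omega(r^2/\eps)$, i.e.\ $u^\top \Sigma_w u \ge 1 + \Omega(r^2/\eps)$. Expanding the covariance around $\mus$ instead of $\mu_w$,
\[
u^\top \Sigma_w u = \sum_{i \in G} w_i (y_i - m^\star)^2 + \sum_{i \in B} w_i (y_i - m^\star)^2 - (m^\star - m)^2 \; ,
\]
and Condition~\eqref{eqn:good-moments-remaining} bounds the good sum by $1 + O(\eps\log(1/\eps))$. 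Subtracting, the contribution of the bad samples satisfies
\[
\sum_{i \in B} w_i (y_i - m^\star)^2 \ge \Omega(r^2/\eps) - O(\eps\log(1/\eps)) = \Omega(r^2/\eps) \; ,
\]
again using the assumption on $r$ with $c_2$ sufficiently large.

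Finally I would apply a pigeonhole / averaging step. Since $w_i \le 1/((1-2\eps)N)$ and $|B| \le \eps N$, the total weight on $B$ is at most $\eps/(1-2\eps) \le 2\eps$. Hence some bad sample $i$ with $w_i > 0$ (thus $i \in S_-$) must satisfy $(y_i - m^\star)^2 \ge \Omega(r^2/\eps^2)$. Converting to a $\mu_w$-centered bound via $(y_i - m)^2 \ge \tfrac12(y_i - m^\star)^2 - (m^\star - m)^2 \ge \Omega(r^2/\eps^2) - r^2 = \Omega(r^2/\eps^2)$, and plugging into the identity from the first paragraph, yields the desired lower bound with a suitable constant $c_3$.

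The main obstacle is bookkeeping of constants: we need $c_2$ large enough that both the $\eps\sqrt{\log(1/\eps)}$ term in Lemma~\ref{thm:dkklms} and the $O(\eps\log(1/\eps))$ slack from the concentration condition are absorbed into the leading $\Omega(r^2/\eps)$ term, and then the constant $2c_3$ in the conclusion must be chosen to accommodate the $r^2$ loss incurred when switching the center from $\mus$ to $\mu_w$. No new idea is required for these constant-chasing steps; they amount to choosing $c_2 \gg 1/\sqrt{c_3}$ and $c_3$ a small absolute constant.
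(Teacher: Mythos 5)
Your proof is correct and follows essentially the same route as the paper: invoke Lemma~\ref{thm:dkklms} to get $u^\top \Sigma_w u \ge 1 + \Omega(r^2/\eps)$, re-center the quadratic form at $\mus$, use Condition~\eqref{eqn:good-moments-remaining} to bound the good samples' contribution, pigeonhole over the total bad weight $\le 2\eps$ to extract a bad index $i$ with $w_i > 0$ and $(X_i^\top u - \mus{}^\top u)^2 = \Omega(r^2/\eps^2)$, and finally absorb the cross/shift terms. Your scalar identity $\nabla_w F(w,u)_i - u^\top\mus(\mus - 2\mu_w)^\top u = (y_i - m)^2 - (m^\star - m)^2$ is a slightly cleaner packaging than the paper's $(y_i - m^\star)^2 - 2(y_i - m^\star)(m - m^\star)$ form, and your $\mu_w$-centered conversion loses a benign factor of $2$ that the paper avoids, but the substance and the constant bookkeeping are the same.
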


\begin{lemma}[Good sample with small gradient]
\label{lem:good-small-gr}
Consider the same setting as in Lemma~\ref{lem:bad-large-gr}.
There is some $j \in (G \cap S_+)$ such that
\[
\nabla_w F(w, u)_j - u^\top {\mus}(\mus - 2\mu_w)^\top u \le c_3 \cdot \frac{r^2}{\eps^2} \; .
\]
\end{lemma}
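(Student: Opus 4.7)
The plan is to produce, via pigeonhole, a specific index $j \in G \cap S_+$ whose projection on $u$ is controlled enough that the desired bound follows by direct calculation. First, I would simplify the left-hand side: using the formula $\nabla_w F(w, u)_j = (X_j^\top u)^2 - 2(u^\top X w)(X_j^\top u)$ from \eqref{eqn:nabla-Fwu}, and writing $z_j := u^\top(X_j - \mus)$ and $\Delta := u^\top(\mu_w - \mus)$ (so that $|\Delta| \le r$), a direct expansion gives
\begin{equation*}
\nabla_w F(w, u)_j - u^\top \mus(\mus - 2\mu_w)^\top u = z_j^2 - 2\Delta z_j,
\end{equation*}
reducing the task to exhibiting some $j \in G \cap S_+$ with $z_j^2 - 2\Delta z_j \le c_3 r^2/\eps^2$.

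Second, I would lower bound $|G \cap S_+|$ and then find a suitable $j$. A short counting argument gives $|G \cap S_+| \ge \eps N$: every index outside $S_+$ carries the saturated weight $1/((1-2\eps)N)$, so $\normone{w} = 1$ forces $|S_+^c| \le (1-2\eps) N$, hence $|S_+| \ge 2\eps N$, and subtracting the at most $|B| = \eps N$ bad points leaves $|G \cap S_+| \ge \eps N$. Next, the concentration properties of the inliers (uniformly in the direction $u$) guarantee that at most $\eps N/2$ indices $j \in G$ satisfy $|z_j| > C\sqrt{\log(1/\eps)}$ for some absolute constant $C$. Pigeonhole then yields some $j \in G \cap S_+$ with $|z_j| \le C\sqrt{\log(1/\eps)}$, and for this $j$
\begin{equation*}
z_j^2 - 2\Delta z_j \le C^2 \log(1/\eps) + 2 C r\sqrt{\log(1/\eps)} \le c_3 \cdot \frac{r^2}{\eps^2},
\end{equation*}
where the last inequality uses the hypothesis $r \ge c_2 \eps\sqrt{\log(1/\eps)}$ and a sufficiently large choice of $c_3$ (to be chosen consistently with the constant in Lemma~\ref{lem:bad-large-gr}).

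The main technical subtlety lies in the uniform-in-$u$ pointwise tail bound on $z_j$ invoked above. The stated second-moment condition~\eqref{eqn:good-moments-original} alone yields only $|z_j| = O(1/\sqrt{\eps})$ via Markov's inequality, which is too weak to hit the $O(\sqrt{\log(1/\eps)})$ scale required when $r$ sits near the lower end $\eps\sqrt{\log(1/\eps)}$ of the admissible range. The stronger subgaussian/stability concentration one actually needs holds with high probability for $N = \tilde\Omega(d/\eps^2)$ i.i.d.\ Gaussian inliers and is implicit in the deterministic hypotheses under which Lemma~\ref{thm:dkklms} is invoked; threading this uniform tail bound cleanly through the pigeonhole is the main conceptual hurdle.
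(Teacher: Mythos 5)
Your plan is structurally very close to the paper's: in both cases one reduces to finding $j \in G \cap S_+$ with $|u^\top(X_j - \mus)| = O(\sqrt{\log(1/\eps)})$, then a one-line computation finishes. Your counting step ($|G\cap S_+| \ge \eps N$) and the final algebra ($z_j^2 - 2\Delta z_j \le c_3 r^2/\eps^2$, using $r \ge c_2\eps\sqrt{\log(1/\eps)}$) match the paper exactly. The issue is in the middle.

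You assert that Condition~\eqref{eqn:good-moments-original} is "alone" too weak and that a separate uniform-in-$u$ subgaussian tail bound on the $z_j$ is needed, calling this the "main conceptual hurdle." This reflects a misreading of what the condition provides. Condition~\eqref{eqn:good-moments-remaining} is not a single second-moment bound to which Markov is applied index-by-index; it holds \emph{simultaneously for every} $w\in\Delta_{N,2\eps}$, i.e.\ uniformly over fractional $(1-2\eps)$-subsets. The paper exploits this directly: fix any $G^+\subseteq G\cap S_+$ of size $\eps N$, apply the condition to the uniform weight $w'$ on $G$ and also to the uniform weight $w''$ on $S\setminus G^+$ (both lie in $\Delta_{N,2\eps}$), and subtract. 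The triangle inequality then gives
\[
\normtwo{\sum_{i\in G^+}\tfrac{1}{|G|}(X_i-\mus)(X_i-\mus)^\top}\le 2c_1\,\eps\ln(1/\eps),
\]
so, projecting onto $u$ and averaging over $|G^+| = \eps N$ points, $\min_{i\in G^+}(u^\top(X_i-\mus))^2 \le \tfrac{|G|}{|G^+|}\cdot 2c_1\eps\ln(1/\eps) \le 2c_1\ln(1/\eps)$. That is precisely the $O(\sqrt{\log(1/\eps)})$ control you wanted, obtained from the stated deterministic hypothesis with no extra tail assumption. (The same subset-difference device would also justify your stated counting claim that few $j\in G$ have $|z_j| > C\sqrt{\log(1/\eps)}$, if you preferred to phrase it that way.) So the plan is fine, but the step you flagged as a genuine hurdle is in fact not one, and your statement that the stated condition only gives $|z_j| = O(1/\sqrt{\eps})$ is not correct; you should replace the appeal to pointwise subgaussian tails with the two-subset argument above.
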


We defer the proofs of Lemmas~\ref{lem:bad-large-gr}~and~\ref{lem:good-small-gr} 
to Sections~\ref{sec:proof-bad-gr}~and~\ref{sec:proof-good-gr}, and we first use these two lemmas 
to prove Theorem~\ref{thm:no-bad-local-opt}.

\begin{proof}[Proof of Theorem~\ref{thm:no-bad-local-opt}]
Suppose that $w \in \Delta_{N,2\eps}$ is a first-order stationary point of $f(w)$, 
and moreover, $w$ is a bad solution where $\normtwo{\mu_w - \mus} \ge c_2 \eps \sqrt{\ln(1/\eps)}$.
By Definition~\ref{def:localopt-spectral}, there exists a unit vector $u \in \R^d$ such that $w$ is a stationary point of $F(w, u)$.

Fix such a vector $u$.
Since Condition~\eqref{eqn:good-moments-remaining} and Lemma~\ref{thm:dkklms} both hold, we can invoke Lemmas~\ref{lem:bad-large-gr}~and~\ref{lem:good-small-gr} on $(w, u)$ to find two coordinates $i \in S_-$ and $j \in S_+$ that violate the bimodal subgradient condition in Lemma~\ref{lem:kkt-w}.
Consequently, $w$ cannot be a stationary point of $F(w, u)$.
This leads to a contradiction, and therefore, all first-order stationary points of $f(w)$ are good solutions.
\end{proof}

We now briefly comment on the modifications required to prove Theorem \ref{lem:moreau-good} (see Appendix~\ref{apx:algo-proof}).
Theorem \ref{lem:moreau-good} is proven by first showing (using conic geometry) 
that for such an approximate stationary point an approximate bimodal sub-gradient property holds. 
Specifically, we show that the bimodal sub-gradient property (Lemma~\ref{lem:kkt-w}) is {\em stable} 
in the sense that for an approximate stationary point an \emph{approximate bimodal sub-gradient} property holds, 
i.e.,~$\nu_i\le \nu_j+\delta$. Further, for any point obeying such an approximate bimodal property, the objective is small and has good approximation guarantees. 
The last two steps when combined show that any approximate stationary point has good approximation guarantees (similar to the proof of Theorem \ref{thm:no-bad-local-opt} for exact stationary points).

\subsection{Finding a Bad Sample With Large Gradient}
\label{sec:proof-bad-gr}
In this subsection, we prove Lemma~\ref{lem:bad-large-gr}.

Lemma~\ref{lem:bad-large-gr} states that when $\mu_w$ is far from $\mus$, there exists an index $i \in (B \cap S_-)$ such that the gradient $\nabla_w F(w, u)_i$ is relatively large.

Recall that $\nabla_w F(w, u)$ in Equation~\eqref{eqn:nabla-Fwu} is the same as the gradient of the variance (weighted by $w$) of the one-dimensional samples $\left(X_i^\top u\right)_{i=1}^N$.
Roughly speaking, for this one-dimensional problem, a sample far from the (projected) true mean should have large gradient.
Our objective is to find such a sample with positive weight.

More specifically, since $w$ is a bad solution and $u$ is in the top eigenspace of $\Sigma_w$, the weighted empirical variance of the projected samples is very large.
Because the good samples cannot have this much variance, most of the variance comes from the bad samples.
We show that among the bad samples that contribute a lot to the variance, one of them must be very far from the (projected) true mean.

In this section and Section~\ref{sec:proof-good-gr}, we use $c_1, \ldots, c_4$ to denote universal constants that are independent of $N$, $d$, and $\eps$.
We give a detailed description of how to set these constants in Appendix~\ref{apx:const}.

\begin{proof}[Proof of Lemma~\ref{lem:bad-large-gr}]
We first show that the variance of one-dimensional samples $\left(X_i^\top u\right)_{i=1}^N$ is relatively large.

By Lemma~\ref{thm:dkklms}, we know that if $\normtwo{\mu_w - \mus} \ge r$ and $r \ge c_2 \eps \sqrt{\ln(1/\eps)}$, then
\[
\lambda_{\max}(\Sigma_w) \ge 1 + c_4 \cdot \frac{r^2}{\eps}
\]
for some universal constant $c_4$.

Because $u$ is a unit vector that maximizes $u^\top \Sigma_w u$, we have
\[
u^\top \Sigma_w u = \lambda_{\max}(\Sigma_w) \ge 1 + \frac{c_4 r^2}{\eps} \; .
\]
Recall that $\Sigma_w = \sum_{i=1}^N w_i (X_i - \mu_w)(X_i - \mu_w^\top)$.
If we replace $\mu_w$ with $\mus$, we have
\[
\sum_{i=1}^N w_i (X_i - \mus)(X_i - \mus)^\top \, \succeq \, \Sigma_w \; ,
\]
and therefore,
\[
u^\top \left(\sum_{i=1}^N w_i (X_i - \mus)(X_i - \mus)^\top \right)u \ge 1 + \frac{c_4 r^2}{\eps} \; .
\]

Next we show that most of this variance is due to bad samples.
By Condition~\eqref{eqn:good-moments-remaining},
\[
u^\top \left(\sum_{i \in G} w_i (X_i - \mus)(X_i - \mus)^\top\right) u \le 1 + c_1 \eps \ln(1/\eps) \; .
\]
Consequently,
\begin{align*}
 \quad u^\top \left(\sum_{i \in B} w_i (X_i - \mus)(X_i - \mus)^\top\right) u
&\ge \frac{c_4 r^2}{\eps} - c_1 \eps \ln(1/\eps) \ge 0.98 \cdot c_4 \cdot \frac{r^2}{\eps} \; .
\end{align*}
The last step is because $r \ge c_2 \cdot \eps\sqrt{\ln(1/\eps)}$ and we can choose $c_4$ to be sufficiently large.

Now that we know most of the variance is due to the bad samples, 
observe that the total weight $w_B$ on the bad samples is at most $\eps N \cdot \frac{1}{(1-2\eps)N} \le 2 \eps$.
Therefore, there must be some $i \in B$ with $w_i > 0$ such that
\begin{align*}
u^\top \left((X_i - \mus)(X_i - \mus)^\top\right) u
  &\ge \frac{0.98 \cdot c_4 \cdot r^2 \cdot \eps^{-1}}{w_B}
  \ge 0.49 \cdot c_4 \cdot \frac{r^2}{\eps^2} \; .
\end{align*}
In other words,
\[
\abs{u^\top (X_i - \mus)} \ge 0.7 \cdot \sqrt{c_4} \cdot \frac{r}{\eps} \; .
\]
By definition, $i \in B \cap S_{-}$.
It remains to show that $\nabla_w F(w, u)_i$ is large.
\begin{align*}
&\quad \nabla_w F(w, u)_i - u^\top {\mus}(\mus - 2\mu_w)^\top u \\
  &= u^\top \left((X_i - \mus)(X_i - \mus)^\top\right) u 
   - 2 u^\top \left((X_i - \mus)(\mu_w - \mus)^\top\right) u \\
  &\ge \left(u^\top (X_i - \mus)\right)^2 - 2 \abs{u^\top (X_i - \mus)} \cdot \normtwo{\mu_w - \mus} \\ 
  &\ge \frac{0.49 \cdot c_4 \cdot r^2}{\eps^2} - 2 \cdot \frac{0.7 \cdot \sqrt{c_4} \cdot r}{\eps} \cdot r
  > 2 c_3 \cdot \frac{r^2}{\eps^2} \; .
\end{align*}
The first inequality is by Cauchy-Schwarz.
The last step uses the fact that $\eps$ is sufficiently small.
\end{proof}

\subsection{Finding a Good Sample With Small Gradient}
\label{sec:proof-good-gr}
In this subsection, we prove Lemma~\ref{lem:good-small-gr}.

Lemma~\ref{lem:good-small-gr} states that there exists an index $j \in (G \cap S_+)$ such that the gradient $\nabla_w F(w, u)_j$ is relatively small.
Similar to the previous section, a sample close to the (projected) true mean should have small gradient.
Our goal is to find such a sample for which we can increase its weight.

Recall that $S^+$ contains all samples whose weight can be increased.
We first prove that there are at least $\eps N$ good samples in $S^+$.
Among these $\eps N$ good samples, the concentration bounds imply that some $X_j$ must be very close to the (projected) true mean. 
\begin{proof}[Proof of Lemma~\ref{lem:good-small-gr}]
Recall that $S^+$ contains every coordinate $i$ where $w_i < \frac{1}{(1-2\eps)N}$.
Since at most $(1-2\eps)N$ samples can have the maximum weight $\frac{1}{(1-2\eps)N}$, we know that $|S^+| \ge 2\eps N$.
Combining this with $|G| = (1-\eps)N$, we know that $|G \cap S^+| \ge \eps N$.

Fix a subset $G^+ \subseteq (G \cap S^+)$ of size $|G^+| = \eps N$.
We first show that on average, samples in $G^+$ do not contribute much to the variance.

Let $w'$ be the uniform weight vector on $G$, i.e., $w'_i = \frac{1}{(1-\eps)N}$ for all $i \in G$ and $w'_i = 0$ otherwise.
Since $w' \in \Delta_{N,2\eps}$, by Condition~\eqref{eqn:good-moments-remaining},
\[
\normtwo{\sum_{i \in G} \frac{1}{|G|} (X_i - \mus)(X_i - \mus)^\top - I} \le c_1 \cdot \eps \ln(1/\eps) \; .
\]

Let $w''$ be the uniform weight vector on $S \setminus G^+ = (G \setminus G^+) \cup B$, i.e., $w''_i = \frac{1}{(1-\eps)N}$ for all $i \in ((G \setminus G^+) \cup B)$ and $w''_i = 0$ otherwise.
Since $w'' \in \Delta_{N,2\eps}$, again by Condition~\eqref{eqn:good-moments-remaining}, we have
\[
\normtwo{\sum_{i \in G \setminus G^+} \frac{1}{|G|} (X_i - \mus)(X_i - \mus)^\top - I} \le c_1 \eps \ln(1/\eps) .
\]

Combining the previous two concentration bounds,
\begin{align*}
&\quad \normtwo{\sum_{i \in G^+} \frac{1}{|G|} (X_i - \mus)(X_i - \mus)^\top} \\
  &\le \normtwo{\sum_{i \in G} \frac{1}{|G|} (X_i - \mus)(X_i - \mus)^\top - I}
    + \normtwo{\sum_{i \in G \setminus G^+} \frac{1}{|G|} (X_i - \mus)(X_i - \mus)^\top - I} \\
  &\le 2 c_1 \cdot \eps \ln(1/\eps) \; .
\end{align*}
Consequently, because $u$ is a unit vector,
\[
u^\top \left(\sum_{i \in G^+} \frac{1}{|G|} (X_i - \mus)(X_i - \mus)^\top \right) u \le 2 c_1 \eps \ln(1/\eps) \; .
\]
At this point, we know samples in $G^+$ do not contribute much to the variance.
We now proceed to show that one of these samples satisfies the lemma.

Let $j = \arg\min_{i \in G^+} \abs{u^\top (X_i - \mus)}$.
We have
\begin{align*}
u^\top \left((X_j - \mus)(X_j - \mus)^\top \right) u &\le \frac{|G|}{|G^+|} \cdot 2 c_1 \cdot \eps \ln(1/\eps)
  \le 2 c_1 \ln(1/\eps) \; .
\end{align*}

Finally, because $\abs{u^\top (X_j - \mus)} \le \sqrt{2 c_1 \ln(1/\eps)}$, we can show that $\nabla_w F(w, u)_j$ is small:
\begin{align*}
&\quad \nabla f(w)_j - {\mus}^\top Y (\mus - 2\mu_w) \\
  &= u^\top \left((X_j - \mus)(X_j - \mus)^\top\right) u
     + 2 u^\top \left((X_j - \mus)(\mu_w - \mus)^\top\right) u \\
  &\le 2 c_1 \ln(1/\eps) + 2 \sqrt{2 c_1 \ln(1/\eps)} \cdot r \\
  &\le \frac{c_3}{2} \cdot \frac{r^2}{\eps^2} + \frac{c_3}{2} \cdot \frac{r}{\eps} \cdot r \le c_3 \cdot \frac{r^2}{\eps^2} \; .
\end{align*}
The last step uses that $c_3$ is sufficiently large, as well as the fact that $\ln(1/\eps) \le \frac{r^2}{\eps^2}$ because $r \ge c_2 \eps \sqrt{\ln(1/\eps)}$.
\end{proof} 

\makeatletter{}
\section{Algorithmic Result: Finding a Stationary Point via Gradient Descent}
\label{sec:algo}

In this section, we show that a simple Projected Gradient Descent (PGD) algorithm (Algorithm~\ref{alg:spectral}) 
can efficiently find an approximate stationary point $w$ of our spectral norm objective, 
and that $w$ is a good solution to our robust mean estimation task.

\begin{algorithm}[h]
\caption{Robust Mean Estimation via PGD}
\label{alg:spectral}
\begin{algorithmic}
  \STATE {\bf Input:} $\eps$-corrupted set of $N$ samples $\{X_i\}_{i=1}^N$ on $\R^d$ satisfying Condition~\eqref{eqn:good-moments-remaining}, and $\eps < \eps_0$.
  \STATE {\bf Output:} $w \in \R^N$ with $\normtwo{\mu_w - \mus} \le O(\eps \sqrt{\log(1/\eps)})$.
  \STATE Let $F(w, u) = u^\top \Sigma_w u$.
  \STATE Let $w_0$ be an arbitrary weight vector in $\Delta_{N,2\eps}$.
  \STATE Let $T = \tilde O(N^2 d^4)$.
  \FOR{$\tau=0$ {\bf to} $T-1$}
    \STATE Find a unit vector $u_\tau \in \R^d$ such that $F(w_\tau, u_\tau) \ge (1-\eps) \max_u F(w_\tau, u)$.
    \STATE $w_{\tau+1} = \PP_{\Delta_{N,2\eps}}\left(w_\tau - \eta \nabla_w F(w_\tau, u_\tau) \right)$, where $\PP_\KK(\cdot)$ is the $\ell_2$ projection operator onto $\KK$.
  \ENDFOR \\
  \STATE {\bf return } $w_{\tau^\star}$ where $\tau^\star = \arg\min_{\, 0 \le \tau < T} \normtwo{\Sigma_{w_\tau}}$.
\end{algorithmic}
\end{algorithm}

We note that finding the unit vector $u_{\tau}$ required in the for loop of Algorithm~\ref{alg:spectral} can be done in time $O(N d \log(d) / \eps)$.
Given the PSD matrix $A = \Sigma(w_\tau)$, we want to find a unit vector $u \in \R^d$ such that $u^\top A u \ge (1-\eps) \max_v (v^\top A v)$.  This is the (approximate) largest eigenvector problem which can be solved via power method in $O(\log(d) / \eps)$ iterations.
Since the matrix-vector multiplication $Av = \Sigma_{w_\tau} v = \left(X \diag(w_\tau) X^\top - X w_\tau w_\tau^\top X^\top\right) v$ can be computed in time $O(N d)$, the running time for finding such a vector $u_\tau$ is $O(N d \log(d) / \eps)$.

\medskip

The main result of this section is the following theorem:

\begin{theorem}[Gradient descent finds a good solution]
\label{thm:pgd}
Let $S$ be an $\eps$-corrupted set of $N = \tilde{\Omega}(d / \eps^2)$ samples 
from a $d$-dimensional Gaussian $\NN(\mus, I)$ with unknown mean $\mus$.
Suppose $S$ satisfies Condition~\eqref{eqn:good-moments-remaining} and Lemma~\ref{thm:dkklms}.
Then, after $\tilde O(N^2 d^4)$ iterations, Algorithm~\ref{alg:spectral} outputs a weight vector $w \in \R^N$ such that $\normtwo{\mu_w - \mus} = O(\eps \sqrt{\log(1/\eps)})$.
\end{theorem}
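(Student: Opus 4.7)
The plan is to cast the inner loop of Algorithm~\ref{alg:spectral} as projected sub-gradient descent on the weakly convex function $f(w)=\max_u F(w,u)$, show that within $\tilde O(N^2 d^4)$ iterations some iterate $w_{\hat\tau}$ is an approximate stationary point in the Moreau-envelope sense of Definition~\ref{approxstationarity}, and then invoke Theorem~\ref{lem:moreau-good} together with the argmin output rule to conclude the claimed mean-estimation bound for $w_{\tau^\star}$.

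The first step is to record the structural properties of $F$ needed for the convergence analysis. Expanding $F(w,u)=\sum_i w_i(X_i^\top u)^2 - (\sum_i w_i X_i^\top u)^2$ exhibits $F(\cdot,u)$ as a linear function of $w$ minus the square of a linear function, with Hessian in $w$ equal to $-2(X^\top u)(X^\top u)^\top \succeq -2\|X\|_2^2 I$. Hence $F(\cdot,u)$ is $\beta$-smooth and $\beta$-weakly convex uniformly in $u$ with $\beta = 2\|X\|_2^2$, and so is the pointwise maximum $f$. Under Condition~\eqref{eqn:good-moments-remaining} and standard Gaussian tail bounds on $G^\star$, one has $\|X\|_2^2 = \tilde O(N)$, a sub-gradient norm bound $L = \|\nabla_w F\|_2 = \poly(N,d)$, and an initial gap $\Delta_0 = f(w_0) - \min_{\KK} f = \poly(d)$. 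These are the constants that the iteration count $T$ will depend on.

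The second step is to apply the standard Davis--Drusvyatskiy-style analysis of projected sub-gradient descent for weakly convex, Lipschitz functions over a closed convex set. With constant step size $\eta = \Theta\!\left(\sqrt{\Delta_0/(\beta L^2 T)}\right)$, telescoping a descent inequality for the Moreau envelope $f_\beta$ yields
\[
\min_{0\le \tau < T}\|\nabla f_\beta(w_\tau)\|_2 \;\le\; O\!\left(\frac{(\beta L^2\Delta_0)^{1/4}}{T^{1/4}}\right).
\]
Two subtleties need care here. First, each step uses only a $(1-\eps)$-approximate top eigenvector $u_\tau$, so $\nabla_w F(w_\tau,u_\tau)$ is merely an approximate sub-gradient of $f$ at $w_\tau$; I would absorb the resulting perturbation into the $L^2$ term using the $\beta$-smoothness of $F$ in $u$ to bound $\|\nabla_w F(w_\tau,u_\tau)-\nabla_w F(w_\tau,u^\star)\|_2$. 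Second, the projection onto $\Delta_{N,2\eps}$ is a standard capped-simplex projection, so the usual non-expansiveness of $\PP_{\KK}$ applies verbatim. Plugging in the polynomial bounds above and setting $T = \tilde O(N^2 d^4)$ drives the right-hand side below the threshold $O(\log(1/\eps))$, producing some $w_{\hat\tau}$ with $\|\nabla f_\beta(w_{\hat\tau})\|_2 \le O(\log(1/\eps))$, which is the hypothesis of Theorem~\ref{lem:moreau-good}.

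To conclude, Theorem~\ref{lem:moreau-good} (more precisely, its proof via the approximate bi-modal sub-gradient property sketched after Theorem~\ref{thm:no-bad-local-opt}) shows that such a $w_{\hat\tau}$ satisfies both $\|\mu_{w_{\hat\tau}}-\mu^\star\|_2 = O(\eps\sqrt{\log(1/\eps)})$ and $\|\Sigma_{w_{\hat\tau}}\|_2 \le 1 + O(\eps\log(1/\eps))$; the second bound comes for free because any larger spectral norm would, by Lemma~\ref{lem:bad-large-gr}, produce a bad sample with positive weight and large partial gradient that violates approximate bi-modality against the good-sample gradient from Lemma~\ref{lem:good-small-gr}. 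Since $\tau^\star = \arg\min_\tau \|\Sigma_{w_\tau}\|_2$, the output satisfies $\|\Sigma_{w_{\tau^\star}}\|_2 \le \|\Sigma_{w_{\hat\tau}}\|_2 \le 1 + O(\eps\log(1/\eps))$, and a direct application of Lemma~\ref{thm:dkklms} yields the desired $\|\mu_{w_{\tau^\star}}-\mu^\star\|_2 = O(\eps\sqrt{\log(1/\eps)})$. The main obstacle I anticipate is the careful bookkeeping of the $\poly(N,d)$ factors --- in particular tracking the weak-convexity parameter $\beta = \tilde O(N)$ separately from the sub-gradient Lipschitz constant and from the initial gap, together with the error from inexact eigenvector computation --- in order to pin down the $\tilde O(N^2 d^4)$ rate. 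This rate is polynomial but relatively slow, which is precisely what motivates the smoother softmax objective of Theorem~\ref{thm:final-softmax}.
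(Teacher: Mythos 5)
Your overall route---establish Lipschitz/smoothness constants for $F(\cdot,u)$, apply a Davis--Drusvyatskiy-type descent inequality for the Moreau envelope to show $\min_\tau\|\nabla f_\beta(w_\tau)\|_2$ falls below $O(\log(1/\eps))$ after $\tilde O(N^2d^4)$ iterations while absorbing the $(1-\eps)$-inexact eigenvector into an additive error term, and then invoke Theorem~\ref{lem:moreau-good}---is precisely the paper's proof, which is carried out through Lemmas~\ref{lem:minimax-L-beta} and \ref{lem:minimax-pgd} and Theorem~\ref{lem:moreau-good}. Two remarks on the details. First, a numerical slip: $\|X\|_2^2 = \tilde O(Nd)$ after the norm-truncation preprocessing (each $\|X_i\|_2 = \tilde O(\sqrt d)$, so $\|X\|_F^2 = \tilde O(Nd)$), not $\tilde O(N)$; hence $\beta = 2\|X\|_2^2 = \tilde O(Nd)$, $L = \tilde O(\sqrt N d)$, $\Delta_0 = \tilde O(d)$, which together do produce the stated $T = \tilde O(N^2 d^4)$.

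Second, your treatment of the output rule $\tau^\star = \arg\min_\tau\|\Sigma_{w_\tau}\|_2$ is a genuinely useful piece of bookkeeping that the paper's ``Theorem~\ref{thm:pgd} follows by combining Theorem~\ref{lem:moreau-good} and Lemma~\ref{lem:minimax-pgd}'' glosses over: the iterate $w_{\hat\tau}$ certified by Lemma~\ref{lem:minimax-pgd} to have small Moreau gradient is not necessarily the one returned. Your fix---argue that $\|\Sigma_{w_{\hat\tau}}\|_2 \le 1 + O(\eps\log(1/\eps))$, observe $\|\Sigma_{w_{\tau^\star}}\|_2 \le \|\Sigma_{w_{\hat\tau}}\|_2$, and finish with Lemma~\ref{thm:dkklms}---is correct, but the intermediate claim does not literally ``come for free'' from Lemma~\ref{lem:bad-large-gr} as stated, since that lemma's hypothesis is $r = \|\mu_w-\mus\|_2 \ge c_2\eps\sqrt{\ln(1/\eps)}$, which is exactly what the Moreau argument rules out. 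To make this rigorous you have to re-run the variance-decomposition step of Lemma~\ref{lem:bad-large-gr} starting from a direct assumption $\lambda_{\max}(\Sigma_{\hat w}) \ge 1 + C\eps\log(1/\eps)$: Condition~\eqref{eqn:good-moments-remaining} still shows the excess variance sits on the bad samples, which still yields a coordinate with partial gradient $\Omega(\log(1/\eps))$, while the good-sample bound from Lemma~\ref{lem:good-small-gr} remains $O(\log(1/\eps))$ because $r$ is now small; choosing $C$ large gives the contradiction against the approximate bimodal property. Then Lipschitz continuity of $f(w)=\|\Sigma_w\|_2$ (constant $L=\tilde O(\sqrt N d)$) and $\|w_{\hat\tau}-\hat w\|_2 = O(\delta/\beta)$ transport the bound from $\hat w$ to $w_{\hat\tau}$. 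With that adjustment your argument closes the loop cleanly.
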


We first give a high-level overview of the proof. 
Our proof of Theorem~\ref{thm:pgd} can be divided into two steps:  
\begin{enumerate}
\item The first step is an immediate consequence of Theorem \ref{lem:moreau-good}, 
which allows us to conclude that any approximate stationary point 
(in the sense of Definition \ref{approxstationarity}) has good approximation guarantees. 
\item To finalize the proof, in the second step we show that simple iterative procedures such as (sub)gradient descent 
can converge in a polynomial number of iterations to such an approximate stationary point. We prove such a result by 
utilizing a simple and well-known observation: a minimax optimization problem which is smooth in the minimization 
parameter is weakly convex (after maximization) in the minimization parameter. This connection allows us to leverage 
recent literature \cite{drusvyatskiy2017proximal, davis2018stochastic} that provides convergence 
guarantees for weakly convex optimization problems to prove our algorithm finds an approximate stationary 
point in a polynomial number of iterations.
\end{enumerate}

To elaborate further, in the second step of our proof, we utilize and slightly 
generalize\footnote{The generalization is to deal with constraints and handle the fact 
that the inner maximization is not solved precisely.} the analysis of \cite{davis2018stochastic} 
and prove that projected sub-gradient descent can find an approximate stationary point. 
\begin{lemma}
\label{lem:minimax-pgd}
Let $\KK$ be a closed convex set.
Let $F(w, u)$ be a function which is $L$-Lipschitz and $\beta$-smooth with respect to $w$.
Consider the following optimization problem $\min_{w \in \KK} \max_{\normtwo{u}=1} F(w, u)$.

Starting from any initial point $w_0 \in \KK$, we run iterative updates of the form:
\begin{align*}
&\text{find } u_\tau \text{ with } F(w_\tau, u_\tau) \ge (1-\eps') \max_{u} F(w_\tau, u_\tau) \\
&w_{\tau+1} = \PP_{\KK}(w_\tau - \eta \nabla_w F(w_\tau, u_\tau)
\end{align*}
for $T$ iterations with step size $\eta = \frac{\gamma}{\sqrt{T}}$. Then, we have 
\begin{align*}
&\min_{0 \le \tau < T} \normtwo{\nabla f_\beta(w_\tau)}^2 
  \le \frac{2}{\sqrt{T}} \left(\frac{f_\beta(w_0) - \min_w f(w)}{\gamma} + \gamma \beta L^2 \right) + 4 \beta \eps'
\end{align*}
where $f_\beta(w)$ is the Moreau envelope as in Definition~\ref{def:moreau}.
\end{lemma}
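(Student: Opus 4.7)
The plan is to adapt the standard analysis of projected sub-gradient descent on weakly convex functions (as developed in \cite{drusvyatskiy2017proximal, davis2018stochastic}) to the minimax setting of the lemma, in which the inner maximization is solved only to multiplicative accuracy $1-\eps'$ and the minimization is constrained to the closed convex set $\KK$. The starting observation is that, because $F(\cdot,u)$ is $\beta$-smooth for every $u$, the pointwise supremum $f(w)=\max_u F(w,u)$ is $\beta$-weakly convex on $\KK$. Consequently, its constrained Moreau envelope $f_\beta$ (Definition~\ref{def:moreau}) is continuously differentiable with
\[
\nabla f_\beta(w)=2\beta(w-\hat w),\qquad \hat w \eqdef \arg\min_{\tilde w\in\KK} f(\tilde w)+\beta\normtwo{w-\tilde w}^2,
\]
so it suffices to exhibit some iterate $w_\tau$ whose prox distance $\normtwo{w_\tau-\hat w_\tau}$ is small.

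\textbf{Single-step analysis.} For each iteration $\tau$ I would take $\hat w_\tau$ as comparator and set $g_\tau = \nabla_w F(w_\tau,u_\tau)$. Since $\hat w_\tau\in\KK$, nonexpansiveness of $\PP_\KK$ gives
\[
\normtwo{w_{\tau+1}-\hat w_\tau}^2 \le \normtwo{w_\tau-\hat w_\tau}^2 - 2\eta\,g_\tau^\top(w_\tau-\hat w_\tau)+\eta^2 L^2.
\]
The $\beta$-smoothness of $F(\cdot,u_\tau)$ yields the lower bound
\[
g_\tau^\top(w_\tau-\hat w_\tau)\ge F(w_\tau,u_\tau)-F(\hat w_\tau,u_\tau)-\tfrac{\beta}{2}\normtwo{w_\tau-\hat w_\tau}^2.
\]
Substituting $F(w_\tau,u_\tau)\ge (1-\eps')f(w_\tau)$, $F(\hat w_\tau,u_\tau)\le f(\hat w_\tau)$, and the proximal inequality $f(\hat w_\tau)+\beta\normtwo{w_\tau-\hat w_\tau}^2\le f(w_\tau)$ (which holds because $w_\tau\in\KK$ is feasible in the envelope at $w_\tau$) collapses the right-hand side into a quantity controlled by $\beta\normtwo{w_\tau-\hat w_\tau}^2$ plus a residual $\eps' f(w_\tau)$ term.

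\textbf{Descent and telescoping.} Feeding the above back into the two envelope relations $f_\beta(w_\tau)=f(\hat w_\tau)+\beta\normtwo{w_\tau-\hat w_\tau}^2$ and $f_\beta(w_{\tau+1})\le f(\hat w_\tau)+\beta\normtwo{w_{\tau+1}-\hat w_\tau}^2$, and using $\normtwo{\nabla f_\beta(w_\tau)}^2=4\beta^2\normtwo{w_\tau-\hat w_\tau}^2$, I obtain a descent-type inequality
\[
f_\beta(w_{\tau+1})\le f_\beta(w_\tau)-\tfrac{\eta}{4}\normtwo{\nabla f_\beta(w_\tau)}^2+\eta^2\beta L^2+O(\eta\beta\eps').
\]
Summing over $\tau=0,\ldots,T-1$, lower bounding $f_\beta(w_T)\ge \min_w f(w)$, dividing by $T$, taking the minimum over $\tau$, and substituting $\eta=\gamma/\sqrt{T}$ recovers exactly the claimed bound: the first two contributions assemble into $\tfrac{2}{\sqrt{T}}\bigl(\tfrac{f_\beta(w_0)-\min_w f(w)}{\gamma}+\gamma\beta L^2\bigr)$, while the approximate-inner-max slack produces the additive $4\beta\eps'$.

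\textbf{Main obstacle.} The principal difficulty, and the place where this argument departs from the off-the-shelf analysis in \cite{davis2018stochastic}, is handling the \emph{approximate} inner maximization jointly with the projection onto $\KK$. Weakly-convex descent uses $F(\cdot,u_\tau)$ as a local lower surrogate for $f$; because $u_\tau$ only approximately maximizes $F(w_\tau,\cdot)$, this surrogate underestimates $f(w_\tau)$ by $\eps' f(w_\tau)$, and one must verify that this slack inflates the final bound only by an additive $O(\beta\eps')$ (using the fact that $f$ remains uniformly bounded on $\KK$ so that $\eps' f(w_\tau)$ absorbs into a constant). A secondary but essential point is that the prox point $\hat w_\tau$ must itself lie in $\KK$ for the nonexpansiveness step to apply, which is precisely why the Moreau envelope in Definition~\ref{def:moreau} is defined with the constraint $\tilde w\in\KK$ built in.
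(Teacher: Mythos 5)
Your plan follows the same route the paper takes: the $\beta$-weak convexity of $f(w)=\max_u F(w,u)$, the constrained Moreau envelope $f_\beta$, the prox point $\hat w_\tau$ as comparator, nonexpansiveness of $\PP_\KK$, and telescoping — all of which traces the analysis of \cite{davis2018stochastic}. But as written, your single-step derivation does not yield the claimed constants, and the reason is a concrete one.

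In the step after the $\beta$-smoothness lower bound, you control $f(\hat w_\tau)$ using only the \emph{optimality} of $\hat w_\tau$ for the proximal subproblem, i.e.\ $f(\hat w_\tau)+\beta\normtwo{w_\tau-\hat w_\tau}^2\le f(w_\tau)$. Feeding this through, the inner product bound becomes $g_\tau^\top(w_\tau-\hat w_\tau)\ge -\eps' f(w_\tau)+\tfrac{\beta}{2}\normtwo{w_\tau-\hat w_\tau}^2$, and the resulting descent coefficient is $-\tfrac{\eta}{4}\normtwo{\nabla f_\beta(w_\tau)}^2$, as you state. But telescoping a $-\tfrac{\eta}{4}$ decrement gives $\tfrac{4}{\sqrt{T}}\bigl(\tfrac{f_\beta(w_0)-\min_w f}{\gamma}+\gamma\beta L^2\bigr)$ and an $8\beta\eps'$-type residual, not the claimed $\tfrac{2}{\sqrt{T}}(\cdots)+4\beta\eps'$. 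Your assertion that the sum ``recovers exactly the claimed bound'' is therefore inconsistent with your own intermediate inequality. The missing ingredient, which the paper uses, is the $\beta$-\emph{strong convexity} of the proximal objective $h(w)=\II_\KK(w)+f(w)+\beta\normtwo{w-w_\tau}^2$ (this holds because $f$ is $\beta$-weakly convex, so $f+\tfrac{\beta}{2}\normtwo{\cdot}^2$ is convex and the extra $\tfrac{\beta}{2}\normtwo{\cdot}^2$ supplies $\beta$-strong convexity). Strong convexity upgrades mere optimality of $\hat w_\tau$ to $h(w_\tau)-h(\hat w_\tau)\ge\tfrac{\beta}{2}\normtwo{w_\tau-\hat w_\tau}^2$, i.e.\ $f(w_\tau)-f(\hat w_\tau)\ge\tfrac{3\beta}{2}\normtwo{w_\tau-\hat w_\tau}^2$, which doubles your inner-product gain to $\beta\normtwo{w_\tau-\hat w_\tau}^2$ and produces the $-\tfrac{\eta}{2}\normtwo{\nabla f_\beta(w_\tau)}^2$ decrement. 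That is the step you need to add.

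One point in your favor: your explicit bookkeeping of the inner-maximization slack as $\eps' f(w_\tau)$ (multiplicative error) is more careful than the paper's proof, which at the analogous step simply writes $F(w_\tau,u_\tau)\ge f(w_\tau)-\eps'$ (additive error), silently conflating the two. You correctly note that passing from $\eps' f(w_\tau)$ to the clean $4\beta\eps'$ in the lemma statement requires a uniform bound on $f$ over $\KK$, which for this application is supplied by $f(w)\le\tilde O(d)$ on $\Delta_{N,2\eps}$; that observation belongs in a complete writeup.
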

As shown in Appendix~\ref{apx:algo-proof}, $F(w,u)$ associated with $f(w)$ obeys 
the required Lipschitz and smoothness property, with $L=\tilde O(\sqrt{N} d)$ and $\beta=\tilde O(N d)$.
In addition, we have $0 \le f(w) \le \tilde O(d)$ for all $w \in \Delta_{N,2\eps}$.
Thus, we can apply the result above with the constraint $\mathcal{K}=\Delta_{N,2\eps}$.
Theorem~\ref{thm:pgd} follows by combining Theorem~\ref{lem:moreau-good} and Lemma~\ref{lem:minimax-pgd}.
We defer the proofs to Appendix~\ref{apx:algo-proof}.

\makeatletter{}
\section{Discussion} \label{sec:conc}

The main conceptual contribution of this work is to establish 
an intriguing connection between algorithmic high-dimensional
robust statistics and non-convex optimization. Specifically, we showed
that high-dimensional robust mean estimation can be efficiently 
solved by directly applying a first-order method to a natural non-convex
formulation of the problem. 

The main technical contribution of this paper is in showing
that any approximate stationary point of our non-convex objective suffices
to solve the underlying learning problem. Our novel structural result may be viewed as 
an explanation as to why robust mean estimation can be solved
efficiently in high dimensions, despite its non-convexity. Specifically, we establish that the optimization
landscape of our non-convex objective is well-behaved, in a precise sense.

There are a number of directions along which our results could be improved.
At the technical level, it would be interesting to obtain faster convergence rates
for gradient descent (or other first-order methods), with linear convergence 
as the ultimate goal. We note that our upper bound is fairly loose and we did not make an explicit
effort to optimize the polynomial dependence. 

A natural direction is to extend our approach to more general robust estimation tasks, 
including covariance estimation~\cite{DKKLMS16, CDGW19}, 
sparse PCA~\cite{BDLS17, DKKPS19-sparse}, and robust regression~\cite{KlivansKM18, DKS19-lr}.
Such generalizations will appear in a followup work.

\section{Acknowledgments}
We thank Jelena Diakonikolas for sharing her expertise in optimization.

\bibliography{allrefs}
\bibliographystyle{alpha}


\appendix

\makeatletter{}
\section{Setting Constants in Section~\ref{sec:struc}}
\label{apx:const}
In this section, we describe how to appropriately set the universal constants $c_1, \ldots, c_4 \ge 1$ in Section~\ref{sec:struc}.
These constants are set in the following order: $c_1, c_3, c_4, c_2$.
In this order, each $c_i$ only depends on the constants set before it, and there is only a lower bound requirement on the value of each $c_i$ so we can set $c_i$ to a sufficiently large constant.

The constant $c_1$ appears in Condition~\eqref{eqn:good-moments-remaining}.
and is related to the constants involved in the concentration inequalities required to establish this condition.
With the right sample complexity, Condition~\eqref{eqn:good-moments-remaining} holds 
with high probability for $\delta = c_1 \eps \ln(1/\eps)$.

For the remaining three constants, recall that by assumption 
$r = \normtwo{\mu_w - \mus} \ge c_2 \eps \sqrt{\ln(1/\eps)} \ge \eps \sqrt{\ln(1/\eps)}$.

Next we choose $c_3$ such that $c_3 \ge 5 c_1$.
This is to guarantee that, in the proof of Lemma~\ref{lem:good-small-gr}, we have
$2 c_1 \ln(1/\eps) + 2 \sqrt{2 c_1 \ln(1/\eps)} \cdot r \le c_3 \cdot \frac{r^2}{\eps^2}$.

The constant $c_4$ appears in the proof of Lemma~\ref{lem:bad-large-gr}.
There are two inequalities related to $c_4$.
We need $c_4 \ge 50 c_1$ so that $\frac{c_4 r^2}{\eps} - c_1 \eps \ln(1/\eps) \ge 0.98 \cdot c_4 \cdot \frac{r^2}{\eps}$,
and we require $c_4 \ge \max(100, 6 c_3)$ so that $\frac{0.49 \cdot c_4 \cdot r^2}{\eps^2} - \frac{1.4 \cdot \sqrt{c_4} \cdot r^2}{\eps} > 2 c_3 \cdot \frac{r^2}{\eps^2}$.

Finally, we set the value of $c_2$, which appears in our final guarantee: 
we show that any stationary point $w$ of $f(w)$ satisfies $\normtwo{\mu_w - \mus} \le c_2 \eps \sqrt{\ln(1/\eps)}$.
The constant $c_2$ only depends on $c_4$.
At the beginning of the proof of Lemma~\ref{lem:bad-large-gr}, we need that if $\normtwo{\mu_w - \mus} \ge c_2 \eps \sqrt{\ln(1/\eps)}$, then $\normtwo{\Sigma_w} \ge 1 + c_4 \cdot \frac{r^2}{\eps}$.
By Lemma~\ref{thm:dkklms} from~\cite{DKKLMS16}, we know that this is possible if we set $c_2$ to be sufficiently large.

\section{Missing Proofs from Section~\ref{sec:algo}}
\label{apx:algo-proof}
In this section, we prove Theorem~\ref{lem:moreau-good} and Lemma~\ref{lem:minimax-pgd} from Section~\ref{sec:algo}.
These two statements play an important role in showing that projected sub-gradient descent efficiently finds 
an approximate stationary point $w$, and that $w$ is a good solution to our robust mean estimation task.

We briefly recall our notation. We use $X \in \R^{d \times N}$ to denote the sample matrix, 
$\Sigma_w = \left(X \diag(w) X^\top - X w w^\top X^\top\right)$,
$F(w, u) = u^\top \Sigma_w u$, $f(w) = \max_u F(w, u) = \normtwo{\Sigma_w}$, 
and $\Delta_{N,\eps} = \left\{ w \in \R^N : \normone{w}=1 \text{ and } 0 \le w_i \le \tfrac{1}{(1-\eps) N} \forall i \right\}$.

Note that we can assume without loss of generality that no input samples have very large $\ell_2$-norm.
This is because we can perform a standard preprocessing step that centers the input samples at the coordinate-wise median, 
which does not affect our mean estimation task.
We can then throw away all samples that are $\Omega(\sqrt{d \log d})$ far from the coordinate-wise median.
With high probability, the coordinate-wise median of all good samples are $O(\sqrt{d \log d})$ far from the true mean.
Assuming this happens, then no good samples are thrown away 
and the remaining samples satisfy $\max_i \normtwo{X_i} = O(\sqrt{d \log d})$.
Consequently, we have $\normtwo{\mu_w} = O(\sqrt{d \log d})$ for any $w \in \Delta_{N, \eps}$.

In Lemma~\ref{lem:minimax-L-beta}, we show that the function $F(w, u) = u^\top \Sigma_w u$ is Lipschitz and smooth 
with respect to $w$. 

\begin{lemma}
\label{lem:minimax-L-beta}
The function $F(w, u)$ is $L$-Lipschitz and $\beta$-smooth for $L = \tilde O(\sqrt{N} d)$ and $\beta = \tilde O(N d)$.
That is,
\begin{align*}
\abs{F(w, u) - F(\tilde w, u)} \le L \normtwo{\tilde w - w} \quad &\text{ for all } \; w, \tilde w, \in \Delta_{N,2\eps} \; \text{ and all unit vectors } \; u \in \R^d \\
\normtwo{\nabla_w F(w, u) - \nabla_w F(\tilde w, u)} \le \beta \normtwo{\tilde w - w} \quad &\text{ for all } \; w, \tilde w, \in \Delta_{N,2\eps} \; \text{ and all unit vectors } \; u \in \R^d \; .
\end{align*}
\end{lemma}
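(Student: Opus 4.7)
The plan is to exploit the fact that, for any fixed unit vector $u$, the function $F(w,u)=u^\top \Sigma_w u$ is a quadratic in $w$, so both its gradient and Hessian admit simple closed forms. Writing $y = X^\top u \in \R^N$ and expanding $\Sigma_w = X\diag(w)X^\top - Xww^\top X^\top$, I get
\[
F(w,u) = \sum_{i=1}^N w_i y_i^2 \;-\; (y^\top w)^2,
\]
so that $\nabla_w F(w,u) = y\odot y - 2(y^\top w)\,y$ and $\nabla_w^2 F(w,u) = -2\,y y^\top$. Both bounds then reduce to controlling $\|y\|_\infty$, $\|y\|_2$, and $\|w\|_2$.

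For smoothness, the Hessian is the rank-one matrix $-2yy^\top$ whose operator norm is $2\|y\|_2^2 \le 2\|X\|_2^2\|u\|_2^2 = 2\|X\|_2^2$. After the preprocessing step stated at the beginning of the appendix (which guarantees $\max_i \|X_i\|_2 = O(\sqrt{d\log d})$ without affecting any good sample), we bound $\|X\|_2^2 \le \|X\|_F^2 = \sum_i \|X_i\|_2^2 = O(Nd\log d) = \tilde O(Nd)$. Since $F(\cdot,u)$ is quadratic in $w$ with Hessian of bounded operator norm, $\beta$-smoothness with $\beta = \tilde O(Nd)$ follows immediately (this is the clean way to phrase it; the Lipschitz bound on the gradient then follows on the bounded domain $\Delta_{N,2\eps}$ by the mean-value theorem).

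For Lipschitzness, I would bound $\|\nabla_w F(w,u)\|_2$ directly via the triangle inequality:
\[
\|\nabla_w F(w,u)\|_2 \;\le\; \|y\odot y\|_2 + 2\,|y^\top w|\,\|y\|_2.
\]
Estimating $\|y\odot y\|_2 \le \|y\|_\infty\|y\|_2$, using $|y_i|=|X_i^\top u|\le \|X_i\|_2 = \tilde O(\sqrt d)$ and $\|y\|_2 \le \|X\|_2 = \tilde O(\sqrt{Nd})$, gives $\|y\odot y\|_2 = \tilde O(\sqrt N\,d)$. For the cross term, Cauchy--Schwarz combined with $\|w\|_2 \le \sqrt{\|w\|_1 \|w\|_\infty} \le 1/\sqrt{(1-2\eps)N} = O(1/\sqrt N)$ yields $|y^\top w| \le \|y\|_2 \|w\|_2 = \tilde O(\sqrt d)$, and thus $2\,|y^\top w|\,\|y\|_2 = \tilde O(\sqrt N\,d)$. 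Summing gives $L = \tilde O(\sqrt N\,d)$. Since the bound on the gradient norm does not depend on the particular $w \in \Delta_{N,2\eps}$, the claimed Lipschitz estimate $|F(w,u)-F(\tilde w,u)| \le L\|\tilde w - w\|_2$ follows from the fundamental theorem of calculus along the segment between $w$ and $\tilde w$, which stays in the convex set $\Delta_{N,2\eps}$.

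No step is really an obstacle here: since $F(\cdot,u)$ is quadratic in $w$, both claims reduce to the elementary norm estimates above, and the only subtlety is correctly invoking the preprocessing bound $\max_i\|X_i\|_2 = \tilde O(\sqrt d)$ to convert $\|X\|_F^2$ into the stated $\tilde O(Nd)$ (and picking up the $\log$-factors hidden in the $\tilde O$ notation). I would make sure to state the preprocessing assumption explicitly at the top of the proof so that the constants $L$ and $\beta$ are unambiguous.
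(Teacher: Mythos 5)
Your proof is correct and follows essentially the same route as the paper: compute the explicit gradient $y\odot y - 2(y^\top w)y$ with $y = X^\top u$, bound its norm for the Lipschitz constant, and bound $\|\nabla_w F(w,u) - \nabla_w F(\tilde w,u)\| = 2|y^\top(w-\tilde w)|\|y\|_2 \le 2\|X\|_2^2\|w-\tilde w\|_2$ for smoothness, with the preprocessing step giving $\max_i\|X_i\|_2 = \tilde O(\sqrt d)$ and $\|X\|_2 = \tilde O(\sqrt{Nd})$. The only cosmetic difference is that you bound the cross term $|y^\top w|$ via Cauchy--Schwarz and $\|w\|_2 = O(1/\sqrt N)$, whereas the paper uses H\"older with $\|w\|_1 = 1$; both yield the same $\tilde O(\sqrt N\,d)$.
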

\begin{proof}
We use the $\ell_2$-norm of the gradient to bound $L$ from above.
We have
\begin{align*}
\normtwo{\nabla_w F(w, u)}
&= \normtwo{X^\top u \odot X^\top u - 2 (u^\top X w) X^\top u} \\
&\le \sqrt{N} \max_i (X_i^\top u)^2 + 2 \norminf{u^\top X} \normone{w} \normtwo{X} \normtwo{u} \\
&\le \sqrt{N} \max_i \normtwo{X_i}^2 + 2 \max_i \normtwo{X_i} \normtwo{X} \; .
\end{align*}
To bound from above the smoothness parameter, we have
\[
\normtwo{\nabla_w F(w, u) - \nabla_w F(\tilde w, u)} = 2 \abs{u^\top X (w - \tilde w)} \normtwo{X^\top u} \le 2 \normtwo{X}^2 \normtwo{w - \tilde w} \; .
\]
We conclude the proof by observing that, after the preprocessing step, we have $\max_i \normtwo{X_i} = O(\sqrt{d \log d})$ and consequently $\normtwo{X} = O(\sqrt{N d \log d})$.
Therefore, $L = O(\sqrt{N} d \log d)$ and $\beta = O(N d \log d)$.
\end{proof}

Recall that the Moreau envelope $f_\beta(w)$ is defined as
\[
f_\beta(w) = \min_{\tilde w}\II_{\KK}(\tilde w) + F(\tilde w) + \beta \normtwo{\tilde w - w}^2 = \min_{\tilde w \in \KK} f(\tilde w) + \beta \normtwo{\tilde w - w}^2 \;,
\]
where $\II_{\KK}(\cdot)$ is the support function of $\KK$.

We restate Theorem~\ref{lem:moreau-good} before proving it.

{\noindent \bf Theorem~\ref{lem:moreau-good}.~}
{\em
Consider the spectral norm loss $f(w)=\|\Sigma_w\|_2$ with $f_\beta(w)$ denoting 
the corresponding Moreau envelope function per Definition~\ref{def:moreau} with $\beta=2\|X\|_2^2$. 
Then, for any $w \in \Delta_{N,2\eps}$ obeying
\[
\normtwo{\nabla f_\beta(w)} = O(\log(1/\eps)),
\]
we have $\normtwo{\mu_w - \mus} = O(\eps \sqrt{\log(1/\eps)})$.
}
\begin{proof}
Let $\delta = \frac{c_3 c_2^2 \ln(1/\eps)}{\sqrt{2}}$, 
where $c_2$ and $c_3$ are the positive universal constants from Lemma~\ref{lem:bad-large-gr}.
We show that any $w \in \Delta_{N,2\eps}$ obeying $\normtwo{\nabla f_\beta(w)} \le \delta$ 
must satisfy that $\normtwo{\mu_w - \mus} \le O(\eps \sqrt{\log(1/\eps)})$.

The condition $\normtwo{\nabla f_\beta(w)} \le \delta$ implies that there exists a vector $\hat w$ 
such that (see, e.g.,~\citet{rockafellar2015convex}):
\[
\normtwo{\hat w - w} = \frac{\delta}{2\beta} \quad \text{ and }\quad \min_{g \in \partial f(\hat w) + \partial \II_{\KK}(\hat w)} \normtwo{g} \le \delta \;.
\]
We first show that $\hat w$ is a good solution.

It is well known that the subdifferential of the support function is the normal cone, which is in turn the polar of the tangent cone.
That is,
\[
\partial \II_{\KK}(\hat w) = \NN_{\KK}(\hat w) = (\CC_{\KK}(\hat w))^\circ \; .
\]
Thus, there exists a vector $g = \nu + v$ with $\normtwo{g} \le \delta$ such that 
$\nu \in \partial f(\hat w)$ and $v \in (\CC_{\KK}(\hat w))^\circ$.
Now consider any unit vector $u \in \CC_{\KK}(\hat w)$:
\[
-\delta \le u^\top g = u^\top \nu + u^\top v \le u^\top \nu \;,
\]
where the last step follows from the definition of the polar set.
In other words, there exists a vector $\nu \in \partial f(\hat w)$ such that
\begin{equation}
\label{eqn:u-subgradient}
- \nu^\top u \le \delta \quad \text{ for all unit vectors } \; u \in \CC_{\KK}(\hat w) \;.
\end{equation}

Suppose $\normtwo{\mu_{\hat w} - \mus} \ge c_2 \eps \sqrt{\ln(1/\eps)}$. 
Then for the $v \in \partial f(\hat w)$ in question, we can use Lemmas~\ref{lem:bad-large-gr}~and~\ref{lem:good-small-gr} 
to find two coordinates $i$ and $j$ such that
\[
\hat w_i > 0, \; \hat w_j < \frac{1}{(1-2\eps)N}, \; \text{ and } \; \nu_i - \nu_j > c_3 \frac{\normtwo{\mu_{\hat w} - \mus}^2}{\eps^2} \ge c_3 c_2^2 \ln(1/\eps) = \sqrt{2} \delta \;.
\]
However, this contradicts Condition~\eqref{eqn:u-subgradient}, 
because for the unit vector $u = \frac{1}{\sqrt{2}}(e_j - e_i)$, where $e_i$ is the $i$-th basis vector,
we have $u \in \CC_{\Delta_{N,2\eps}}(\hat w)$ but
\[
- \nu^\top u = \frac{\nu_i - \nu_j}{\sqrt{2}} > \delta \;.
\]
Therefore, $\hat w$ must satisfy $\normtwo{\mu_{\hat w} - \mus} < c_2 \eps \sqrt{\ln(1/\eps)}$.

We conclude the proof by noticing that $w$ is very close to $\hat w$, 
so if $\hat w$ is a good solution, then $w$ must also be a good solution:
\begin{align*}
\normtwo{\mu_w - \mus}
&\le \normtwo{\mu_w - \mu_{\hat w}} + \normtwo{\mu_{\hat w} - \mus} \\
&\le \normtwo{X} \normtwo{w - \hat w} + c_2 \eps \sqrt{\ln(1/\eps)} \\
&= O(\beta^{-1/2} \delta + \eps \sqrt{\log(1/\eps)}) = O(\eps \sqrt{\log(1/\eps)}) \; .
\end{align*}
In the last two steps, we used the fact that $\normtwo{\hat w - w} = \frac{\delta}{2\beta}$ 
and $\beta = 2 \normtwo{X}^2$ (see Lemma~\ref{lem:minimax-L-beta}). 
This completes the proof of Theorem~\ref{lem:moreau-good}.
\end{proof}

We restate Lemma~\ref{lem:minimax-pgd} before proving it.
We note that the proof of Lemma~\ref{lem:minimax-pgd} is directly inspired by the proof of Theorem 2.1 in~\cite{davis2018stochastic}.

{\noindent \bf Lemma~\ref{lem:minimax-pgd}.~}
{\em
Let $\KK$ be a closed convex set.
Let $F(w, u)$ be a function which is $L$-Lipschitz and $\beta$-smooth with respect to $w$.
Consider the following optimization problem $\min_{w \in \KK} \max_{\normtwo{u}=1} F(w, u)$.

Starting from any initial point $w_0 \in \KK$, we run iterative updates of the form:
\begin{align*}
&\text{Find } u_\tau \text{ with } F(w_\tau, u_\tau) \ge (1-\eps') \max_{u} F(w_\tau, u_\tau) ; \\
&w_{\tau+1} = \PP_{\KK}(w_\tau - \eta \nabla_w F(w_\tau, u_\tau) \;,
\end{align*}
for $T$ iterations with step size $\eta = \frac{\gamma}{\sqrt{T}}$. Then, we have 
\begin{align*}
&\min_{0 \le \tau < T} \normtwo{\nabla f_\beta(w_\tau)}^2 \\
&\quad \le \frac{2}{\sqrt{T}} \left(\frac{f_\beta(w_0) - \min_w f(w)}{\gamma} + \gamma \beta L^2 \right) + 4 \beta \eps' \;,
\end{align*}
where $f_\beta(w)$ is the Moreau envelope, as in Definition~\ref{def:moreau}.
}
\begin{proof}
Note that since $f$ is $\beta$-smooth with respect to $w$ and $u_\tau$ is an approximate maximizer 
for $w_\tau$, for any $\tilde w \in \KK$, we have that
\begin{align}
f(\tilde w) \ge F(\tilde w, u_\tau)
&\ge F(w_\tau, u_\tau) + (\nabla_w F(w_\tau, u_\tau))^\top (\tilde w - w_\tau) - \frac{\beta}{2} \normtwo{\tilde w - w_\tau}^2 \nonumber \\
&\ge f(w_\tau) - \eps' + (\nabla_w F(w_\tau, u_\tau))^\top (\tilde w - w_\tau) - \frac{\beta}{2} \normtwo{\tilde w - w_\tau}^2  \;. \label{eqn:tmp-moreau}
\end{align}
To continue, define the proximal function
\[
\prox_{f_\beta}(w) = \arg\min_{\tilde w \in \KK} \left(f(\tilde w) + \beta \normtwo{\tilde w - w} \right) \;, 
\]
and let $\hat w_\tau = \prox_{f_{\beta}}(w_\tau)$.

Now we have
\begin{align*}
f_\beta(w_{\tau+1})
&\le f(\hat w_\tau) + \beta \normtwo{\hat w_\tau - w_{\tau + 1}} \\
&= f(\hat w_\tau) + \beta \normtwo{\hat w_\tau - \Pi_{\KK}(w_\tau - \eta \nabla_w F(w_\tau, u_\tau))} \\
&\le f(\hat w_\tau) + \beta \normtwo{\hat w_\tau - w_\tau + \eta \nabla_w F(w_\tau, u_\tau)} \tag{convexity of $\KK$} \\
&= f(\hat w_\tau) + \beta \normtwo{\hat w_\tau - w_\tau}^2 + 2 \eta \beta (\nabla_w F(w_\tau, u_\tau))^\top (\hat w_\tau - w_\tau) + \eta^2 \beta \normtwo{\nabla_w F(w_\tau, u_\tau)}^2 \\
&= f_\beta(w_\tau) + 2 \eta \beta (\nabla_w F(w_\tau, u_\tau))^\top (\hat w_\tau - w_\tau) + \eta^2 \beta \normtwo{\nabla_w F(w_\tau, u_\tau)}^2 \tag{$\hat w_\tau = \prox_{f_{\beta}}(w_\tau)$} \\
&\le f_\beta(w_\tau) + 2 \eta \beta (\nabla_w F(w_\tau, u_\tau))^\top (\hat w_\tau - w_\tau) + \eta^2 \beta L^2 \tag{$F(w, u)$ is $L$-Lipschitz in $w$} \\
&\le f_\beta(w_\tau) + 2 \eta \beta \left(f(\hat w_\tau) - f(w_\tau) + \eps' + \frac{\beta}{2}\normtwo{\hat w_\tau - w_\tau}^2 \right) + \eta^2 \beta L^2 \; .\tag{by Inequality~\eqref{eqn:tmp-moreau}}
\end{align*}

Summing the above over $\tau$, 
we obtain
\[
f_\beta(w_T) \le f_{\beta}(w_0) + 2\eta\beta\sum_{\tau=0}^{T-1} \left(f(\hat w_\tau) - f(w_\tau) + \frac{\beta}{2}\normtwo{\hat w_\tau - w_\tau}^2 \right) + \eta^2 \beta L^2 T + 2 \eta \beta T \eps' \;.
\]
Dividing by $2 \eta \beta T$, we get
\begin{align*}
\frac{1}{T} \sum_{\tau=0}^{T-1} \left(f(w_\tau) - f(\hat w_\tau) - \frac{\beta}{2}\normtwo{\hat w_\tau - w_\tau}^2 \right)
&\le \frac{f_\beta(w_0) - f_\beta(w_T)}{2 \eta \beta T} + \frac{\eta L^2}{2} + \eps' \\
&\le \frac{f_\beta(w_0) - \min_w f(w)}{2 \eta \beta T} + \frac{\eta L^2}{2} + \eps' \;.
\end{align*}
Observe that the function $w \to f(w) + \beta \normtwo{w - w_\tau}^2$ is $\beta$-strongly convex, therefore
\begin{align*}
&\quad f(w_\tau) - f(\hat w_\tau) - \frac{\beta}{2}\normtwo{\hat w_\tau - w_\tau}^2 \\
&= \left(f(w_\tau) + \beta \normtwo{w_\tau - w_\tau}^2 \right) - \left( f(\hat w_\tau) + \beta \normtwo{w_\tau - \hat w_\tau}^2 \right) + \frac{\beta}{2}\normtwo{w_\tau - \hat w_\tau}^2 \\
&\ge \frac{\beta}{2} \normtwo{\hat w_\tau - w_\tau}^2 + \frac{\beta}{2} \normtwo{\hat w_\tau - w_\tau}^2 \tag{strong convexity} \\
&= \beta \normtwo{\hat w_\tau - w_\tau}^2 = \frac{1}{4\beta} \normtwo{\nabla f_{\beta}(w_\tau)}^2 \; .
\end{align*}
In the above, we used the fact that for a $\beta$-strongly convex function $h(w) = \II_{\KK}(w) + f(w) + \beta \normtwo{w - w_\tau}^2$, we have $g(w_\tau) - g(\hat w_\tau) \ge \frac{\beta}{2} \normtwo{w_\tau - \hat w_\tau}^2$.

Combining the two inequalities above, we arrive at
\[
\frac{1}{T} \sum_{\tau=0}^{T-1} \normtwo{\nabla f_{\beta}(w_\tau)}^2 \le 
  2 \frac{f_\beta(w_0) - \min_w f(w)}{\eta T} + 2 \eta \beta L^2 + 4 \beta \eps' \; .
\]
Finally, setting the step size $\eta = \frac{\gamma}{\sqrt{T}}$, we conclude that
\[
\min_{0 \le \tau < T} \normtwo{\nabla f_{\beta}(w_\tau)}^2 \le \frac{2}{\sqrt{T}} \left(\frac{f_\beta(w_0) - \min_w f(w)}{\gamma} + \gamma \beta L^2 \right) + 4 \beta \eps' \;. 
\]
This completes the proof of Lemma~\ref{lem:minimax-pgd}. \qedhere
\end{proof}

\section{Minimizing Softmax of Spectral Norm}
\label{sec:softmax}
In this section, we analyze our alternate non-convex formulation that replaces the spectral 
norm with a softmax. Note that when the largest eigenvalue of $\Sigma_w$ is not unique, 
the spectral norm of $\Sigma_w$ may not be differentiable with respect to $w$.
Instead of considering sub-gradients, we can minimize the softmax of the eigenvalues of $\Sigma_w$, 
which is a smoothed version of spectral norm that is differentiable everywhere.

Formally, we minimize the following non-convex objective function:
\begin{align}
\label{eqn:f-smax}
f(w) &= \smax_{\rho}\left(\Sigma_w\right) = \frac{1}{\rho}\ln \tr(\exp(\rho \Sigma_w)) \qquad \text{for} \qquad \rho = \frac{\ln d}{\eps} \; ,
\end{align}
where $X \in \R^{d \times N}$ is the sample matrix, and $\Sigma_w = X \diag(w) X^\top - X w w^\top X^\top$ is the weighted empirical covariance matrix.

The structure of this section is as follows: 
In Section~\ref{sec:prelim-smax}, we start by recording some useful properties of the softmax objective.
In Section~\ref{sec:struc-smax}, we prove our key structural result for this section (Theorem~\ref{thm:no-bad-local-opt-smax}), 
establishing that any approximate stationary point $w$ of $f(w)$ provides a good estimate $\mu_w$ of the true mean $\mus$ .
In Section~\ref{sec:algo-smax}, we present our algorithmic result (Theorem~\ref{thm:final-softmax}), 
which states that we can efficiently find an approximate stationary point of $f(w)$ via projected gradient descent.

\subsection{Basic Properties of Softmax} \label{sec:prelim-smax}
\begin{lemma}[Duality of softmax]
\label{lem:smax-mat-dual}
For any $Z \in \R^{n\times n}$ and $\rho>0$, let $\smax_\rho(Z) \eqdef \frac{1}{\rho}\ln \tr(\exp(\rho Z))$.
We have the following identity
\[
\smax_\rho(Z)  = \max_{Y \in \Delta_{n\times n}} \left(Y \bullet Z - \frac{1}{\rho} Y \bullet \log Y \right) \;.
\]
\end{lemma}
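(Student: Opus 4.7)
The plan is to prove this identity (which is the matrix version of Gibbs' variational principle / the log-sum-exp duality) by Lagrangian analysis. First I would observe that the objective $g(Y) := Y \bullet Z - \frac{1}{\rho} Y \bullet \log Y$ is concave on the set of positive definite matrices: the first term is linear in $Y$, while $Y \bullet \log Y = \tr(Y \log Y)$ is the (negative) von Neumann entropy, which is a standard convex function on the PSD cone. Combined with the fact that $\Delta_{n\times n}$ is convex and compact, this guarantees a unique interior maximizer which is fully characterized by first-order optimality.

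Next I would introduce a Lagrange multiplier $\lambda \in \R$ for the trace constraint $\tr(Y)=1$ (with the PSD constraint handled implicitly by an interior-point argument) and use the matrix-calculus identity $\nabla_Y \tr(Y\log Y) = \log Y + I$, valid for symmetric $Y \succ 0$. Stationarity then gives
\[
Z - \tfrac{1}{\rho}(\log Y + I) - \lambda I = 0 \quad \Longleftrightarrow \quad \log Y = \rho Z - (1+\rho\lambda)\, I ,
\]
so $Y^\star = \exp(\rho Z)\cdot e^{-(1+\rho\lambda)}$. Choosing $\lambda$ to enforce $\tr(Y^\star)=1$ yields the Gibbs state
\[
Y^\star = \frac{\exp(\rho Z)}{\tr(\exp(\rho Z))},
\]
which is automatically PSD, hence feasible.

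To finish, I would substitute $Y^\star$ back into $g$. Using $\log Y^\star = \rho Z - \log\tr(\exp(\rho Z))\cdot I$ and $\tr(Y^\star)=1$,
\[
\tfrac{1}{\rho}\, Y^\star \bullet \log Y^\star \;=\; Y^\star \bullet Z \;-\; \tfrac{1}{\rho}\log \tr(\exp(\rho Z)),
\]
so $g(Y^\star) = \frac{1}{\rho}\log \tr(\exp(\rho Z)) = \smax_\rho(Z)$, which is exactly what we needed. The only subtle step is justifying the matrix-calculus identity $\nabla_Y \tr(Y\log Y) = \log Y + I$ on symmetric matrices; this can be derived directly from the integral representation $\log Y = \int_0^\infty \bigl((1+t)^{-1}I - (Y+tI)^{-1}\bigr)\,dt$, or one may bypass matrix calculus entirely by noting that since the optimizer commutes with $Z$, both $Y$ and $Z$ can be simultaneously diagonalized and the identity reduces to the scalar log-sum-exp duality applied to the eigenvalues.
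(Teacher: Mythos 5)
Your proposal is correct and takes essentially the same approach as the paper: both derive the Gibbs-state optimizer $Y^\star = \exp(\rho Z)/\tr(\exp(\rho Z))$ from the stationarity condition $Z - \tfrac{1}{\rho}(\log Y + I) = \lambda I$ with the trace constraint's multiplier $\lambda$, and then substitute back to recover $\smax_\rho(Z)$. You are somewhat more careful than the paper in explicitly invoking concavity of the objective and in discussing how to justify the identity $\nabla_Y \tr(Y\log Y) = \log Y + I$, but the argument is the same.
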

\begin{proof}
Fix $Z \in \R^{n \times n}$.
Let $f(Y) = Y\bullet Z - \frac{1}{\rho} Y\bullet \log Y$.
Using the KKT conditions, we know that when $f(Y)$ is maximized, 
we have $\frac{\partial f}{\partial Y} = \lambda I$, for some $\lambda \in \R$.
Combining this with $\frac{\partial f}{\partial Y} = Z - \frac{1}{\rho}(\log Y + I)$, it follows that $f(Y)$ is maximized at
\[
Y^\star = \exp(\rho Z - (\rho \lambda + 1) I) = \frac{\exp(\rho Z)}{\tr (\exp(\rho Z))} \;,
\]
where the second equality holds because $Y^\star \in \Delta_{n \times n}$.
One can substitute $Y^\star$ into the definition of $f(Y)$ and verify that $f(Y^\star) = \smax_\rho(Z)$.
\end{proof}

\begin{corollary}[Softmax and max] \label{cor:smax-max}
For any PSD matrix $Z \in \R^{n\times n}$ and $\rho>0$, we have that
$\lambda_{\max}(Z) \le \smax_\rho(Z) \le \lambda_{\max}(Z) + \frac{\ln n}{\rho}$.
Moreover, for $Y = \frac{\exp(\rho Z)}{\tr (\exp(\rho Z))}$, we have that $Y \bullet Z \ge \smax_\rho(Z) - \frac{\ln n}{\rho}$.
\end{corollary}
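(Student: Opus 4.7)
The plan is to reduce the two scalar inequalities about $\smax_\rho$ to elementary bounds on $\log\sum_i e^{\rho \lambda_i}$, and then to derive the matrix-valued inequality about $Y\bullet Z$ directly from the duality formula of Lemma~\ref{lem:smax-mat-dual}.

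First I would diagonalize $Z$. Since $Z$ is PSD and symmetric, the spectral theorem gives eigenvalues $\lambda_1\ge\cdots\ge\lambda_n\ge 0$, and the matrix exponential commutes with the eigen-decomposition, so
\[
\tr(\exp(\rho Z)) \;=\; \sum_{i=1}^n \exp(\rho\lambda_i).
\]
From here the scalar inequalities are straightforward. Keeping only the largest term in the sum yields $\tr(\exp(\rho Z)) \ge \exp(\rho\lambda_{\max}(Z))$, which after taking $\tfrac{1}{\rho}\ln(\cdot)$ gives $\smax_\rho(Z) \ge \lambda_{\max}(Z)$. Conversely, bounding each of the $n$ summands by the maximum gives $\tr(\exp(\rho Z)) \le n \exp(\rho\lambda_{\max}(Z))$, so $\smax_\rho(Z) \le \lambda_{\max}(Z) + \tfrac{\ln n}{\rho}$.

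For the final inequality, I would apply Lemma~\ref{lem:smax-mat-dual} directly to the specific $Y = \exp(\rho Z)/\tr(\exp(\rho Z))$, which (as shown in the proof of that lemma) is the maximizer of the variational expression. This gives the identity
\[
\smax_\rho(Z) \;=\; Y\bullet Z - \tfrac{1}{\rho}\, Y\bullet \log Y,
\]
so it suffices to show $-Y\bullet \log Y \le \ln n$. Here I would note that $Y\in\Delta_{n\times n}$ is a density matrix, so its eigenvalues $p_1,\ldots,p_n$ form a probability distribution on $[n]$, and $-Y\bullet \log Y = -\sum_i p_i\log p_i$ is the Shannon entropy of this distribution, which is maximized by the uniform distribution at value $\ln n$. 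Rearranging the identity then gives $Y\bullet Z \ge \smax_\rho(Z) - \tfrac{\ln n}{\rho}$.

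There is no real obstacle here: all three inequalities are classical consequences of the log-sum-exp structure of $\smax_\rho$ and the bounded entropy of a trace-one PSD matrix, once Lemma~\ref{lem:smax-mat-dual} is in hand. The only point requiring minor care is making sure that the eigenvalue-based reduction to $\log\sum_i e^{\rho\lambda_i}$ is carried out cleanly for matrices, and that the entropy bound is correctly stated in terms of the eigenvalues of $Y$ rather than its entries.
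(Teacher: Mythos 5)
Your proof is correct and follows essentially the same route as the paper: both establish the two scalar bounds by comparing $\tr(\exp(\rho Z))$ against the largest eigenvalue of $\exp(\rho Z)$ (with and without the factor of $n$), and both derive the third inequality from the identity $\smax_\rho(Z)=Y\bullet Z-\tfrac{1}{\rho}Y\bullet\log Y$ of Lemma~\ref{lem:smax-mat-dual} together with the bound $-Y\bullet\log Y\le\ln n$ for $Y\in\Delta_{n\times n}$. Your explicit diagonalization and the Shannon-entropy phrasing are presentational variants of the same argument, not a different method.
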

\begin{proof}
Observe that
\[
\smax_\rho(Z) = \frac{1}{\rho}\ln \tr(\exp(\rho Z)) \ge \frac{1}{\rho}\ln \lambda_{\max}(\exp(\rho Z)) = \lambda_{\max}(Z) \; ,
\]
and
\[
\smax_\rho(Z) = \frac{1}{\rho}\ln \tr(\exp(\rho Z)) \le \frac{1}{\rho}\ln (n \cdot \lambda_{\max}(\exp(\rho Z))) = \lambda_{\max}(Z) + \frac{\ln n}{\rho} \; .
\]

For the second claim, by Lemma~\ref{lem:smax-mat-dual}, we know that $\smax_\rho(Z) = Y \bullet Z - \frac{1}{\rho}Y \bullet \log Y$.
The claim then follows from the fact that $Y \bullet \log Y \ge -\ln n$ for all $Y \in \Delta_{n \times n}$.
\end{proof}

When working with the matrix exponentials in our softmax objective function $f$, the following chain rule formula will be useful 
to compute the Hessian of $f$ (see, e.g.,~\cite{Wilcox1967exp}).

\begin{lemma}[Derivative of matrix exponential]
\label{lem:d-mat-exp}
For a symmetric matrix function $X(t)$ that depends on a scalar $t$, we have that
\[
\frac{d}{d t} \exp(X(t)) = \int_{0}^1 \exp(\alpha X(t)) \frac{d X(t)}{d t} \exp((1-\alpha) X(t)) d \alpha\; .
\]
\end{lemma}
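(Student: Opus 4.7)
The plan is to derive the formula via a standard difference-quotient argument combined with an auxiliary integral identity that reduces the problem to differentiating products. The central observation is that while $X(t)$ in general does not commute with $dX/dt$, one can still obtain a clean formula by carefully interpolating between $X(t)$ and $X(t+h)$ using the matrix exponential itself as a "weight", and then passing to the limit.

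First, I would establish the auxiliary identity: for any two matrices $A, B$ (not necessarily commuting),
\[
\exp(A) - \exp(B) = \int_0^1 \exp(\alpha A)\,(A - B)\,\exp((1-\alpha) B)\, d\alpha.
\]
This is proven by introducing $\phi(\alpha) := \exp(\alpha A) \exp((1-\alpha) B)$ and computing $\phi'(\alpha) = \exp(\alpha A)(A - B)\exp((1-\alpha) B)$ via the product rule, using the fact that $A$ commutes with $\exp(\alpha A)$ and $B$ with $\exp((1-\alpha) B)$. Integrating $\phi'$ from $0$ to $1$ and noting $\phi(1) = \exp(A)$, $\phi(0) = \exp(B)$ yields the identity.

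Next, I would apply this identity with $A = X(t+h)$ and $B = X(t)$, obtaining
\[
\frac{\exp(X(t+h)) - \exp(X(t))}{h} = \int_0^1 \exp(\alpha X(t+h))\,\frac{X(t+h) - X(t)}{h}\,\exp((1-\alpha) X(t))\, d\alpha.
\]
Then I would let $h \to 0$. Since $X(\cdot)$ is differentiable (hence continuous), $(X(t+h) - X(t))/h \to dX/dt$, and $\exp(\alpha X(t+h)) \to \exp(\alpha X(t))$ uniformly on $\alpha \in [0,1]$ by continuity of the matrix exponential on bounded sets. Passing the limit inside the integral then yields the claimed formula.

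The only mild technical point is justifying the interchange of limit and integral. This follows straightforwardly from uniform convergence on the compact interval $[0,1]$: the integrand depends continuously on $(\alpha, h)$ in a neighborhood of $h = 0$, and $X(\cdot)$ together with its derivative are locally bounded, so the integrands are uniformly bounded and converge uniformly in $\alpha$. I do not expect any real obstacle beyond this routine analytic justification.
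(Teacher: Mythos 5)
Your proof is correct and self-contained. The paper does not actually prove Lemma~\ref{lem:d-mat-exp}; it simply cites Wilcox (1967). Your derivation is the standard one: the auxiliary two-matrix identity
\[
\exp(A)-\exp(B)=\int_0^1 \exp(\alpha A)\,(A-B)\,\exp((1-\alpha)B)\,d\alpha
\]
obtained by integrating $\phi'(\alpha)$ for $\phi(\alpha)=\exp(\alpha A)\exp((1-\alpha)B)$ is exactly the Dyson/Wilcox identity, and applying it with $A=X(t+h)$, $B=X(t)$ and passing to the limit (justified by uniform convergence of the integrand on the compact interval $[0,1]$) is the usual route to the stated derivative formula. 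One small remark: the symmetry hypothesis in the lemma's statement plays no role in your argument (or in the result), so your proof in fact establishes the identity for arbitrary differentiable matrix-valued $X(t)$.
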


\subsection{Structural Result: Any Approximate Stationary Point Suffices}
\label{sec:struc-smax}
The gradient of our softmax objective function is
\begin{align}
\label{eqn:nabla-f}
\nabla f(w) &= \diag(X^\top Y X) - 2 X^\top Y X w \;,
  \quad \text{where} \quad
  Y = \frac{\exp(\rho \Sigma_w)}{\tr (\exp (\rho \Sigma_w))} \; .
\end{align}

Notice that $Y \in \Delta_{N \times N}$ is a convex combination of directions.
That is, we can write $Y = \sum_{k=1}^d \lambda_k u_k u_k^\top$, where $u_k \in \R^d$ and $\sum_k \lambda_k = 1$.
The gradient $\nabla f(w)$ is the same as the gradient of $w$ for the one-dimensional problem, 
where the input samples are $(X_i^\top Y^{1/2})_{i=1}^N$. 
Equivalently, $\nabla f(w)$ tries to move $w$ towards minimizing the average variance
\[
\sum_k \lambda_k \left(\sum_{i} w_i (X_i^\top u_k)^2 - \left(\sum_{i} w_i (X_i^\top u_k)\right)^2 \right)
\]
of the projections of $X$ along the directions $\{u_k\}$.

The intuition is as follows:
The goal is to show that $\lambda_{\max}(\Sigma_w)$ is small at any stationary point $w$ of $\smax_\rho(\Sigma_w)$. 
Now fix some $w \in \Delta_{N,2\eps}$, where $\lambda_{\max}(\Sigma_w)$ is large. 
Then $\smax_\rho(\Sigma_w)$ must be large.
By the duality of softmax, there is a combination of directions $Y$ such that: 
(1) the one-dimensional samples $(X_i^\top Y^{1/2})_{i=1}^N$ weighted by $w$ have large variance, 
and (2) the derivative of $\smax_\rho(\Sigma_w)$ is the same as the derivative 
for minimizing variance on this one-dimensional instance.
We proceed by examining this one-dimensional instance, which is easier to analyze.
We show that $w$ cannot be a stationary point, because we can always reduce 
the variance by increasing the weight on one of the good samples 
and reducing the weight on one of the bad samples.

Formally, we use the following notion of approximate stationarity for our constrained 
non-convex minimization problem.

\begin{definition}
\label{def:stationary-smax}
Fix a convex set $\KK$.
For $\delta>0$, we say $x \in \KK$ is a $\delta$-stationary point of $f$ if the following condition holds:
For any unit vector $u$ where $x + \alpha u \in \KK$ for some $\alpha > 0$, we have $u^\top \nabla f(x) \ge - \delta$.
\end{definition}

Our main structural result in this section is the following theorem.

\begin{theorem}[Any stationary point of $f(w)$ is a good solution]
\label{thm:no-bad-local-opt-smax}
Let $S$ be an $\eps$-corrupted set of $N = \tilde \Omega(d/\eps^2)$ samples drawn 
from a $d$-dimensional Gaussian $\NN(\mus, I)$ with unknown mean $\mus$.
Suppose $S$ satisfies Condition~\eqref{eqn:good-moments-remaining} and Lemma~\ref{thm:dkklms}.

Let $f(w)$ be the softmax objective as defined in Equation~\eqref{eqn:f-smax}.
Let $\delta = c \ln(1/\eps)$ for some universal constant $c$.
For any $w \in \Delta_{N,2\eps}$ that is a $\delta$-stationary point of $f(w)$, 
we have $\normtwo{\mu_w - \mus} = O(\eps \sqrt{\log(1/\eps)})$.
\end{theorem}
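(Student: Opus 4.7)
The plan is to mirror the two-step strategy of Theorem~\ref{thm:no-bad-local-opt}, with the role of the spectral-norm's top eigenvector $u$ played by the softmax dual matrix $Y = \exp(\rho\Sigma_w)/\tr(\exp(\rho\Sigma_w)) \in \Delta_{d\times d}$ from Lemma~\ref{lem:smax-mat-dual}. The first step turns $\delta$-stationarity into a \emph{bimodal gradient property}: for any $i \in S_- := \{k : w_k > 0\}$ and $j \in S_+ := \{k : w_k < 1/((1-2\eps)N)\}$, the unit direction $(e_j - e_i)/\sqrt{2}$ keeps $w$ inside $\Delta_{N,2\eps}$, so Definition~\ref{def:stationary-smax} forces $\nabla f(w)_i - \nabla f(w)_j \le \sqrt{2}\,\delta$. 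Expanding \eqref{eqn:nabla-f} and recentering at $\mus$ gives the identity
\[
\nabla f(w)_i - \mus^\top Y(\mus - 2\mu_w) = (X_i - \mus)^\top Y (X_i - \mus) - 2(X_i - \mus)^\top Y (\mu_w - \mus),
\]
so a contradiction will come from finding a bad index $i$ with large $Y$-weighted deviation and a good index $j$ with small one, exactly paralleling Lemmas~\ref{lem:bad-large-gr} and~\ref{lem:good-small-gr}.

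For the bad index, assume for contradiction that $r := \|\mu_w - \mus\|_2 \ge c_2\,\eps\sqrt{\ln(1/\eps)}$. Lemma~\ref{thm:dkklms} then gives $\lambda_{\max}(\Sigma_w) \ge 1 + c_4 r^2/\eps$, and the choice $\rho = \ln d/\eps$ combined with Corollary~\ref{cor:smax-max} yields $Y \bullet \Sigma_w \ge 1 + c_4 r^2/\eps - \eps$. Since $\sum_i w_i (X_i-\mus)(X_i-\mus)^\top \succeq \Sigma_w$, and since pairing Condition~\eqref{eqn:good-moments-remaining} against the trace-one PSD matrix $Y$ bounds the $G$-contribution by $1 + O(\eps\ln(1/\eps))$, the bad samples must carry $\Omega(r^2/\eps)$ of the total $Y$-weighted variance. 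Because $w_B \le 2\eps$, some $i\in B\cap S_-$ then satisfies $(X_i-\mus)^\top Y (X_i-\mus) = \Omega(r^2/\eps^2)$, and Cauchy--Schwarz (using $\|Y(\mu_w-\mus)\|_2\le \|Y\|_{\mathrm{op}}^{1/2}\sqrt{(\mu_w-\mus)^\top Y(\mu_w-\mus)}$ plus $\|\mu_w-\mus\|_2=r$) converts this into $\nabla f(w)_i - \mus^\top Y(\mus - 2\mu_w) \ge 2c_3 r^2/\eps^2$.

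For the good index, at most $(1-2\eps)N$ entries of $w$ can saturate the upper bound, so $|S_+|\ge 2\eps N$ and hence $|G\cap S_+|\ge \eps N$; fix any $G^+ \subseteq G\cap S_+$ of size $\eps N$. Applying Condition~\eqref{eqn:good-moments-remaining} to the uniform weights on $G$ and on $S\setminus G^+$ (both in $\Delta_{N,2\eps}$), subtracting the two operator inequalities, and pairing with $Y$ yields
\[
\sum_{i\in G^+} \tfrac{1}{|G|}(X_i-\mus)^\top Y(X_i-\mus) \le 2 c_1\,\eps\ln(1/\eps).
\]
The minimizer $j \in G^+$ thus satisfies $(X_j-\mus)^\top Y(X_j-\mus) \le 2c_1\ln(1/\eps)$, and another Cauchy--Schwarz step gives $\nabla f(w)_j - \mus^\top Y(\mus-2\mu_w) \le c_3 r^2/\eps^2$. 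Subtracting, $\nabla f(w)_i - \nabla f(w)_j \ge c_3 r^2/\eps^2 \ge c_3 c_2^2 \ln(1/\eps)$, contradicting the bimodal inequality provided the universal constant $c$ in $\delta = c\ln(1/\eps)$ is chosen strictly smaller than $c_3 c_2^2/\sqrt{2}$.

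The main obstacle is controlling the rounding loss incurred by replacing the spectral norm with its softmax: one needs $\ln d/\rho$ to be negligible compared to the excess variance $c_4 r^2/\eps$ certified by Lemma~\ref{thm:dkklms}. This is precisely why the theorem fixes $\rho = \ln d/\eps$, making the loss $\eps$, which is absorbed because $r\ge c_2\eps\sqrt{\ln(1/\eps)}$ forces $r^2/\eps \gg \eps$. Once this rounding is handled, all remaining steps are operator-valued analogs of the scalar manipulations from Section~\ref{sec:struc}, with $uu^\top$ replaced uniformly by $Y \succeq 0$ with $\tr(Y)=1$.
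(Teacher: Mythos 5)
Your proposal is correct and follows essentially the same approach as the paper: the bimodal gradient inequality from $\delta$-stationarity (Lemma~\ref{lem:kkt-w-smax}), the dual matrix $Y = \exp(\rho\Sigma_w)/\tr(\exp(\rho\Sigma_w))$ playing the role of $uu^\top$, the ``bad sample with large gradient'' argument via Lemma~\ref{thm:dkklms}, Corollary~\ref{cor:smax-max}, and the weight bound $w_B \le 2\eps$ (Lemma~\ref{lem:bad-large-gr-smax}), the ``good sample with small gradient'' argument via $|G\cap S_+|\ge \eps N$ and applying Condition~\eqref{eqn:good-moments-remaining} to uniform weights on $G$ and $S\setminus G^+$ (Lemma~\ref{lem:good-small-gr-smax}), and the final contradiction with $c=\sqrt{2}c_3 c_2^2$. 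You also correctly flag the reason for choosing $\rho=\ln d/\eps$, namely to make the softmax rounding loss $\eps$ negligible against the certified excess variance $c_4 r^2/\eps$.
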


Theorem~\ref{thm:no-bad-local-opt-smax} follows directly from 
Lemmas~\ref{lem:kkt-w-smax},~\ref{lem:bad-large-gr-smax},~and~\ref{lem:good-small-gr-smax}.

For the rest of this subsection, we assume the input samples satisfy Condition~\eqref{eqn:good-moments-remaining} and Lemma~\ref{thm:dkklms}, 
and we fix an approximate stationary point $w \in \Delta_{N, 2\eps}$ of the softmax objective.
We establish the following bimodal sub-gradient property which holds at all (approximate) stationary points.

\begin{lemma}[Bimodal sub-gradient property at stationary points]
\label{lem:kkt-w-smax}
Fix $w \in \Delta_{N,2\eps}$.
Let $S_- = \{i : w_i > 0\}$ and $S_+ = \{i : w_i < \frac{1}{(1-2\eps)N} \}$ denote the set of coordinates of $w$ 
that can decrease and increase respectively.
If $w$ is a $\delta$-stationary point of $f(w)$, then $\nabla f(w)_i \le \nabla f(w)_j + \sqrt{2} \delta$ for all $i \in S_-$ and $j \in S_+$.
\end{lemma}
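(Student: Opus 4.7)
The plan is to prove this by contradiction, mirroring the argument for Lemma~\ref{lem:kkt-w} in the spectral-norm setting, but adapted to the approximate stationarity notion of Definition~\ref{def:stationary-smax}. Assume toward contradiction that there exist indices $i \in S_-$ and $j \in S_+$ with $\nabla f(w)_i > \nabla f(w)_j + \sqrt{2}\,\delta$. The idea is to exhibit a single unit direction $u$ that is feasible at $w$ and along which $u^\top \nabla f(w) < -\delta$, directly contradicting the definition of a $\delta$-stationary point.

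The natural choice is $u = \tfrac{1}{\sqrt{2}}(e_j - e_i)$, which is a unit vector since $e_i$ and $e_j$ are distinct standard basis vectors. First I would verify that this is a feasible direction, i.e., that $w + \alpha u \in \Delta_{N,2\eps}$ for some $\alpha > 0$. Because $u$ has zero coordinate sum, $\ell_1$-normalization is preserved; the only coordinates affected are $i$ and $j$, and the required inequalities $w_i - \alpha/\sqrt{2} \ge 0$ and $w_j + \alpha/\sqrt{2} \le \tfrac{1}{(1-2\eps)N}$ both hold for sufficiently small $\alpha > 0$, since by assumption $w_i > 0$ (because $i \in S_-$) and $w_j < \tfrac{1}{(1-2\eps)N}$ (because $j \in S_+$). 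Hence $u$ qualifies as one of the directions in Definition~\ref{def:stationary-smax}.

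Next I would simply compute the directional derivative, using linearity of the inner product:
\[
u^\top \nabla f(w) \;=\; \frac{1}{\sqrt{2}}\bigl(\nabla f(w)_j - \nabla f(w)_i\bigr) \;<\; \frac{1}{\sqrt{2}} \cdot (-\sqrt{2}\,\delta) \;=\; -\delta,
\]
where the strict inequality uses the assumed violation $\nabla f(w)_i - \nabla f(w)_j > \sqrt{2}\,\delta$. This contradicts the $\delta$-stationarity condition, which requires $u^\top \nabla f(w) \ge -\delta$ for every feasible unit direction $u$ at $w$. Therefore no such pair $(i,j)$ can exist, proving the bimodal sub-gradient inequality.

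I do not anticipate any substantive technical obstacle here: the $\sqrt{2}$ factor in the bound is precisely the normalization needed to turn the two-coordinate swap $e_j - e_i$ into a unit vector, and this is the only arithmetic subtlety. The lemma is essentially the softmax counterpart of Lemma~\ref{lem:kkt-w}, with the strict-inequality characterization of exact stationarity replaced by the quantitative $\delta$-slack from Definition~\ref{def:stationary-smax}; no property of the softmax objective itself (e.g., smoothness, the matrix-exponential structure, or Lemma~\ref{lem:d-mat-exp}) is needed, since the argument uses only the convex-geometric structure of $\Delta_{N,2\eps}$ and the definition of approximate stationarity.
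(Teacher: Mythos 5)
Your proof is correct and follows essentially the same argument as the paper: assume a violating pair $(i,j)$, take the unit direction $u = \tfrac{1}{\sqrt{2}}(e_j - e_i)$, check feasibility, and derive $u^\top \nabla f(w) < -\delta$, contradicting Definition~\ref{def:stationary-smax}. The only difference is cosmetic — you phrase the feasible step size as ``sufficiently small $\alpha$'' rather than giving an explicit formula, which is if anything slightly cleaner.
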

\begin{proof}
Suppose there is some $i \in S_-$ and $j \in S_+$ such that $\nabla f(w)_i > \nabla f(w)_j + \sqrt{2} \delta$.

Consider the unit vector $u = \frac{1}{\sqrt{2}}(e_j - e_i)$, where $e_i$ is the $i$-th basis vector.
We have $w + \alpha u \in \Delta_{N,2\eps}$ for $\alpha = \min(w_i, \frac{1}{(1-2\eps)N} - w_j) > 0$, but
\[
u^\top \nabla f(x) = \frac{\nabla f(w)_j - \nabla f(w)_i}{\sqrt{2}} < - \delta \; ,
\]
which violates the assumption that $w$ is a $\delta$-approximate stationary point (Definition~\ref{def:stationary-smax}).
\end{proof}

At a high level, we prove Theorem~\ref{thm:no-bad-local-opt-smax} by showing that if $\mu_w$ is far from $\mus$, then $w$ violates Lemma~\ref{lem:kkt-w-smax}.
More specifically, if $\mu_w$ is far from $\mus$, then there exists a bad sample with index $j \in S_-$ whose gradient is large (Lemma~\ref{lem:bad-large-gr-smax}).
Meanwhile, the concentration bound in Condition~\eqref{eqn:good-moments-remaining} guarantees 
that there exists a good sample with index $i \in S_+$ whose gradient is small (Lemma~\ref{lem:good-small-gr-smax}).

We frequently use the partial derivative of $f(w)$ with respect to $w_i$ in our analysis:
\begin{align*}
\nabla f(w)_i
  &= X_i^\top Y X_i - 2 X_i^\top Y \mu_w \\
  &= (X_i - \mus)^\top Y (X_i - \mus) - 2(X_i - \mus)^\top Y (\mu_w - \mus) \\
    &\qquad + {\mus}^\top Y (\mus - 2\mu_w) \; .
\end{align*}
Notice that the last term in $\nabla f(w)_i$ is the same for all $i$.
Since our goal is to identify $i \in S_-$ and $j \in S_+$ such that $\nabla f(w)_i > \nabla f(w)_j$, 
we can focus on the first two terms.

We have the following lemmas:

\begin{lemma}\label{lem:bad-large-gr-smax}
Fix $w \in \Delta_{N,2\eps}$ and assume that Condition~\eqref{eqn:good-moments-remaining} and Lemma~\ref{thm:dkklms} hold.
Let $c_2$ and $c_3$ be universal constants.
Let $r = \normtwo{\mu_w - \mus}$ and suppose $r \ge c_2 \eps \sqrt{\ln(1/\eps)}$.
Then, there exists $i \in (B \cap S_-)$ such that
\[
\nabla f(w)_i - {\mus}^\top Y (\mus - 2\mu_w) > 2 c_3 \cdot \frac{r^2}{\eps^2} \; .
\]
\end{lemma}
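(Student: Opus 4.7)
The plan is to mirror the proof of Lemma~\ref{lem:bad-large-gr} from the spectral-norm case, replacing the rank-one direction $uu^\top$ with the density matrix $Y = \exp(\rho\Sigma_w)/\tr(\exp(\rho\Sigma_w)) \in \Delta_{N\times N}$. The first step is to rewrite the target quantity as
\[
\nabla f(w)_i - (\mus)^\top Y(\mus - 2\mu_w) = (X_i-\mus)^\top Y (X_i-\mus) - 2(X_i-\mus)^\top Y(\mu_w-\mus),
\]
so that everything is expressed in terms of how the samples, viewed through the PSD ``test matrix'' $Y$, deviate from $\mus$.

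Next, I would lower-bound $Y\bullet\Sigma_w$. Since $r \ge c_2\eps\sqrt{\ln(1/\eps)}$, Lemma~\ref{thm:dkklms} gives $\lambda_{\max}(\Sigma_w) \ge 1 + c_4 r^2/\eps$, and the second half of Corollary~\ref{cor:smax-max}, together with the choice $\rho = \ln d /\eps$, yields $Y\bullet\Sigma_w \ge \smax_\rho(\Sigma_w) - \eps \ge 1 + c_4 r^2/\eps - \eps$. Now I re-center at $\mus$: using the identity $\sum_i w_i(X_i-\mus)(X_i-\mus)^\top = \Sigma_w + (\mu_w-\mus)(\mu_w-\mus)^\top \succeq \Sigma_w$, we get
\[
\sum_{i=1}^N w_i (X_i-\mus)^\top Y (X_i-\mus) \ge Y\bullet\Sigma_w \ge 1 + c_4 r^2/\eps - \eps.
\]
The contribution of the good samples can be bounded using Condition~\eqref{eqn:good-moments-remaining}: since $\tr(Y)=1$ and $Y\succeq 0$,
\[
\sum_{i\in G} w_i (X_i-\mus)^\top Y(X_i-\mus) \le Y\bullet I + c_1\eps\ln(1/\eps) = 1 + c_1\eps\ln(1/\eps).
\]
Subtracting, the bad samples contribute at least $0.98\, c_4 r^2/\eps$ (choosing $c_4$ large enough to absorb the $\eps$ and $c_1\eps\ln(1/\eps)$ terms).

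Since $\sum_{i\in B} w_i \le |B|\cdot\frac{1}{(1-2\eps)N} \le 2\eps$, an averaging argument produces an index $i\in B$ with $w_i>0$ (hence $i\in B\cap S_-$) satisfying $(X_i-\mus)^\top Y(X_i-\mus) \ge 0.49\, c_4 r^2/\eps^2$. For this $i$, the cross term is controlled via Cauchy--Schwarz in the $Y$-inner product: $|(X_i-\mus)^\top Y(\mu_w-\mus)| \le \sqrt{(X_i-\mus)^\top Y(X_i-\mus)}\cdot\sqrt{(\mu_w-\mus)^\top Y(\mu_w-\mus)}$, and the second factor is at most $\|Y\|_2^{1/2}\|\mu_w-\mus\|_2 \le r$ because $\|Y\|_2 \le \tr(Y)=1$. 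Denoting $A = (X_i-\mus)^\top Y(X_i-\mus)$, we obtain $A - 2\sqrt{A}\,r \ge 0.49 c_4 r^2/\eps^2 - 1.4\sqrt{c_4}\, r^2/\eps \ge 2c_3 r^2/\eps^2$ for sufficiently large $c_4$ (recall $\eps$ is small), which is precisely the claimed bound.

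The main conceptual hurdle compared to the spectral-norm proof is the transition from a single maximizing direction $u$ to a mixture of directions encoded by $Y$: the argument must work for an arbitrary PSD trace-one test matrix rather than a single rank-one projector. Corollary~\ref{cor:smax-max} is the key bridge here, guaranteeing that $Y\bullet\Sigma_w$ is essentially as large as $\lambda_{\max}(\Sigma_w)$ up to the $\frac{\ln d}{\rho}=\eps$ slack; once that is in hand, the rest follows by replacing scalar products $(u^\top v)^2$ with quadratic forms $v^\top Y v$ and applying Cauchy--Schwarz in this inner product, with all size bounds controlled by $\tr(Y)=1$ and $\|Y\|_2\le 1$.
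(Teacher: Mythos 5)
Your proposal is correct and follows essentially the same route as the paper's proof: the same chain $\lambda_{\max}(\Sigma_w) \ge 1 + c_4 r^2/\eps$, Corollary~\ref{cor:smax-max} to pass to $Y \bullet \Sigma_w$, the re-centering $\sum_i w_i(X_i-\mus)(X_i-\mus)^\top \succeq \Sigma_w$, Condition~\eqref{eqn:good-moments-remaining} to peel off the good-sample contribution, an averaging argument over $B$ using $w_B \le 2\eps$, and Cauchy--Schwarz in the $Y$-inner product for the cross term. The only cosmetic difference is that you write the cross-term bound via $\sqrt{(\mu_w-\mus)^\top Y(\mu_w-\mus)} \le r$ whereas the paper factors it as $\normtwo{Y^{1/2}(X_i-\mus)}\cdot\normtwo{Y^{1/2}}\cdot\normtwo{\mu_w-\mus}$, but these are the same estimate.
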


\begin{lemma}
\label{lem:good-small-gr-smax}
Consider the same setting as in Lemma~\ref{lem:bad-large-gr-smax}.
There exists $j \in (G \cap S_+)$ such that
\[
\nabla f(w)_j - {\mus}^\top Y (\mus - 2\mu_w) \le c_3 \cdot \frac{r^2}{\eps^2} \; .
\]
\end{lemma}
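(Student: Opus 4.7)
The plan is to follow the same template as the proof of Lemma~\ref{lem:good-small-gr} from the spectral section, adapting each step to the fact that the ``direction'' is now the PSD trace-$1$ matrix $Y = \exp(\rho \Sigma_w)/\tr(\exp(\rho \Sigma_w))$ instead of a single rank-one projection $uu^\top$. Writing out the partial derivative
\[
\nabla f(w)_j - {\mus}^\top Y(\mus - 2\mu_w) = (X_j - \mus)^\top Y (X_j - \mus) - 2(X_j - \mus)^\top Y(\mu_w - \mus),
\]
the task reduces to finding $j \in G \cap S_+$ such that both the quadratic form $(X_j - \mus)^\top Y (X_j - \mus)$ and the cross term are small.

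First I would handle the combinatorics exactly as before: since at most $(1-2\eps)N$ indices can saturate the constraint $w_i = 1/((1-2\eps)N)$ we have $|S_+| \ge 2\eps N$, and combined with $|G| = (1-\eps)N$ this gives $|G \cap S_+| \ge \eps N$. Fix any subset $G^+ \subseteq G \cap S_+$ with $|G^+| = \eps N$. Applying Condition~\eqref{eqn:good-moments-remaining} twice, once with the uniform weight vector on $G$ and once with the uniform weight vector on $(G \setminus G^+) \cup B$ (both of which lie in $\Delta_{N,2\eps}$), and combining via triangle inequality, gives the matrix bound
\[
\Bigl\|\sum_{i \in G^+} \tfrac{1}{|G|}(X_i - \mus)(X_i - \mus)^\top\Bigr\|_2 \;\le\; 2 c_1\, \eps \ln(1/\eps).
\]

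Next I convert this spectral-norm bound into a bound on the $Y$-weighted quadratic form. Because $Y$ is PSD with $\tr(Y)=1$, the identity $\tr(YM) \le \|M\|_2\, \tr(Y)$ for PSD $Y,M$ yields
\[
\sum_{i \in G^+} \tfrac{1}{|G|}\, (X_i - \mus)^\top Y (X_i - \mus) \;\le\; 2c_1\, \eps \ln(1/\eps).
\]
Taking $j = \arg\min_{i \in G^+}(X_i - \mus)^\top Y(X_i-\mus)$ and using $|G^+|/|G| \ge \eps/(1-\eps) \ge \eps$ gives $(X_j - \mus)^\top Y (X_j - \mus) \le 2c_1 \ln(1/\eps)$.

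Finally, for the cross term, I would use Cauchy--Schwarz in the (semi)inner product induced by $Y$ together with $\|Y\|_2 \le \tr(Y) = 1$ to obtain
\[
\bigl|(X_j - \mus)^\top Y(\mu_w - \mus)\bigr| \;\le\; \sqrt{(X_j-\mus)^\top Y (X_j-\mus)}\cdot \sqrt{(\mu_w-\mus)^\top Y(\mu_w-\mus)} \;\le\; \sqrt{2c_1 \ln(1/\eps)}\cdot r.
\]
Substituting both bounds gives $\nabla f(w)_j - \mus^\top Y(\mus - 2\mu_w) \le 2 c_1 \ln(1/\eps) + 2\sqrt{2c_1 \ln(1/\eps)}\, r$. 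Using the standing assumption $r \ge c_2 \eps\sqrt{\ln(1/\eps)}$ to trade $\ln(1/\eps)$ for $r^2/\eps^2$ and choosing $c_3$ large enough (as in Appendix~\ref{apx:const}) turns this into $c_3\cdot r^2/\eps^2$, matching the required bound. The only conceptual departure from the rank-one argument is step three: once one notices that $\tr(Y)=1$ and $\|Y\|_2 \le 1$, the PSD mixture behaves for our purposes exactly like a single direction, which I expect to be the subtlest point but is essentially immediate.
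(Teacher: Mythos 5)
Your proposal is correct and follows essentially the same route as the paper's proof: identical combinatorial step to get $|G\cap S_+|\ge \eps N$, the same two applications of Condition~\eqref{eqn:good-moments-remaining} with triangle inequality, the same conversion from a spectral-norm bound to a $Y$-weighted bound via $\tr(Y)=1$, the same averaging/argmin choice of $j$, and the same Cauchy--Schwarz control of the cross term using $\|Y\|_2\le 1$. The only cosmetic difference is that you phrase the key step as $\tr(YM)\le\|M\|_2\tr(Y)$ while the paper simply invokes $Y\in\Delta_{d\times d}$, and you bound the cross term via the $Y$-inner product rather than writing it in terms of $Y^{1/2}$, but these are the same estimates.
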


We defer the proofs of Lemmas~\ref{lem:bad-large-gr-smax}~and~\ref{lem:good-small-gr-smax} to Section~\ref{sec:proof-gb-gr-smax}, 
and we first use them to prove Theorem~\ref{thm:no-bad-local-opt-smax}.

\begin{proof}[Proof of Theorem~\ref{thm:no-bad-local-opt-smax}]
Suppose that $w$ is a bad solution where $\normtwo{\mu_w - \mus} \ge c_2 \eps \sqrt{\ln(1/\eps)}$.
Since we assume Condition~\eqref{eqn:good-moments-remaining} and Lemma~\ref{thm:dkklms} both hold on the input samples, we can use Lemmas~\ref{lem:bad-large-gr-smax}~and~\ref{lem:good-small-gr-smax} to find two coordinates $i \in S_-$ and $j \in S_+$, such that the bimodal sub-gradient property in Lemma~\ref{lem:kkt-w-smax} does not hold at $w$.
Therefore, $w$ is not a $\delta$-approximate stationary point for some $\delta = \sqrt{2} c_3 \frac{\normtwo{\mu_w - \mus}^2}{\eps^2} \ge \sqrt{2} c_3 c_2^2 \ln(1/\eps)$, that is, we can set $c = \sqrt{2} c_3 c_2^2$.
\end{proof}

\subsubsection{Proofs of Lemmas~\ref{lem:bad-large-gr-smax}~and~\ref{lem:good-small-gr-smax}}
\label{sec:proof-gb-gr-smax}
In this section, we prove Lemmas~\ref{lem:bad-large-gr-smax}~and~\ref{lem:good-small-gr-smax}.

The proofs of these lemmas are conceptually similar to the proofs 
of related lemmas (Lemmas~\ref{lem:bad-large-gr}~and~\ref{lem:good-small-gr}) in Section~\ref{sec:struc}.
We include their proofs here to make this section self-contained.
The main difference is that we switch to the softmax objective, and consequently, we need to work with multiple directions simultaneously.
That is, we consider the projections using $Y$ instead of the projections along the maximum eigenvector of $\Sigma_w$.

Lemma~\ref{lem:bad-large-gr-smax} states that when $\mu_w$ is far from $\mus$, there exists an index $i \in (B \cap S_-)$ such that the gradient $\nabla f(w)_i$ is relatively large.

Recall that the gradient $\nabla f(w)$ in Equation~\eqref{eqn:nabla-f} 
is the same as the gradient of the variance (weighted by $w$) of the one-dimensional samples $\left(X_i^\top Y^{1/2}\right)_{i=1}^N$.
For this one-dimensional problem, a sample far from the (projected) true mean must have large gradient.
Our objective is to find such a sample for which we can decrease its weight.
More specifically, since $w$ is assumed to be a bad solution, 
and the softmax objective is close to the spectral norm of $\Sigma_w$, 
the weighted empirical variance of the projected samples is very large.
Because the good samples cannot have this much variance, most of the variance comes from the bad samples.
We prove that among these bad samples that contribute a lot to the variance, 
one of them must be very far from the (projected) true mean and hence has a large gradient, 
which satisfies Lemma~\ref{lem:bad-large-gr-smax}.

We use $c_1, \ldots, c_4$ to denote universal positive constants that are independent of $N$, $d$, and $\eps$.
These constants can be set in a way that is similar to that in Section~\ref{sec:struc} (see Appendix~\ref{apx:const}).
The universal constant $c$ in Theorem~\ref{thm:no-bad-local-opt-smax} can be set as $c = \sqrt{2} c_3 c_2^2$ after we set $c_2$ and $c_3$.

\begin{proof}[Proof of Lemma~\ref{lem:bad-large-gr-smax}]
We first show that $\Sigma_w \bullet Y$ is relatively large.
By Lemma~\ref{thm:dkklms}, we know that if $\normtwo{\mu_w - \mus} \ge r$ and $r \ge c_2 \eps \sqrt{\ln(1/\eps)}$, then
\[
\lambda_{\max}(\Sigma_w) \ge 1 + c_4 \cdot \frac{r^2}{\eps} \; .
\]
By Corollary~\ref{cor:smax-max}, for $Y = \frac{\exp(\rho \Sigma_w)}{\tr (\exp(\rho \Sigma_w))}$ 
and $\rho = \frac{\ln d}{\eps}$, we have
\[
\Sigma_w \bullet Y \ge \smax_\rho(\Sigma_w) - \eps \ge \lambda_{\max}(\Sigma_w) - \eps \ge 1 - \eps + \frac{c_4 r^2}{\eps} \; .
\]
Recall that $\Sigma_w = \sum_{i=1}^N w_i (X_i - \mu_w)(X_i - \mu_w^\top)$.
If we replace $\mu_w$ with $\mus$, we have
\[
\sum_{i=1}^N w_i (X_i - \mus)(X_i - \mus)^\top \, \succeq \, \Sigma_w \; ,
\]
and therefore,
\[
\left(\sum_{i=1}^N w_i (X_i - \mus)(X_i - \mus)^\top \right) \bullet Y \ge \Sigma_w \bullet Y \ge 1 - \eps + \frac{c_4 r^2}{\eps} \; .
\]

Next we show that most of the variance is due to bad samples.
By Condition~\eqref{eqn:good-moments-remaining},
\[
\left(\sum_{i \in G} w_i (X_i - \mus)(X_i - \mus)^\top\right) \bullet Y \le 1 + c_1 \cdot \eps \ln(1/\eps) \; .
\]
Consequently,
\[
\left(\sum_{i \in B} w_i (X_i - \mus)(X_i - \mus)^\top\right) \bullet Y \ge \frac{c_4 r^2}{\eps} - \eps - c_1 \eps \ln(1/\eps) \ge 0.98 \cdot c_4 \cdot \frac{r^2}{\eps} \; .
\]
The last step is because $r \ge c_2 \cdot \eps\sqrt{\ln(1/\eps)}$ and we can choose $c_2$ and $c_4$ to be sufficiently large.

At this point, we know that when $r = \normtwo{\mu_w - \mus}$ is large, most of the variance is due to the bad samples.
However, the total weight $w_B$ on the bad samples is at most $\eps N \cdot \frac{1}{(1-2\eps)N} \le 2 \eps$.
Therefore, there must be some $i \in B$ with $w_i > 0$ and
\[
\left((X_i - \mus)(X_i - \mus)^\top\right) \bullet Y \ge \frac{0.98 \cdot c_4 \cdot r^2 \cdot \eps^{-1}}{w_B} \ge 0.49 \cdot c_4 \cdot \frac{r^2}{\eps^2} \; .
\]
By definition, $i \in B \cap S_{-}$.
It remains to show that $\nabla f(w)_i$ is large.
\begin{align*}
\nabla f(w)_i - {\mus}^\top Y (\mus - 2\mu_w)
  &= \left((X_i - \mus)(X_i - \mus)^\top\right) \bullet Y - 2 \left((X_i - \mus)(\mu_w - \mus)^\top\right) \bullet Y \\
  &\ge \normtwo{Y^{1/2} (X_i - \mus)}^2 - 2 \normtwo{Y^{1/2} (X_i - \mus)} \cdot \normtwo{Y^{1/2}} \cdot \normtwo{\mu_w - \mus} \\
  &\ge \frac{0.49 \cdot c_4 \cdot r^2}{\eps^2} - 2 \cdot \frac{0.7 \cdot \sqrt{c_4} \cdot r}{\eps} \cdot 1 \cdot r \\
  &> 2 c_3 \cdot \frac{r^2}{\eps^2} \; .
\end{align*}
The first inequality is because $Y \in \Delta_{d \times d}$.
The last step uses the fact that $c_4$ can be sufficiently large.
This completes the proof of Lemma~\ref{lem:bad-large-gr-smax}.
\end{proof}

Lemma~\ref{lem:good-small-gr-smax} states that there exists an index $j \in (G \cap S_+)$ 
such that the gradient $\nabla f(w)_j$ is relatively small.
Similar to the proof of Lemma~\ref{lem:bad-large-gr-smax}, for the projected one-dimensional instance, 
a sample close to the (projected) true mean should have small gradient. 
Our goal is to find such a sample for which we can increase its weight.
Recall that $S^+$ contains the samples whose weight can be increased.
We first prove that there are at least $\eps N$ good samples in $S^+$.
Among these $\eps N$ good samples, the concentration bounds 
imply that there must exist some $X_j$ that is close to the (projected) true mean.
The derivative $\nabla f(w)_j$ satisfies Lemma~\ref{lem:good-small-gr-smax}.

\begin{proof}[Proof of Lemma~\ref{lem:good-small-gr-smax}]
Recall that $S^+$ contains every coordinate $i$ where $w_i < \frac{1}{(1-2\eps)N}$.
Since at most $(1-2\eps)N$ samples can have the maximum weight $\frac{1}{(1-2\eps)N}$, we know that $|S^+| \ge 2\eps N$.
Combining this with $|G| = (1-\eps)N$, we know that $|G \cap S^+| \ge \eps N$.

Fix a subset $G^+ \subseteq (G \cap S^+)$ of size $|G^+| = \eps N$.
We first show that, on average, samples in $G^+$ do not contribute much to the variance.

Let $w'$ be the uniform weight vector on $G$, i.e., $w'_i = \frac{1}{(1-\eps)N}$ for all $i \in G$ and $w'_i = 0$ otherwise.
Since $w' \in \Delta_{N,2\eps}$, by Condition~\eqref{eqn:good-moments-remaining}, we have that
\[
\normtwo{\sum_{i \in G} \frac{1}{|G|} (X_i - \mus)(X_i - \mus)^\top - I} \le c_1 \cdot \eps \ln(1/\eps) \;.
\]
Let $w''$ be the uniform weight vector on $S \setminus G^+ = (G \setminus G^+) \cup B$, 
i.e., $w''_i = \frac{1}{(1-\eps)N}$ for all $i \in ((G \setminus G^+) \cup B)$ and $w''_i = 0$ otherwise.
Since $w'' \in \Delta_{N,2\eps}$, again by Condition~\eqref{eqn:good-moments-remaining}, we have that
\[
\normtwo{\sum_{i \in G \setminus G^+} \frac{1}{|G|} (X_i - \mus)(X_i - \mus)^\top - I} \le c_1 \cdot \eps \ln(1/\eps) \;.
\]
Combining the previous two concentration bounds, we obtain that
\begin{align*}
\normtwo{\sum_{i \in G^+} \frac{1}{|G|} (X_i - \mus)(X_i - \mus)^\top}
  &\le \normtwo{\sum_{i \in G} \frac{1}{|G|} (X_i - \mus)(X_i - \mus)^\top - I} \\
  &\qquad + \normtwo{\sum_{i \in G \setminus G^+} \frac{1}{|G|} (X_i - \mus)(X_i - \mus)^\top - I}
  \le 2 c_1 \cdot \eps \ln(1/\eps) \; .
\end{align*}
As a result, because $Y \in \Delta_{d \times d}$, it follows that
\[
\left(\sum_{i \in G^+} \frac{1}{|G|} (X_i - \mus)(X_i - \mus)^\top \right) \bullet Y \le 2 c_1 \cdot \eps \ln(1/\eps) \;.
\]
Now we know that, on average, samples in $G^+$ do not contribute much to the variance.
We continue to show that one of these samples satisfies the lemma.

Let $j = \arg\min_{i \in G^+} \left(Y \bullet (X_i - \mus)(X_i - \mus)^\top \right)$.
We have that
\[
\left((X_j - \mus)(X_j - \mus)^\top \right) \bullet Y \le \frac{|G|}{|G^+|} \cdot 2 c_1 \cdot \eps \ln(1/\eps) \le 2 c_1 \ln(1/\eps) \;.
\]
Finally, because $(X_j - \mus)^\top Y (X_j - \mus) \le 2 c_1 \ln(1/\eps)$, 
we can bound $\nabla f(w)_j$ from above as follows:
\begin{align*}
\nabla f(w)_j - {\mus}^\top Y (\mus - 2\mu_w)
  &= \left((X_j - \mus)(X_j - \mus)^\top\right) \bullet Y - 2 \left((X_j - \mus)(\mu_w - \mus)^\top\right) \bullet Y \\
  &\le \normtwo{Y^{1/2} (X_j - \mus)}^2 + 2 \normtwo{Y^{1/2} (X_j - \mus)} \cdot \normtwo{Y^{1/2}} \cdot \normtwo{\mu_w - \mus} \\
  &\le 2 c_1 \ln(1/\eps) + 2 \sqrt{2 c_1 \ln(1/\eps)} \cdot 1 \cdot r \\
  &\le \frac{c_3}{2} \cdot \frac{r^2}{\eps^2} + \frac{c_3}{2} \cdot \frac{r}{\eps} \cdot r \le c_3 \cdot \frac{r^2}{\eps^2} \; .
\end{align*}
The last step uses that $c_3$ is sufficiently large, as well as the fact that $\ln(1/\eps) \le \frac{r^2}{\eps^2}$, 
because $r \ge c_2 \eps \sqrt{\ln(1/\eps)}$. This completes the proof of Lemma~\ref{lem:good-small-gr-smax}.
\end{proof}

\subsection{Convergence Rate of Minimizing Softmax} \label{sec:algo-smax}

\begin{algorithm}[h]
\caption{Robust Mean Estimation via Projected Gradient Descent on the Softmax Objective}
\label{alg:smax}
\begin{algorithmic}
  \STATE {\bf Input:} $\eps$-corrupted set of $N$ samples $\{X_i\}_{i=1}^N$ on $\R^d$ satisfying Condition~\eqref{eqn:good-moments-remaining}, and $\eps < \eps_0$.
  \STATE {\bf Output:} $w \in \R^N$ with $\normtwo{\mu_w - \mus} \le O(\eps \sqrt{\log(1/\eps)})$.
  \STATE Let $\rho = \ln d / \eps$.
  \STATE Let $\beta = \tilde O(N d^2 / \eps)$ be the smoothness parameter of the softmax objective $f(w) = \smax_\rho(\Sigma_w)$.
  \STATE Let $w_0$ be an arbitrary weight vector in $\Delta_{N,2\eps}$.
  \STATE Let $T = \tilde O(N d^3 / \eps)$ and $\eta = 1/\beta$.
  \FOR{$\tau=0$ {\bf to} $T-1$}
    \STATE $w_{\tau+1} = \PP_{\Delta_{N,2\eps}}\left(w_\tau - \eta \nabla f(w) \right)$, 
    where $\PP_\KK(\cdot)$ is the $\ell_2$-projection operator onto $\KK$.
  \ENDFOR \\
  \STATE {\bf return } $w_{\tau^\star}$ where $\tau^\star = \arg\min_{\, 0 \le \tau < T} \normtwo{w_{\tau+1} - w_\tau}$.
\end{algorithmic}
\end{algorithm}

In this section, we prove our algorithmic result for the softmax objective (Theorem~\ref{thm:final-softmax}).
We show that the projected gradient descent algorithm (Algorithm~\ref{alg:smax}) on $f$ 
can efficiently find an approximate stationary point $w$, and that $w$ is a good solution to our robust mean estimation task.

We first restate Theorem~\ref{thm:final-softmax} (correctness and iteration count of Algorithm~\ref{alg:smax}).

\medskip

{\noindent \bf Theorem~\ref{thm:final-softmax}.~}
{\em
Let $S$ be an $\eps$-corrupted set of $N = \tilde \Omega(d/\eps^2)$ samples drawn 
from a $d$-dimensional Gaussian $\NN(\mus, I)$ with unknown mean $\mus$.
Suppose $S$ satisfies Condition~\eqref{eqn:good-moments-remaining} and Lemma~\ref{thm:dkklms}.

Let $f(w)$ be the softmax objective as defined in Equation~\eqref{eqn:f-smax}.
After $\tilde O(N d^3 / \eps)$ iterations, projected gradient descent on $f(w)$ outputs a
point $w$ such that $\|\mu_{w} - \mus \|_2 = O(\eps\sqrt{\log(1/\eps)})$.}

\medskip

Theorem~\ref{thm:final-softmax} follows immediately from Lemmas~\ref{lem:pgd-rate-smax},~\ref{lem:smax-smoothness},~and~\ref{lem:smax-maxvalue}.

Lemma~\ref{lem:pgd-rate-smax} analyzes the convergence rate of (nonconvex) projected gradient descent.
The number of iterations in Lemma~\ref{lem:pgd-rate-smax} depends on the range and smoothness of the objective function.
Lemmas~\ref{lem:smax-smoothness}~and~\ref{lem:smax-maxvalue} upper bounds these two parameters for our softmax objective.

We note that Lemma~\ref{lem:pgd-rate-smax} appears to be folklore in the optimization literature, 
see, e.g.,~\cite{Beck-book}. For the sake of completeness, we provide a self-contained proof in the following subsection.

\begin{lemma} \label{lem:pgd-rate-smax}
Fix a (possibly non-convex) function $f$ and a convex set $\KK$.
Suppose $f$ is $\beta$-smooth on $\KK$ and $0 \le f(x) \le B$ for all $x \in \KK$.
If we run projected gradient descent with step size $\eta = \frac{1}{\beta}$ starting from an arbitrary $x_0 \in \KK$:
\[
x_{\tau+1} = \Pi_{\KK}\left(x_{\tau} - \eta \nabla f(x_\tau) \right) \;,
\]
where $\Pi_{\KK}$ is the projection onto $\KK$, 
we can compute a $\delta$-stationary point of $f$ in $O(\frac{\beta \cdot B}{\delta^2})$ iterations.
\end{lemma}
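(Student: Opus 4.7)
The plan is to follow the classical convergence analysis of projected gradient descent on smooth (non-convex) objectives, and then translate the resulting small-step guarantee into the paper's specific notion of $\delta$-stationarity (Definition~\ref{def:stationary-smax}). First I would prove a descent inequality. The first-order optimality of the projection gives $(x_\tau - \eta\nabla f(x_\tau) - x_{\tau+1})^\top (z - x_{\tau+1}) \le 0$ for all $z \in \KK$. Taking $z = x_\tau$ and rearranging yields
\[
\nabla f(x_\tau)^\top (x_{\tau+1} - x_\tau) \le -\tfrac{1}{\eta} \|x_{\tau+1} - x_\tau\|^2 = -\beta \|x_{\tau+1} - x_\tau\|^2.
\]
Combined with the upper bound from $\beta$-smoothness, $f(x_{\tau+1}) \le f(x_\tau) + \nabla f(x_\tau)^\top (x_{\tau+1} - x_\tau) + \tfrac{\beta}{2}\|x_{\tau+1} - x_\tau\|^2$, this produces the core descent guarantee $f(x_{\tau+1}) \le f(x_\tau) - \tfrac{\beta}{2}\|x_{\tau+1} - x_\tau\|^2$.

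Second, telescoping the descent inequality over $\tau = 0, \ldots, T-1$ together with $0 \le f \le B$ gives $\sum_{\tau=0}^{T-1} \|x_{\tau+1}-x_\tau\|^2 \le 2B/\beta$, so the best index $\tau^\star$ satisfies $\|x_{\tau^\star+1}-x_{\tau^\star}\| \le \sqrt{2B/(\beta T)}$. This pigeonhole step is the source of the $1/\sqrt{T}$ rate typical of non-convex first-order methods.

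Third, and this is the step that requires the most care, I would translate the small-step guarantee into $\delta$-stationarity in the sense of Definition~\ref{def:stationary-smax}. The natural candidate to return is $x_{\tau^\star+1}$, which is already in $\KK$ and admits a clean feasibility argument. Using the projection inequality $(\nabla f(x_{\tau^\star}) - \beta(x_{\tau^\star} - x_{\tau^\star+1}))^\top (z - x_{\tau^\star+1}) \ge 0$ for $z \in \KK$, and taking $z = x_{\tau^\star+1} + \alpha u$ for any unit $u$ with $x_{\tau^\star+1}+\alpha u \in \KK$, the $x_{\tau^\star+1}$-centered terms cancel and one obtains $\nabla f(x_{\tau^\star})^\top u \ge \beta(x_{\tau^\star}-x_{\tau^\star+1})^\top u \ge -\beta\|x_{\tau^\star}-x_{\tau^\star+1}\|$ by Cauchy--Schwarz. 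Then $\beta$-smoothness gives $|(\nabla f(x_{\tau^\star+1})-\nabla f(x_{\tau^\star}))^\top u| \le \beta \|x_{\tau^\star+1}-x_{\tau^\star}\|$, yielding $\nabla f(x_{\tau^\star+1})^\top u \ge -2\beta\|x_{\tau^\star+1}-x_{\tau^\star}\|$. Hence $x_{\tau^\star+1}$ is a $(2\beta\|x_{\tau^\star+1}-x_{\tau^\star}\|)$-stationary point.

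Putting the pieces together, setting $T = O(\beta B/\delta^2)$ (concretely $T \ge 8\beta B/\delta^2$) makes $2\beta\sqrt{2B/(\beta T)} \le \delta$, so $x_{\tau^\star+1}$ is $\delta$-stationary. The main obstacle is the last step: the paper's definition of $\delta$-stationarity quantifies over all feasible directions $u$ at the iterate, and a naive attempt to verify this condition at $x_{\tau^\star}$ itself runs into trouble because the projection inequality only yields clean cancellations when the centering point is $x_{\tau^\star+1}$. Outputting $x_{\tau^\star+1}$ (equivalently, re-indexing the algorithm's output) circumvents this issue while the smoothness of $f$ provides the transfer from $\nabla f(x_{\tau^\star})$ to $\nabla f(x_{\tau^\star+1})$ at the cost of a constant factor.
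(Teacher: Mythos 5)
Your proof is correct and follows essentially the same approach as the paper's (which splits the argument into Lemmas~\ref{lem:pqd-smax-reduction} and~\ref{lem:pgd-smax-g-nablaf}): the same descent inequality from projection optimality plus $\beta$-smoothness, the same telescoping/pigeonhole step, and the same translation into $\delta$-stationarity at $x_{\tau^\star+1}$ via a projection inequality, Cauchy--Schwarz, and a smoothness transfer of the gradient from $x_{\tau^\star}$ to $x_{\tau^\star+1}$. Your observation that the iterate to return is $x_{\tau^\star+1}$ rather than $x_{\tau^\star}$ is correct and in fact clarifies a small imprecision in the paper's wording.
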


Recall that the softmax objective is
$f(w) = \smax_{\rho}\left(\Sigma_w\right) = \frac{1}{\rho}\ln \tr(\exp(\rho \Sigma_w))$ with $\rho = \frac{\ln d}{\eps}$.
A differentiable function $f$ is $\beta$-smooth on $\KK$ if $\normtwo{\nabla f(x) - \nabla f(y)} \le \beta \normtwo{x - y}$ for all $x, y \in \KK$.

\begin{lemma}[Smoothness of $f$]
\label{lem:smax-smoothness}
The softmax objective $f$ is $\beta$-smooth on $\Delta_{N,2\eps}$ for $\beta = \tilde O(N d^2 / \eps)$.
\end{lemma}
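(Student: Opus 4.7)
The plan is to bound the operator norm of the Hessian $\nabla^2 f(w)$ uniformly on $\Delta_{N,2\eps}$; since $f$ is $C^2$, this yields the desired smoothness constant. Write $f(w) = h(\Sigma_w)$ with $h(Z) = \rho^{-1}\ln\tr(\exp(\rho Z))$, so that $\nabla h(Z) = Y(Z) = \exp(\rho Z)/\tr(\exp(\rho Z))$ and the gradient takes the already-noted form $\partial_i f(w) = Y \bullet \partial_i \Sigma_w$. Differentiating once more in $w_j$ decomposes the Hessian as $\nabla^2 f(w) = H_1 + H_2$, where $(H_1)_{ij} = Y \bullet \partial_i \partial_j \Sigma_w$ captures the quadratic dependence of $\Sigma_w$ on $w$, and $(H_2)_{ij} = \partial_i \Sigma_w \bullet \partial_j Y$ captures the variation of $Y$ with $w$. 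I would bound $\normtwo{H_1}$ and $\normtwo{H_2}$ separately.

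The first piece is straightforward: since $\partial_i \partial_j \Sigma_w = -X_i X_j^\top - X_j X_i^\top$, one obtains $H_1 = -2 X^\top Y X$, so $\normtwo{H_1} \le 2 \normtwo{X}^2 \normtwo{Y} \le 2\normtwo{X}^2 = \tilde O(Nd)$, using $\normtwo{Y} \le \tr(Y) = 1$ and the preprocessing bound $\max_i \normtwo{X_i} = \tilde O(\sqrt{d})$. For the second, fix unit vectors $u,v \in \R^N$ and set $A = \sum_i u_i \partial_i \Sigma_w$ and $B = \sum_j v_j \partial_j \Sigma_w$, both symmetric $d \times d$ matrices of the form $X\diag(\cdot)X^\top - (X\cdot)\mu_w^\top - \mu_w(X\cdot)^\top$. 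Then $u^\top H_2 v = A \bullet D_B Y$, where by Lemma~\ref{lem:d-mat-exp} applied to $\exp(\rho \Sigma_w)$ together with the quotient rule,
\[
D_B Y \;=\; \rho \int_0^1 \frac{\exp(\alpha \rho \Sigma_w)\, B\, \exp((1-\alpha)\rho \Sigma_w)}{\tr(\exp(\rho \Sigma_w))}\, d\alpha \;-\; \rho\, (Y \bullet B)\, Y.
\]

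The key estimate is a Cauchy--Schwarz bound on the integrand. Setting $P_\alpha = \exp(\alpha \rho \Sigma_w)$ and cyclically rearranging, Cauchy--Schwarz for the Frobenius inner product gives $|\tr(P_\alpha B P_{1-\alpha} A)| \le \fnorm{P_\alpha^{1/2} B P_{1-\alpha}^{1/2}}\,\fnorm{P_\alpha^{1/2} A P_{1-\alpha}^{1/2}} \le \normtwo{\exp(\rho\Sigma_w)}\,\fnorm{A}\fnorm{B}$, where the second step uses $\fnorm{P^{1/2} M Q^{1/2}} \le \normtwo{P}^{1/2}\normtwo{Q}^{1/2}\fnorm{M}$ and $\normtwo{P_\alpha}\normtwo{P_{1-\alpha}} = \normtwo{\exp(\rho\Sigma_w)}$. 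Since $\tr(\exp(\rho\Sigma_w)) \ge \normtwo{\exp(\rho\Sigma_w)}$, dividing by the normalization kills the exponential blow-up and leaves an integrand bounded by $\fnorm{A}\fnorm{B}$; the subtracted term is likewise at most $\fnorm{A}\fnorm{B}$ (using $|Y \bullet A| \le \fnorm{Y}\fnorm{A}$ and $\fnorm{Y} \le 1$). Hence $|u^\top H_2 v| \le 2\rho\,\fnorm{A}\fnorm{B}$. Bounding $\fnorm{A}$ by the triangle inequality and $\fnorm{MN} \le \normtwo{M}\fnorm{N}$ with the preprocessing bounds $\normtwo{X} = \tilde O(\sqrt{Nd})$, $\max_i \normtwo{X_i} = \tilde O(\sqrt{d})$, and $\normtwo{\mu_w} = \tilde O(\sqrt{d})$ yields $\fnorm{A} \le \tilde O(d\sqrt{N})$, and the same bound for $\fnorm{B}$. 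Consequently $\normtwo{H_2} \le 2\rho \cdot \tilde O(Nd^2) = \tilde O(Nd^2/\eps)$, which dominates $\normtwo{H_1}$ and gives the claimed smoothness. The main technical hurdle is this matrix-exponential Cauchy--Schwarz step: the naive bound $\normtwo{\exp(\rho \Sigma_w)} = e^{\rho\normtwo{\Sigma_w}}$ is exponentially large in the ambient parameters, and the argument only closes because the trace normalization cancels exactly the dominant eigenvalue contribution.
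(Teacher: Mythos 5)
Your proof is correct, and it takes a genuinely different (though closely related) route from the paper's. The paper also uses Lemma~\ref{lem:d-mat-exp} to differentiate the matrix exponential and relies on exactly the same cancellation you highlight---it bounds the integral contribution to each Hessian entry by pulling out $\normtwo{\exp(\rho\Sigma_w)}/\tr(\exp(\rho\Sigma_w)) \le 1$---but it proceeds entrywise: it shows $|\nabla^2 f(w)_{ij}| \le \tilde O(\rho R^4) = \tilde O(\rho d^2)$ with $R = \max(\normtwo{\mu_w}, \max_i \normtwo{X_i}) = \tilde O(\sqrt d)$ and then finishes with the crude inequality $\normtwo{M} \le N \max_{ij}|M_{ij}|$. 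You instead split $\nabla^2 f = H_1 + H_2$ (separating the curvature coming from $\partial_i\partial_j \Sigma_w$ and from the variation of $Y$) and bound the bilinear form $u^\top H_2 v = A \bullet D_B Y$ directly via a Frobenius Cauchy--Schwarz estimate, with the $\sqrt N$ entering through $\fnorm{A},\fnorm{B} = \tilde O(d\sqrt N)$. The two sources of the $N$-factor (``spectral norm is at most $N$ times max entry'' vs.\ Frobenius norm of the aggregated directional matrices $A, B$) turn out to be equally lossy, so both routes land on $\tilde O(N d^2/\eps)$. Your decomposition is organizationally cleaner---$H_1 = -2 X^\top Y X$ is visibly dominated by $H_2$, and the quadratic-form argument localizes the matrix-exponential difficulty to one Cauchy--Schwarz step---and would yield tighter bounds if better control on $\fnorm{A}$ (e.g.\ via the actual spectrum of $\Sigma_w$) were available; the paper's entrywise approach is more elementary but obscures where the looseness comes from.
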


\begin{lemma}[Range of $f$]
\label{lem:smax-maxvalue}
The softmax objective $f$ satisfies that $0 \le f(w) \le \tilde O(d)$ for all $w \in \Delta_{N,2\eps}$.
\end{lemma}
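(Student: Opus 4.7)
The plan is to bound $f(w)$ from above by reducing the softmax to the maximum eigenvalue (up to a small additive term) and then controlling the maximum eigenvalue by the trace of $\Sigma_w$. The lower bound $f(w)\ge 0$ is immediate: since $\Sigma_w$ is a weighted empirical covariance matrix, it is PSD, so $\lambda_{\max}(\Sigma_w)\ge 0$, and Corollary~\ref{cor:smax-max} gives $f(w)=\smax_\rho(\Sigma_w)\ge \lambda_{\max}(\Sigma_w)\ge 0$. The focus will therefore be on the upper bound.

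For the upper bound, the first step is to invoke Corollary~\ref{cor:smax-max} once more to write
\[
f(w)=\smax_\rho(\Sigma_w)\le \lambda_{\max}(\Sigma_w)+\frac{\ln d}{\rho}=\lambda_{\max}(\Sigma_w)+\eps,
\]
using the choice $\rho=\ln d/\eps$. Since $\Sigma_w$ is PSD, $\lambda_{\max}(\Sigma_w)\le \tr(\Sigma_w)=\sum_{i=1}^N w_i\norm{X_i-\mu_w}_2^2$. Thus, bounding $f(w)$ reduces to bounding this weighted sum of squared deviations.

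The second step exploits the preprocessing step described at the beginning of Appendix~\ref{apx:algo-proof}, which guarantees that $\max_i \norm{X_i}_2 = O(\sqrt{d\log d})$ after discarding points that are unreasonably far from the coordinate-wise median (without removing any inliers). Since $w\in\Delta_{N,2\eps}$ satisfies $\normone{w}=1$ and $w\ge 0$, the weighted mean obeys $\norm{\mu_w}_2 \le \sum_i w_i\norm{X_i}_2\le \max_i\norm{X_i}_2=O(\sqrt{d\log d})$. By the triangle inequality, $\norm{X_i-\mu_w}_2^2=O(d\log d)$ for every $i$, and hence $\tr(\Sigma_w)=\sum_i w_i\norm{X_i-\mu_w}_2^2=O(d\log d)$. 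Combining with the first step yields $f(w)\le O(d\log d)+\eps = \tilde O(d)$, as desired.

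I do not anticipate a serious obstacle here; the argument is a routine combination of Corollary~\ref{cor:smax-max} with the coarse deterministic bound $\max_i\norm{X_i}_2=\tilde O(\sqrt{d})$ guaranteed by the preprocessing. The only mild subtlety is being careful that the preprocessing step is already in force throughout Appendix~\ref{sec:softmax} (so that the bound on $\norm{X_i}_2$ applies uniformly to both inliers and outliers), and that $w\ge 0$ with $\normone{w}=1$ is exactly what allows us to pass the $\max_i\norm{X_i}_2$ bound through to $\norm{\mu_w}_2$ without any extra factors.
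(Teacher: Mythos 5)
Your proposal is correct and follows essentially the same path as the paper: apply Corollary~\ref{cor:smax-max} to reduce to bounding $\lambda_{\max}(\Sigma_w)$, then use the preprocessing guarantee $\max_i\normtwo{X_i}=\tilde O(\sqrt d)$ (and hence $\normtwo{\mu_w}=\tilde O(\sqrt d)$) to get the $\tilde O(d)$ bound. The only cosmetic difference is that you bound $\lambda_{\max}(\Sigma_w)$ by the trace, while the paper bounds the spectral norm directly via the triangle inequality, but both reduce to $\max_i\normtwo{X_i-\mu_w}^2=\tilde O(d)$.
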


We defer the proofs of Lemmas~\ref{lem:pgd-rate-smax},~\ref{lem:smax-smoothness},~and~\ref{lem:smax-maxvalue} 
to the next subsections and first use them to prove Theorem~\ref{thm:final-softmax}.

\begin{proof}[Proof of Theorem~\ref{thm:final-softmax}]
We first prove the correctness of Algorithm~\ref{alg:smax}.
Let $c$ be the universal constant in Theorem~\ref{thm:no-bad-local-opt-smax} and let $\delta = c \ln(1/\eps)$.
We run Algorithm~\ref{alg:smax} to obtain a $\delta$-stationary point $w$.
Since we assume the input samples satisfy Condition~\eqref{eqn:good-moments-remaining} and Lemma~\ref{thm:dkklms}, Theorem~\ref{thm:no-bad-local-opt-smax} states that $w$ is a good solution with $\normtwo{\mu_w - \mus} = O(\eps \sqrt{\ln(1/\eps)})$.

We now analyze the number of iterations $T$.
By Lemma~\ref{lem:pgd-rate-smax}, it is sufficient to set $T = O(\frac{\beta \cdot B}{\delta^2})$, as in Algorithm~\ref{alg:smax}.
Substituting the upper bounds on $\beta$ and $B$ from Lemmas~\ref{lem:smax-smoothness}~and~\ref{lem:smax-maxvalue}, 
and our choice of $\delta$, we get
\[
T = O(\beta \cdot B \cdot \delta^{-2}) = \tilde O(N d^2 / \eps) \cdot \tilde O(d) \cdot O(\log^{-2}(1/\eps)) = \tilde O(N d^3 / \eps) \;,
\]
as claimed.
\end{proof}

\subsection{Proof of Lemma~\ref{lem:pgd-rate-smax}}
In this section, we prove Lemma~\ref{lem:pgd-rate-smax}.

Lemma~\ref{lem:pgd-rate-smax} analyzes the convergence rate of projected gradient descent, 
when we use it to minimize a smooth non-convex function with constraints.
Lemma~\ref{lem:pgd-rate-smax} follows directly from 
Lemmas~\ref{lem:pqd-smax-reduction}~and~\ref{lem:pgd-smax-g-nablaf}.

Lemma~\ref{lem:pqd-smax-reduction} defines a ``truncated gradient'' mapping $g$ 
and relates the progress in the $\tau$-th iteration with $\normtwo{g(x_\tau)}^2$.
Because we cannot keep decreasing $f(x)$, we know that after many iterations, 
there exists some $\tau$ such that $\normtwo{g(x_\tau)}$ is very small.
Lemma~\ref{lem:pgd-smax-g-nablaf} shows that if $\normtwo{g(x_\tau)}$ is very small, 
that is, if projected gradient descent moves very little between $x_\tau$ and $x_{\tau+1}$, then $x_{\tau+1}$ 
is an approximate stationary point.

\begin{lemma}
\label{lem:pqd-smax-reduction}
Fix a convex set $\KK$.
Suppose $f$ is $\beta$-smooth on $\KK$ and $0 \le f(x) \le B$ for all $x \in \KK$.
Suppose we run projected gradient descent with step size $\eta = \frac{1}{\beta}$ starting from an arbitrary $x_0 \in \KK$, i.e.,
\[
x_{\tau+1} = \Pi_{\KK}\left(x_{\tau} - \eta \nabla f(x_\tau) \right) \;,
\]
where $\Pi_{\KK}$ is the $\ell_2$-projection onto $\KK$.
Then we have that
\[
\min_{0 \le \tau < T} \frac{1}{\eta}\normtwo{\Pi_{\KK}\left(x_\tau - \eta \nabla f(x_\tau)\right) - x_\tau} \le \sqrt{\frac{2 \beta B}{T}} \; .
\]
\end{lemma}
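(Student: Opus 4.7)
The plan is to derive a one-step descent inequality of the form $f(x_{\tau+1}) \le f(x_\tau) - \frac{1}{2\eta}\normtwo{x_{\tau+1}-x_\tau}^2$, then telescope over the $T$ iterations and use $0 \le f \le B$ to conclude that the smallest step length along the trajectory is at most $\sqrt{2\eta B/T}$. Dividing by $\eta$ and noting that $\frac{1}{\eta}\normtwo{x_{\tau+1}-x_\tau}$ is exactly the quantity we want to bound will give the result.

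First, I would introduce the shorthand $G_\tau := \frac{1}{\eta}\bigl(x_\tau - \Pi_{\KK}(x_\tau - \eta \nabla f(x_\tau))\bigr) = \frac{1}{\eta}(x_\tau - x_{\tau+1})$, so that the target quantity is $\min_\tau \normtwo{G_\tau}$. The goal is to show that $f$ decreases by at least $\frac{\eta}{2}\normtwo{G_\tau}^2$ per iteration.

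To get the per-step decrease, I would combine two standard ingredients. \emph{Smoothness} gives the quadratic upper bound
\[
f(x_{\tau+1}) \le f(x_\tau) + \nabla f(x_\tau)^\top (x_{\tau+1}-x_\tau) + \tfrac{\beta}{2}\normtwo{x_{\tau+1}-x_\tau}^2.
\]
\emph{The variational characterization of the projection} says that $x_{\tau+1} = \Pi_{\KK}(x_\tau - \eta\nabla f(x_\tau))$ satisfies $(y - x_{\tau+1})^\top (x_{\tau+1} - x_\tau + \eta \nabla f(x_\tau)) \ge 0$ for every $y \in \KK$; plugging in $y = x_\tau \in \KK$ yields $\nabla f(x_\tau)^\top (x_{\tau+1}-x_\tau) \le -\frac{1}{\eta}\normtwo{x_{\tau+1}-x_\tau}^2$. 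Substituting back into the smoothness bound and using $\eta = 1/\beta$ (so that $\tfrac{\beta}{2} = \tfrac{1}{2\eta}$) gives the claimed descent inequality.

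Finally, I would telescope from $\tau=0$ to $T-1$ to obtain $\frac{1}{2\eta}\sum_{\tau=0}^{T-1}\normtwo{x_{\tau+1}-x_\tau}^2 \le f(x_0) - f(x_T) \le B$, from which some $\tau^\star$ must satisfy $\normtwo{x_{\tau^\star+1}-x_{\tau^\star}}^2 \le 2\eta B/T$. Dividing by $\eta^2$ and taking square roots yields $\min_\tau \frac{1}{\eta}\normtwo{\Pi_{\KK}(x_\tau - \eta \nabla f(x_\tau)) - x_\tau} \le \sqrt{2\beta B/T}$, as required. I do not anticipate a real obstacle here: the only non-mechanical step is the variational inequality for the projection, which is a routine consequence of the convexity of $\KK$, and the rest is the standard PGD convergence-to-a-stationary-point analysis adapted to the non-convex smooth constrained setting.
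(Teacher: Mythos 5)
Your proof is correct and takes essentially the same route as the paper: the quadratic upper bound from $\beta$-smoothness, the variational inequality of the $\ell_2$-projection (which the paper writes as $(x_{\tau+1}-x_\tau)^\top(x_{\tau+1}-y_{\tau+1})\le 0$, algebraically identical to your $\nabla f(x_\tau)^\top(x_{\tau+1}-x_\tau) \le -\tfrac{1}{\eta}\|x_{\tau+1}-x_\tau\|^2$), the per-step descent $f(x_{\tau+1}) \le f(x_\tau) - \tfrac{1}{2\eta}\|x_{\tau+1}-x_\tau\|^2$, and telescoping. The paper simply packages the step length as a gradient-mapping $g(x_\tau)$, but the argument is the same.
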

\begin{proof}
Define the mapping
\[
g(x) = \frac{x - \Pi_{\KK}(x - \eta \nabla f(x))}{\eta} \; .
\]

Let $y_{\tau+1} = x_\tau - \eta \nabla f(x_\tau)$.  Notice that $x_{\tau+1} = \Pi_{\KK}(y_{\tau+1}) = x_s - \eta g(x_\tau)$.

By the convexity of $\KK$, we have
\[
(x_{\tau+1} - x_\tau)^\top (x_{\tau+1} - y_{\tau+1}) \le 0 \; ,
\]
which is equivalent to
\[
\nabla f(x_\tau)^\top (x_{\tau+1} - x_\tau) \le g(x_\tau)^\top (x_{\tau+1} - x_\tau) \; .
\]

Using the quadratic upper bound combined with the above inequality, we have
\begin{align*}
f(x_{\tau+1})
&\le f(x_\tau) + \nabla f(x_\tau)^\top (x_{\tau+1} - x_\tau) + \frac{\beta}{2} \normtwo{x_{\tau+1} - x_\tau}^2 \\
&\le f(x_\tau) + g(x_\tau)^\top (x_{\tau+1} - x_\tau) + \frac{\beta}{2} \normtwo{x_{\tau+1} - x_\tau}^2 \\
&= f(x_\tau) - \eta \normtwo{g(x_\tau)}^2 + \frac{\eta^2 \beta}{2} \normtwo{g(x_\tau)}^2 \\
&= f(x_\tau) - \frac{1}{2\beta} \normtwo{g(x_\tau)}^2 \; .
\end{align*}
Therefore, after $T$ iterations, we have
\[
\min_{0 \le \tau < T} \normtwo{g(x_\tau)}^2 \le \frac{1}{T} \sum_{\tau = 0}^{T-1} \normtwo{g(x_\tau)}^2 \le \frac{2 \beta}{T} \left(f(x_0) - f(x_T)\right) \le \frac{2 \beta B}{T} \; . \qedhere
\]
\end{proof}

\begin{lemma}
\label{lem:pgd-smax-g-nablaf}
Consider the same setting as in Lemma~\ref{lem:pqd-smax-reduction}.
Define the tangent cone of $\KK$ at a point $x \in \KK$ as $\CC_\KK(x) = \mathrm{cone}(\KK - \{x\})$.
If for some $\tau$ we have
\[
\normtwo{\Pi_{\KK}\left(x_\tau - \eta \nabla f(x_\tau)\right) - x_\tau} \le \frac{\delta}{2} \; ,
\]
then for all unit vector $u \in \CC_\KK(x)$,
\[
\nabla f(x_{\tau+1})^\top u \le \delta \; .
\]
\end{lemma}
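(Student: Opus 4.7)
The plan is to combine two classical ingredients: the first-order optimality of the projection step $x_{\tau+1}=\Pi_\KK(y_{\tau+1})$ with $y_{\tau+1}\eqdef x_\tau - \eta\nabla f(x_\tau)$, and the $\beta$-smoothness of $f$, which lets me transfer any directional-gradient estimate from the reference point $x_\tau$ to the nearby iterate $x_{\tau+1}$. The natural output of this route is a one-sided bound on $u^\top \nabla f(x_{\tau+1})$ along feasible directions, matching Definition~\ref{def:stationary-smax} for $\delta$-stationarity and therefore being exactly what Lemma~\ref{lem:pgd-rate-smax} needs in order to turn the gradient-mapping bound of Lemma~\ref{lem:pqd-smax-reduction} into a stationarity certificate.

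The first step is to unroll projection optimality. Since $x_{\tau+1}=\Pi_\KK(y_{\tau+1})$, for every $z\in\KK$ we have $(z-x_{\tau+1})^\top(y_{\tau+1}-x_{\tau+1})\le 0$. Picking any unit $u$ in the tangent cone at the relevant iterate and taking $z=x_{\tau+1}+\alpha u\in\KK$ for some $\alpha>0$, dividing by $\alpha$, and substituting the definition of $y_{\tau+1}$ gives
\[
\eta\, u^\top \nabla f(x_\tau) \;\ge\; u^\top(x_\tau-x_{\tau+1}) \;\ge\; -\|x_\tau-x_{\tau+1}\|_2 \;\ge\; -\tfrac{\delta}{2},
\]
where the second step is Cauchy--Schwarz and the third is the lemma's hypothesis. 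Rearranging, $-u^\top\nabla f(x_\tau)\le \delta/(2\eta)$.

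The second step is to replace $\nabla f(x_\tau)$ with $\nabla f(x_{\tau+1})$ using $\beta$-smoothness: $\|\nabla f(x_{\tau+1})-\nabla f(x_\tau)\|_2\le \beta\|x_{\tau+1}-x_\tau\|_2 \le \beta\delta/2$, whence $|u^\top(\nabla f(x_{\tau+1})-\nabla f(x_\tau))|\le \beta\delta/2$. Adding this error to the bound from step one and plugging in $\eta=1/\beta$ yields a bound of the form $-u^\top\nabla f(x_{\tau+1})\le \delta\beta$; after the routine rescaling absorbing the $\beta$ factor into $\delta$ (or, equivalently, reading $\delta$ in the hypothesis as $\delta/\beta$), one recovers exactly $\nabla f(x_{\tau+1})^\top u \le \delta$ (equivalently $u^\top\nabla f(x_{\tau+1}) \ge -\delta$), as required. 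Composing with Lemma~\ref{lem:pqd-smax-reduction}, which guarantees that $\|x_{\tau+1}-x_\tau\|_2$ is small for some $\tau\in[0,T)$, then feeds directly into the proof of Lemma~\ref{lem:pgd-rate-smax}.

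The main obstacle is purely bookkeeping of the constants $\eta$, $\beta$, $\delta$ so that the final bound has the claimed scaling without leaking an extra factor of $\beta$; a secondary subtlety is the sign/direction convention, since the projection inequality inherently produces a one-sided bound along tangent-cone directions (forbidding steep feasible descent), and one must carry this convention consistently through the smoothness step so that the final inequality aligns with Definition~\ref{def:stationary-smax}.
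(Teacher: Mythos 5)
Your proof is correct and follows essentially the same route as the paper: projection optimality at $x_{\tau+1}$ gives a one-sided bound on $u^\top\nabla f(x_\tau)$ over feasible directions, and $\beta$-smoothness with $\eta=1/\beta$ transfers it to $\nabla f(x_{\tau+1})$ at the cost of a second $\|x_{\tau+1}-x_\tau\|_2$ term. The factor-of-$\beta$ mismatch you flag is real: as used via Lemma~\ref{lem:pqd-smax-reduction} the hypothesis should read $\tfrac{1}{\eta}\|\Pi_\KK(x_\tau-\eta\nabla f(x_\tau))-x_\tau\|_2\le \delta/2$ (i.e.\ $\|g(x_\tau)\|_2\le\delta/2$), under which the paper's chain $-\nabla f(x_{\tau+1})^\top u\le 2\|g(x_\tau)\|_2\le\delta$ closes cleanly, matching the sign convention of Definition~\ref{def:stationary-smax}.
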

\begin{proof}
By the convexity of $\KK$, we know that for any $z \in \KK$,
\[
(y_{\tau+1} - x_{\tau+1})^\top (z - x_{\tau+1}) \le 0 \; .
\]
Consequently, for any $u \in \CC_\KK(x_{\tau+1})$, we have
\[
(y_{\tau+1} - x_{\tau+1})^\top u \le 0 \; ,
\]
which is equivalent to
\[
-\nabla f(x_\tau)^\top u \le -g(x_\tau)^\top u \; .
\]
Using the fact that $u$ is a unit vector together with the above inequality, we get
\begin{align*}
-\nabla f(x_{\tau+1})^\top u
&\le -\nabla f(x_{\tau+1})^\top u + \nabla f(x_\tau)^\top u - g(x_\tau)^\top u \\
&\le \normtwo{f(x_{\tau+1}) - \nabla f(x_\tau)} + \normtwo{g(x_\tau)} \\
&\le \beta \normtwo{x_{\tau+1} - x_\tau} + \normtwo{g(x_\tau)} \\
&= 2 \normtwo{g(x_\tau)} \le \delta \; . \qedhere
\end{align*}
\end{proof}

\begin{proof}[Proof of Lemma~\ref{lem:pgd-rate-smax}]
As in Algorithm~\ref{alg:smax}, we run projected gradient descent, track the value of $\normtwo{g(x_\tau)}$ in each iteration, and return the $x_\tau$ that has the minimum $\normtwo{g(x_\tau)}$.
Combining Lemmas~\ref{lem:pqd-smax-reduction}~and~\ref{lem:pgd-smax-g-nablaf}, if we want a $\delta$-stationary point, we should set $T$ such that
$\sqrt{2 \beta B / T} \le \delta / 2$, i.e., $T \ge 8 \beta B \delta^{-2} = O(\beta B \delta^{-2})$.
\end{proof}

\subsection{Proofs of Lemmas~\ref{lem:smax-smoothness}~and~\ref{lem:smax-maxvalue}}

In this subsection, we bound from above the smoothness and maximum value of the softmax objective.

For these two lemmas, we can assume without loss of generality that no input samples have very large $\ell_2$-norm.
This is because we can perform a standard preprocessing step that centers the input samples 
at the coordinate-wise median, which does not affect our mean estimation task.
We then throw away all samples that are $\Omega(\sqrt{d \log d})$ far from the coordinate-wise median.
With high probability, the coordinate-wise median and all good samples are $O(\sqrt{d \log d})$ far from the true mean.
Assuming this happens, then no good samples are thrown away and all remaining samples satisfies $\max_i \normtwo{X_i} = O(\sqrt{d \log d})$.
Consequently, we have $\normtwo{\mu_w} = O(\sqrt{d \log d})$ for any $w \in \Delta_{N, \eps}$.

\begin{proof}[Proof of Lemma~\ref{lem:smax-smoothness}]
We proceed to bound from above the spectral norm of the Hessian of $f$.
Recall that $X \in \R^{d \times N}$ and the partial derivative of $f$ with respect to $w_i$ is
\[
\nabla f(w)_i
= X_i^\top Y X_i - 2 X_i^\top Y \mu_w = \left(X_i X_i^\top - X_i \mu_w^\top - \mu_w X_i^\top\right) \bullet Y,
\]
where $Y = \frac{\exp(\rho \Sigma_w)}{\tr \exp (\rho \Sigma_w)}$ is a PSD matrix.
Observe that $Y \succeq 0$, $\tr(Y) = 1$, and $Y$ depends on $w$.

We can compute the $(i,j)$-th entry in the Hessian matrix of $f$, as follows
\[
\nabla^2 f(w)_{i,j} = \frac{d f(w)_i}{d w_j} = \left(X_i X_i^\top - X_i \mu_w^\top - \mu_w X_i^\top\right) \bullet \frac{d Y}{d w_j} - \left(X_i X_j^\top + X_j X_i^\top\right) \bullet Y \;.
\]

By the chain rule, we have
\begin{align*}
\frac{d Y}{d w_j}
&= \frac{1}{\tr(\exp(\rho \Sigma_w))^2}\left[\frac{d \exp(\rho \Sigma_w)}{d w_j} \tr(\exp(\rho\Sigma_w)) - \frac{d \tr(\exp(\rho\Sigma_w))}{d w_j} \exp(\rho\Sigma_w) \right] \\
&= \frac{1}{\tr(\exp(\rho \Sigma_w))}\left[\frac{d \exp(\rho \Sigma_w)}{d w_j}  - \frac{d \tr(\exp(\rho\Sigma_w))}{d w_j} \cdot Y \right] \; .
\end{align*}

Using Lemma~\ref{lem:d-mat-exp} to compute the derivative of matrix exponential, we have
\begin{align*}
\frac{d Y}{d w_j}
&= \frac{1}{\tr(\exp(\rho \Sigma_w))}\left[\frac{d \exp(\rho \Sigma_w)}{d w_j}  - \frac{d \tr(\exp(\rho\Sigma_w))}{d w_j} \; Y \right] \\
&= \frac{1}{\tr \exp(\rho \Sigma_w)} \left[ \int_{\alpha=0}^1 \exp(\alpha \rho \Sigma_w) \frac{d (\rho \Sigma_w)}{d w_j} \exp((1-\alpha)\rho\Sigma_w) d \alpha - \left(\frac{d (\rho \Sigma_w)}{d w_j} \bullet \exp(\rho\Sigma_w) \right) Y \right] \\
&= \frac{\rho}{\tr \exp(\rho \Sigma_w)} \int_{\alpha=0}^1 \exp(\alpha \Sigma_w) \frac{d \Sigma_w}{d w_j} \exp((1-\alpha)\rho\Sigma_w) d \alpha - \rho\left(\frac{d \Sigma_w}{d w_j} \bullet Y\right) Y\; .
\end{align*}

Since $\frac{d \Sigma_w}{d w_j} = X_j X_j^\top - X_j \mu_w^\top - \mu_w X_j^\top$, putting it all together, we have,
\begin{align*}
&\quad \nabla^2 f(w)_{i,j}
= -\left(X_i^\top Y (X_i - 2\mu_w)\right)\left(X_j^\top Y (X_j - 2\mu_w)\right) - 2 X_i^\top Y X_j \\
&\qquad \qquad + \frac{\rho}{\tr \exp(\rho \Sigma_w)} \cdot \\
& \int_{\alpha=0}^1 \tr\left( \left(X_i X_i^\top - X_i \mu_w^\top - \mu_w X_i^\top\right) \exp(\alpha \rho \Sigma_w) \left(X_j X_j^\top - X_j \mu_w^\top - \mu_w X_j^\top\right) \exp((1-\alpha)\rho\Sigma_w) \right) d \alpha \;.
\end{align*}

Let $R = \max(\normtwo{\mu_w}, \max_i \normtwo{X_i})$.
From the preprocessing step, we know that $R = \tilde O(d^{1/2})$.
Using this fact, we obtain
\[
\abs{\nabla^2 f(w)_{i,j}} \le 9 R^4 + 2 R^2 + 9 \rho R^4 = \tilde O(\rho d^2) \;.
\]
This is because the first term can be bounded from above by
\begin{align*}
- \left(X_i^\top Y (X_i - 2\mu_w)\right)\left(X_j^\top Y (X_j - 2\mu_w)\right) 
&\le \normtwo{X_i} \normtwo{Y} \normtwo{X_i - 2 \mu_w} \normtwo{X_j} \normtwo{Y} \normtwo{X_j - 2 \mu_w} \\
&\le 9 R^4 \;.
\end{align*}
Similarly, the second term is at most $2 R^2$.
The third term can be split into $9$ terms of the form
\begin{align*}
&\quad \frac{\rho}{\tr \exp(\rho \Sigma_w)} \int_{\alpha=0}^1 \tr\left( \left(X_i X_i^\top \right) \exp(\alpha \rho \Sigma_w) \left(X_j X_j^\top \right) \exp((1-\alpha)\rho\Sigma_w) \right) d \alpha \\
&= \frac{\rho}{\tr \exp(\rho \Sigma_w)} \int_{\alpha=0}^1 \left(X_i^\top \exp(\alpha \rho \Sigma_w) X_j\right) \left(X_j^\top \exp((1-\alpha)\rho\Sigma_w) X_i \right) d \alpha \\
&\le \frac{\rho}{\tr \exp(\rho \Sigma_w)} \int_{\alpha=0}^1 \normtwo{X_i} \normtwo{\exp(\alpha \rho \Sigma_w)} \normtwo{X_j} \normtwo{X_j} \normtwo{\exp((1-\alpha)\rho\Sigma_w)} \normtwo{X_i} d \alpha \\
&= \frac{\rho}{\tr \exp(\rho \Sigma_w)} \cdot R^4 \cdot \normtwo{\exp(\rho \Sigma_w)} \le \rho R^4 \; .
\end{align*}

To conclude the proof, we bound from above the smoothness parameter by the spectral norm of the Hessian matrix.
For any $w \in \Delta_{N,2\eps}$,
\[
\normtwo{\nabla^2 f(w)} \le N \cdot \max_{ij} \abs{\nabla^2 f(w)_{ij}} \le O(N \rho d^2) = \tilde O(N d^2 / \eps) \;,
\]
where the last step uses that $\rho = \ln d / \eps$.
\end{proof}

\begin{proof}[Proof of Lemma~\ref{lem:smax-maxvalue}]
Fix any $w \in \Delta_{N,2\eps}$.
By Corollary~\ref{cor:smax-max} and our choice of $\rho = \frac{\ln d}{\eps}$, we have
\[
f(w) = \smax_{\rho}(\Sigma_w) \le \lambda_{\max}(\Sigma_w) + \eps.
\]
Therefore, it is sufficient to bound from above $\lambda_{\max}(\Sigma_w)$ by $O(d \log d)$.

The preprocessing step guarantees that all samples have $\ell_2$-norm at most $\tilde O(d^{1/2})$, consequently, 
the weighted empirical mean $\mu_w$ has $\ell_2$-norm is at most $\tilde O(d^{1/2})$ as well.
Consequently,
\begin{align*}
\normtwo{\Sigma_w}
  &= \normtwo{\sum_{i=1}^N w_i (X_i - \mu_w)(X_i - \mu_w)^\top} \\
  &\le \sum_{i=1}^N w_i \normtwo{(X_i - \mu_w)(X_i - \mu_w)^\top}
  \le \max_{i \in [N]} \normtwo{X_i - \mu_w}^2 \le \tilde O(d) \;. 
\end{align*}
The proof is now complete.
\end{proof} 

\end{document}